\documentclass[dvipsnames]{article}

\usepackage{iclr2026_conference,times}

\usepackage[utf8]{inputenc} % allow utf-8 input
\usepackage[T1]{fontenc}    % use 8-bit T1 fonts
\usepackage{url}            % simple URL typesetting
\usepackage{booktabs}       % professional-quality tables
\usepackage{amsfonts}       % blackboard math symbols
\usepackage{nicefrac}       % compact symbols for 1/2, etc.
\usepackage{microtype}      % microtypography
\usepackage{xcolor}         % colors
\newcommand{\citec}[1]{\hyperref[#1]{\citeauthor{#1}, \citeyear{#1}}}

\usepackage{caption}
\usepackage{subfigure} % subfigure
\usepackage{multirow}
\usepackage{makecell}
\usepackage{algorithm}
\usepackage[noend]{algpseudocode}
\usepackage{seqsplit}
\usepackage{minibox}
\usepackage{booktabs}       % professional-quality tables
\usepackage{amsfonts}       % blackboard math symbols
\usepackage{nicefrac}       % compact symbols for 1/2, etc.
\usepackage{amsmath}
\usepackage{amssymb}
\usepackage{enumitem} % leftmargin
\usepackage{wrapfig}
\usepackage{tabularx}
\usepackage{tikz}
\usetikzlibrary{shapes,backgrounds,arrows,fit,calc,patterns}
\usepackage{hyperref} % 거의 항상 마지막

\newcommand{\ie}{\emph{i.e.,}~}
\newcommand{\eg}{\emph{e.g.,}~}

\usepackage{pifont}
\newcommand{\para}[1]{\textbf{#1}}
\usepackage{amsmath} % http://ctan.org/pkg/amsmath
\usepackage{bm} % for bold symbols like \bm{\alpha} or {\bm \alpha\beta}
\usepackage{bbm} % for mathbbm
\usepackage{amsthm}
\usepackage{mathtools}

\newcommand{\ep}{\varepsilon}
\renewcommand{\epsilon}{\ep}

\makeatletter
\@ifundefined{proposition}{%
    
}{}
\@ifundefined{definition}{%
}{}
\@ifundefined{lemma}{%
    \newtheorem{lemma}{Lemma}
}{}
\@ifundefined{corollary}{%
    \newtheorem{corollary}{Corollary}
}{}
\@ifundefined{theorem}{%
    \newtheorem{theorem}{Theorem}
}{}
\@ifundefined{corollary}{%
    
}{}
\@ifundefined{assumption}{%
    
}{}
\@ifundefined{problem}{%
    
}{}
\@ifundefined{problem*}{%
    \newtheorem*{problem*}{Problem}
}{}
\@ifundefined{claim}{%
    
}{}
\@ifundefined{example}{%
    
}{}
\@ifundefined{implication}{%
    
}{}
\@ifundefined{remark*}{%
    \newtheorem*{remark*}{Remark}
}{}
\makeatother

\newtheorem{innercustomgeneric}{\customgenericname}
\providecommand{\customgenericname}{}
\newcommand{\newcustomtheorem}[2]{%
  \newenvironment{#1}[1]
  {%
   \renewcommand\customgenericname{#2}%
   \renewcommand\theinnercustomgeneric{##1}%
   \innercustomgeneric
  }
  {\endinnercustomgeneric}
}
\newcustomtheorem{customclaim}{Claim}

\providecommand{\naturalnum}				{\mathbb{N}}

\renewcommand{\(}						{\left(}
\renewcommand{\)}						{\right)}
\renewcommand{\[}						{\left[}

\providecommand{\Exp}{\mathbbm{E}}

\def\Ch{\hat{{C}}}

\def\Xs{\mathcal{{X}}}
\def\Ys{\mathcal{{Y}}}

\providecommand\mc{\textbf{MC}}
\providecommand\ineff{\textbf{Ineff}}

\title{Online Conformal Prediction with Adversarial Semi-Bandit Feedback via Regret Minimization}

\author{%
  Junyoung Yang$^*$
  \\
  CSE
  \\
  POSTECH
  \And
  Kyungmin Kim$^*$
  \\
  GSAI
  \\
  POSTECH
  \And
  Sangdon Park
  \\
  GSAI \& CSE
  \\
  POSTECH
}
\iclrfinalcopy % Uncomment for camera-ready version, but NOT for submission.
\begin{document}

\maketitle

\begin{NoHyper}
\def\thefootnote{*}\footnotetext{Equal contribution}
\end{NoHyper}

\newcommand{\Reg}{\textbf{Reg}}

\newcommand{\cc}{black}

\begin{abstract}

Uncertainty quantification is crucial in safety-critical systems, where decisions must be made under uncertainty. 
In particular, we consider the problem of online uncertainty quantification, 
where data points arrive sequentially. 
Online conformal prediction is a principled online uncertainty quantification method that dynamically constructs a prediction set at each time step.
While existing methods for online conformal prediction provide long-run coverage guarantees without any distributional assumptions, they typically assume a \textit{full feedback} setting in which the true label is always observed. 
In this paper, we propose a novel learning method for online conformal prediction with \textit{partial feedback} from an adaptive adversary—a more challenging setup where the true label is revealed only when it lies inside the constructed prediction set.
Specifically, we formulate online conformal prediction as an adversarial bandit problem by treating each candidate prediction set as an arm. 
Building on an existing algorithm for adversarial bandits, 
our method achieves a long-run coverage guarantee by explicitly establishing its connection to the regret of the learner.
%Furthermore, we extend an existing adversarial bandit method to leverage the properties of conformal prediction, resulting in a bandit method with a tighter regret bound. 
%
Finally, we empirically demonstrate the effectiveness of our method in both independent and identically distributed (i.i.d.) and non-i.i.d. settings, 
showing that it successfully controls the miscoverage rate while maintaining a reasonable size of the prediction set.

\end{abstract}

\section{Introduction}

%% Intro: Why conformal prediction, and what is it? 
Uncertainty quantification is essential in safety-critical domains such as autonomous driving \citep{lindemann2023safe}, healthcare \citep{lin2022conformal}, and finance \citep{park2025semiparametric}, where uncertainty-aware decision making is required.
Unlike point prediction methods that return the most likely outcome, conformal prediction \citep{vovk2005algorithmic} is a promising uncertainty quantification method that constructs a \textit{conformal set} for a given input, a set of outcomes that is guaranteed to contain the true label with a user-specified probability.
We refer to the guarantee as a \textit{coverage guarantee}.
Here, the size of the conformal set quantifies the uncertainty in terms of making a prediction.

%% Additional strength and exchangeability assumption for the coverage guarantee
Moreover, the coverage guarantee is model-agnostic in the sense that the guarantee holds irrespective of the choice of the prediction model.
Exchangeability assumption on the data generating process \citep{vovk2005algorithmic} is the only requirement for the guarantee,
where a typical independent and identically distributed (i.i.d.) scenario is the case that satisfies such assumption.
Specifically, under the exchangeability assumption, the coverage guarantee of the conformal set constructed from training samples holds for an unseen test sample \citep{vovk2013conditional}.
Therefore, since the coverage guarantee holds for any prediction models,
conformal prediction has been applied to complex and large-scale models such as large language models \citep{mohri2024language, cherian_large_2024, lee2024selective}.

%% Conformal prediction when exchangeability assumption is violated
However, the exchangeability assumption is easily violated under scenarios such as distribution shift, where the training and test distributions differ.
A number of conformal prediction methods have been proposed to provide coverage guarantees under such settings \citep{tibshirani2019conformal, podkopaev2021distribution, park2022pac, gendler2022adversarially, si2024pac}.
In contrast to the aforementioned batch conformal prediction methods which require a batch of samples for training, methods for online conformal prediction are proposed to tackle online uncertainty quantification problems, 
where data points arrive sequentially \citep{gibbs2021adaptive, bastani2022practical, angelopoulos2023conformalpid, angelopoulos2024online}.
Even in adversarial settings, where no distributional assumptions are made on the data stream or on the functional form of the scoring functions, 
these methods provide a long-run coverage guarantee such that the empirical coverage reaches the target level after sufficiently many time steps.

%% Online conformal prediction
Meanwhile, existing methods for online conformal prediction typically assume a \textit{full feedback} scenario, where the true label is revealed every time step.
Indeed, these methods are tailored to the full feedback setting, since they require a scoring function evaluated on the true label either for its quantile estimation or for the evaluation of the miscoverage loss over multiple conformal set candidates.
Recently, \cite{ge2025stochastic} proposed a method for online conformal prediction with \textit{partial feedback}, specifically feedback referred to as \textit{semi-bandit feedback}, where the true label is revealed only when it lies within the chosen conformal set.
While it is a more challenging learning setting compared to the full feedback scenario, their coverage guarantee holds only under i.i.d. data streams.
Figure~\ref{fig:different_feedback_scenarios} provides an illustration of different feedback settings in online conformal prediction.

%% Our problem set-up (what's the difference)
In this paper, we present the first study of online conformal prediction with adversarial partial feedback.
Specifically, by discretizing the continuous hypothesis space of thresholds that parameterize a conformal set, and then treating each candidate conformal set as an arm, we formulate the problem as a multi-armed adversarial bandit problem.
A bandit problem is a sequential game between a learner and an environment.
In each round, the learner chooses an arm, and the environment provides feedback on the chosen arm.
In particular, the adversarial bandit problem removes almost all assumptions about how the environment provides feedback, where the environment is often called the adversary accordingly.
The performance of the learner is evaluated based on the regret, which typically quantifies how well the learner performs with respect to the best arm in hindsight \citep{bubeck2012regret, lattimore2020bandit}.
There is a rich literature on adversarial bandit problems, devising algorithms with sublinear regret. 
$\texttt{EXP3.P}$ algorithm \cite{auer2002nonstochastic} is one of the algorithms that provides a high-probability sublinear regret even under the adaptive adversary, an adversary that can generate feedback based on the previous history.

%% Our method
Building on the $\texttt{EXP3.P}$ algorithm, we propose a novel algorithm for online conformal prediction with adversarial partial feedback, in which we consider a semi-bandit feedback scenario, similar to \cite{ge2025stochastic}.
Specifically, by devising a loss function tailored to conformal prediction, we explicitly establish a connection between the regret of the learner and a long-run coverage (Lemma~\ref{lemma:regret_to_coverage}), which in turn provides a long-run coverage guarantee of our algorithm.
We further improve the performance in terms of the speed approaching the target coverage, by fully exploiting the monotonicity property of the miscoverage loss with respect to the threshold parameterizing a conformal set.
Specifically, it enables partial inference of feedback from candidate conformal sets that are not chosen, even when the true label is unavailable.
We empirically demonstrate the efficacy of our method on both classification and regression tasks, 
conducting experiments in i.i.d. and non-i.i.d. settings for each task. 
In particular, we show that our method approaches long-run coverage while maintaining a moderate average conformal set size,
achieving performance comparable to \cite{bastani2022practical}, an online conformal prediction method with adversarial full feedback. 
\begin{figure}[t]
     \centering
     \includegraphics[width=1.0\linewidth]{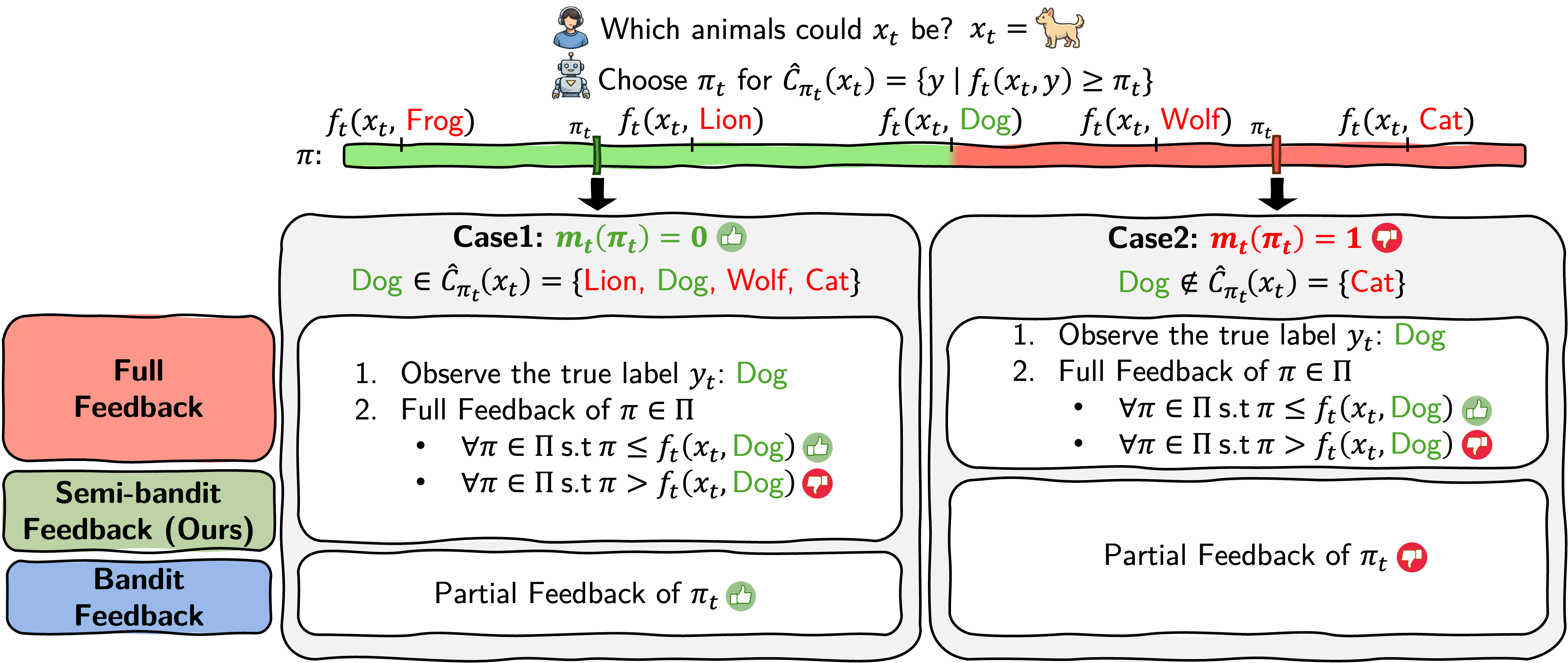}
     
     \caption{Online conformal prediction under different feedback scenarios characterized by the amount of feedback received at each time step}
     \vspace{-2em}
     \label{fig:different_feedback_scenarios}
\end{figure}

\section{Related Work}\label{sec:rel}

\subsection{Online Conformal Prediction}
% Standard conformal prediction methods return a prediction set \citep{vovk2005algorithmic}, a subset of the output space, for uncertainty quantification.
% Often referred to as a conformal set, it is constructed using a batch of training samples and provides a coverage guarantee under the exchangeability assumption.
% Since the coverage guarantee holds for arbitrary prediction models,
% conformal prediction has been applied to complex and large-scale models such as large language models (ref.).
% However, although weaker than the i.i.d. assumption, the exchangeability assumption is often violated in non-stochastic settings such as distribution shift, where training and test data distributions differ.
% Consequently, conformal prediction methods tailored to specific types of distribution shift, such as covariate shift (ref.) or label shift (ref.), have been proposed to ensure coverage guarantees under such conditions.

\cite{gibbs2021adaptive} first proposed an online conformal prediction method for arbitrary data streams.
Based on online gradient descent, their method provides a long-run coverage guarantee over arbitrary sequences.
While the method relies only on a single step size parameter, the optimal parameter requires knowledge of the degree of distribution shift, which is an unrealistic assumption.
The same authors have resolved the issue by aggregating results from multiple experts, running in parallel with different step sizes, making the method adaptive to the type of distribution shift in a data-driven manner \cite{gibbs2024conformaljmlr}.
While providing a biased result in terms of the long-run coverage, they provide a local coverage guarantee over all time intervals of a given width, under mild assumptions on the smoothness of the distribution of the scoring function and its quantile estimates.
Building on the strongly adaptive online learning method, \cite{pmlr-v202-bhatnagar23a} further improved the method by providing a simultaneous coverage guarantee over all local intervals of arbitrary window size.
Unlike \cite{gibbs2024conformaljmlr}, they considered a dynamic set of experts, where each expert is active only for a specific period of time.
Inspired by control theory, \cite{angelopoulos2023conformalpid} extended existing online gradient descent-based methods by incorporating online gradient descent steps, which they refer to as quantile tracking, as one of the components for the online quantile update.

Besides, there have been works to provide stronger theoretical guarantees. 
\cite{bastani2022practical} proposed a method with a threshold-calibrated multivalid coverage guarantee, a group- and threshold-conditional coverage guarantee where a set of groups can be arbitrarily defined.
\cite{angelopoulos2024online} proposed a simple online gradient-descent method that has simultaneous guarantees both on the adversarial and i.i.d. settings.
Recently, \cite{zhang2025online} devised an online conformal prediction algorithm, providing both privacy and coverage guarantees under arbitrary data streams.

In this paper, we also consider online conformal prediction under an arbitrary data stream.
Specifically, we consider an adaptive adversary that can generate data based on the learner's past actions. 

\subsection{Online Conformal Prediction with Partial Feedback}
Existing methods for online conformal prediction with adversarial feedback typically assume the full feedback setting, where the true label is revealed at every time step.
One exceptional case is \cite{angelopoulos2024online}, where the algorithm itself only requires the feedback on whether a chosen conformal set contains the true label.
However, the authors are basically considering a full feedback scenario, and some of their theoretical results assume a problem setup where the scoring function is trained online, using the labeled data pairs from previous time steps.

On the other hand, there have been few papers addressing online conformal prediction with partial feedback, a scenario where access to the true label is limited.
\cite{wang2024efficient} considered a bandit feedback scenario, where the true label is observed only when the predicted label corresponds to the true label.
Recently, \cite{ge2025stochastic} proposed a method under a semi-bandit feedback scheme, a less rigid partial feedback scenario where the true label is revealed as the true label lies within a chosen conformal set.
Although partial feedback is inherently more challenging than full feedback, prior works still rely on the i.i.d. data-generating assumption, which restricts their applicability to real-world, non-i.i.d. data streams.

As such, we consider an online conformal prediction with adversarial partial feedback, where data streams deviate from the i.i.d. process and at the same time the true label is difficult to obtain.

\section{Online Conformal Prediction with Adversarial Feedback} \label{sec:problem}

We consider online conformal prediction with adversarial partial feedback. 
Let 
$\Xs$ be a set of examples and
$\Ys$ be a set of labels.
% $\texttt{NEG}$ be a special label, representing negative feedback.
%% conformal set
At each time step $t \in [T]$, a learner chooses a conformal set $\Ch_{\pi_{t}}: \Xs \to 2^\Ys$, which is parameterized by the threshold parameter $\pi_{t} \in [0, 1]$ as follows: 
\begin{equation}
    \Ch_{\pi_t}(x) \coloneqq \{ \tilde{y} \in \Ys \mid f_t(x, \tilde{y}) \ge \pi_{t} \}.
    \label{eq:singlethres_conformal}
\end{equation}
Here,
$f_t: \Xs \times \Ys \to [0,1]$ is a scoring function that measures the conformity of a label for a given input.
Note that the functional form of $f_t(\cdot, \cdot)$ may evolve over time.

%% interaction
% In conformal set learning, we consider the following standard learning protocol from adversarial bandits. 
Specifically, we consider the following learning protocol:
an example $(x_t, y_t) \in \mathcal{X} \times \mathcal{Y}$ is chosen by an adversary, where only the input $x_t$ is revealed to a learner.
Here, we assume that the adversary can be adaptive, in the sense that it may select a sample based on the learner's previous decisions.
Once $x_t$ is given, the learner outputs a conformal set $\hat{C}_{\pi_t}(x_t)$. The threshold parameter $\pi_t$ is selected according to the learner's current stochastic strategy, which is updated online based on past interactions with the adversary.
Then, the learner receives a feedback from the adversary on whether $\hat{C}_{\pi_t}(x_t)$ contains the true label $y_t$, which we denote as $m_t(\pi_t) \coloneqq \mathbbm{1}(y_t \notin \hat{C}_{\pi_t}(x_t))$.
We refer to the term as \emph{miscoverage} henceforth.

%% semi-bandit feedback scenario
Here, we consider a \emph{semi-bandit feedback} scenario \citep{ge2025stochastic}, 
one of the partial feedback settings where a true label $y_t$ is additionally revealed when $m_t(\pi_t)=0$.
In other words, each instance $x_t$ has an associated true label $y_t$, 
which exists but is not labeled at the time the adversary chooses the sample.
Having an access to $y_t$ enables the learner to evaluate the miscoverage $m_t(\pi)$ for all $\pi \in [0, 1]$, since the scoring function $f_t(x_t, y_t)$ of the true label---a quantity sufficient to evaluate the miscoverage of a threshold-parameterized conformal set---can be computed.
Although we have coined the term partial feedback, in contrast to full feedback, to encompass both bandit and semi-bandit feedback settings, we will use it to refer exclusively to the semi-bandit feedback scenario in the following sections for simplicity.

Under such online conformal prediction problem with adversarial partial feedback, our goal is to design a learner that provides a long-run coverage guarantee by controlling the miscoverage rate defined as follows:
\begin{equation}
    %\mc_T \coloneqq \frac{1}{T} \sum_{t=1}^{T} \mathbbm{1} ( y_t \notin \hat{C}_\pi(x_t) )
    \mc(T) \coloneqq \frac{1}{T} \sum_{t=1}^{T} m_t(\pi_t),
    \label{eq:miscoverage}
\end{equation}
where $T$ is the time horizon.
Specifically, given a target miscoverage level $\alpha \in (0, 0.5)$, 
we aim to upper bound the miscoverage rate as $\mc(T) \leq \alpha + \epsilon(T)$ such that $\epsilon(T) \rightarrow 0$ as $T \rightarrow \infty$.
Additionally, we aim to minimize the average size of constructed conformal sets at the same time, which is defined as 
\begin{equation}
    \ineff(T)
    \coloneqq
    \frac{1}{T}
    \sum_{t=1}^T |\hat{C}_{\pi_t}(x_t)|,
    \label{eq:inefficiency}
\end{equation}
since a learner that always returns the vacuous conformal set $\mathcal{Y}$ by choosing $\pi_t = 0$ for all $t \in [T]$ trivially achieves the target miscoverage level.

\section{Online Conformal Prediction as Adversarial Bandit Problem}
\label{sec:method}

% overview
We formulate the online conformal prediction problem with adversarial partial feedback as a multi-armed adversarial bandit problem, by treating each candidate conformal set as an arm (Section~\ref{sec:reformulation}).
Defining a finely discretized subset $\Pi$ of the continuous hypothesis space $[0,1]$ as an action space,
we improve the $\texttt{EXP3.P}$ algorithm \citep{auer2002nonstochastic},
which is
an algorithm that provides a high-probability sublinear regret under adversarial bandit environments, for conformal prediction.

To this end, we first design a loss function tailored to conformal prediction (Section~\ref{sec:loss}),
which in turn provides an explicit learner-agnostic relationship between a regret from a learner and its miscoverage rate $\mc(T)$ (Section~\ref{sec:conversion}).
This relationship ensures the long-run coverage to achieve the target level $1 - \alpha$, for any learner that achieves a sublinear regret. 
However, directly applying an existing learner, \eg $\texttt{EXP3.P}$, does not make full use of the available information under the semi-bandit feedback setting.
% \eg we can evaluate $m_t(\pi)$ for all $\pi \in \Pi$ when $m_t(\pi_t) = 0$.
Therefore, we further improve the algorithm by fully exploiting such additional information and the monotonicity property of a threshold-parameterized conformal set with respect to the miscoverage (Section~\ref{sec:ouralg}).

\subsection{Problem Re-formulation}
\label{sec:reformulation}
Discretization of the hypothesis space $[0, 1]$ into $\Pi$ allows us to reformulate online conformal prediction with adversarial feedback (Section~\ref{sec:problem}) as a multi-armed adversarial bandit problem. 
Each arm corresponds to a threshold parameter $\pi \in \Pi$ that specifies a threshold-parameterized conformal set $\hat{C}_\pi$.
Specifically, for each time step $t$, a learner chooses a threshold parameter $\pi_t$ from $K$ possible candidates, where $K \coloneqq | \Pi |$ is defined as the discretization level of the original hypothesis space $[0, 1]$.
As we consider a partial feedback scenario, we observe the true label $y_t \in \mathcal{Y}$ in addition to the miscoverage feedback $m_t(\pi_t)$ when $m_t(\pi_t) = 0$.
This reformulation enables us to leverage existing learners under multi-armed adversarial bandit environments,
where each learner's performance is measured by its regret relative to the best arm in hindsight, defined as 
\begin{equation}
    \Reg(T) 
    \coloneqq 
    \sum_{t=1}^T \ell_t(\pi_t)
    -
    \min_{\pi \in \Pi}\sum_{t=1}^T \ell_t(\pi).
    \label{eq:regret_definition}
\end{equation}
\begin{wraptable}{r}{9.6cm}
    \vspace{-3ex}
    \caption{Comparison of online conformal prediction with adversarial feedback and typical multi-armed adversarial bandit problems}
    \vspace{-2ex}
    \label{tab:connectionoftwoproblems}
    \centering
    \begin{tabular}{c||c|c}
    \toprule
    \textbf{Problem}
    &
    \makecell{Online Conformal Prediction \\ with Adversarial Feedback}
    &
    \makecell{Multi-armed \\ Adversarial Bandit }
    \\
    % \midrule
    % Option
    % &
    % Conformal set parameter: $\pi_t$
    % &
    % Arm: $\pi_t$
    % \\
    \midrule
    \makecell{\textbf{Adversary} \\ \textbf{Controls}}
    &
    \makecell{Sample: $(x_t, y_t)$, \\ where $y_t$ is unlabeled}
    &
    \makecell{Loss Vector: \\ $(\ell_t(\pi))_{\pi \in \Pi}$}
    \\
    \midrule
    \textbf{Feedback}
    &
    \makecell{Miscoverage: $m_t(\pi_t)$ \\
              True Label: $y_t$ if $m_t(\pi_t) = 0$}
    &
    Loss: $\ell_t(\pi_t)$
    \\
    \midrule
    \makecell{\textbf{Loss} \\ \textbf{Defined by}}
    &
    Learner and Adversary
    &
    Adversary
    \\
    \midrule
    \textbf{Objective}
    &
    \makecell{
    Miscoverage Rate: $\mc(T)$
    \\
    Inefficiency: $\ineff(T)$}
    &
    Regret: $\Reg(T)$
    \\
    \bottomrule
    \end{tabular}
    \vspace{-3ex}
\end{wraptable}
Here, $
    {\mathbf{\ell}}_t
    \coloneqq
    \left( 
        \ell_t(\pi)
    \right)_{\pi \in \Pi}
    \in [\ell_{\text{min}}, \ell_{\text{max}}]^K
$
is a loss vector chosen by an adversary at time $t$, with each entry lying in the bounded interval $[\ell_{\text{min}}, \ell_{\text{max}}]$.
% \SP{what's the point of the following sentences? KM: There are two points that I'm trying to emphasize. First is to compare the typical adversarial bandit set-up and ours. Second is to clarify that we are considering the bounded loss functions. In the previous draft, the boundedness of loss functions were repetitive to me both in the main body and in the appendix.}
Note that the online conformal prediction problem with adversarial feedback \citep{gibbs2021adaptive, gibbs2024conformaljmlr, angelopoulos2024online} assumes that the environment controls the choice of $(x_t, y_t)$ instead of the loss vector $\ell_t$, where $m_t(\pi)$ is fully determined by $(x_t, y_t)$ for all $\pi \in \Pi$.
Unlike the typical adversarial bandit problem, the learner has control over the design of the loss function $\ell_t: \Pi \mapsto [\ell_{\text{min}}, \ell_{\text{max}}]$.
Nevertheless, while the adversary does not directly control the loss sequence, the adaptive adversary we consider can choose $(x_t, y_t)$ based on the previous interaction with the learner.
See Table~\ref{tab:connectionoftwoproblems} for details on the problem reformulation and the following section for details on the loss design.

\subsection{Design of the Loss Function}
\label{sec:loss}

In online conformal prediction with adversarial feedback,
the only feedback we observe at each time step $t$ is on the miscoverage $m_t(\pi_t)$ of the chosen threshold $\pi_t$.
However, additional feedback on the size of a conformal set is necessary to keep the constructed set compact and informative, since a lazy learner that always returns the label space $\mathcal{Y}$ as a conformal set by letting $\pi_t=0$ for all $t \in [T]$ achieves $\mc(T) = 0$.
Therefore, we define a loss function composed of a miscoverage term and an inefficiency term, capturing the miscoverage and the size of a conformal set, respectively, as follows:
\begin{equation}
\label{eq:convertedloss}
    {\color{\cc}
    \ell_t(\pi; \alpha)
    \coloneqq
    d_t(\pi; \alpha)
    +
    a_t(\pi),
    }
\end{equation}
where $\alpha \in (0, 0.5)$ is a target miscoverage level.
{\color{\cc}
The \textit{miscoverage term} $
    d_t(\pi; \alpha)
$}
is defined in terms of the miscoverage feedback $m_t(\pi)$ as
{\color{\cc}\begin{equation}
\label{eq:miscoverage_term}
\begin{aligned}
    d_t(\pi; \alpha) 
    &\coloneqq 
    m_t(\pi)
    +
    \mathbbm{1}(m_t(\pi) = 0) g(\alpha)
    +
    \mathbbm{1}(m_t(\pi) = 1) h(\alpha)
    \\
    &= 
    m_t(\pi)
    +
    \mathbbm{1}(m_t(\pi) = 0)
    (\alpha + \alpha ( 1 - \alpha ))
    +
    \mathbbm{1}(m_t(\pi) = 1)
    (- \alpha ( 1 - \alpha )).
\end{aligned}
\end{equation}
Here, $g(\alpha) > 0$ and $h(\alpha) < 0$ jointly control the miscoverage term in an $\alpha$-adaptive manner.
Taking the miscoverage control as a primary objective, the miscoverage term satisfies $
    d_t (\pi_1; \alpha) 
    \leq 
    d_t (\pi_2; \alpha)
$
for all $\pi_1 \in \Pi_t^\ast$ and $\pi_2 \in (\Pi_t^\ast)^c$, where $
    \Pi_t^\ast
    \coloneqq 
    \{
        \pi \in \Pi
        :
        m_t(\pi) = 0
    \}.
$
In particular, $g(\alpha)$ and $h(\alpha)$ are designed such that $
    d_t(\pi_2; \alpha) - d_t(\pi_1; \alpha)
    =
    (1 + h(\alpha)) - g(\alpha)
$
decreases to $0$ as $\alpha$ increases,
prioritizing the miscoverage for smaller $\alpha$.}
We henceforth write $d_t(\pi)$ for simplicity, indicating the dependence on $\alpha$ only when needed.
Additionally, we consider the \textit{inefficiency term} $a_t(\pi)$ of an arbitrary functional form, as long as it penalizes large conformal sets.
\subsection{From Regret Bounds to Coverage Guarantees}
\label{sec:conversion}

By reformulating online conformal prediction with adversarial feedback as a multi-armed adversarial bandit problem, we reduce our problem to minimizing regret with respect to the loss defined in Eq.~\ref{eq:convertedloss}.
However, while both the miscoverage and the size of a conformal set are incorporated into the loss design (Section~\ref{sec:loss}), minimizing regret (Eq.~\ref{eq:regret_definition}) does not necessarily guarantee control of the miscoverage rate (Eq.~\ref{eq:miscoverage}) at the target level $\alpha$, which is our primary objective.

{\color{\cc}Given a target miscoverage level $\alpha \in (0, 0.5)$, we therefore specify the condition} that yields a learner-agnostic guarantee that bounds the deviation of the miscoverage rate $\mc (T)$ from $\alpha$ in terms of the per-round regret $\frac{1}{T}\Reg(T)$ (Lemma~\ref{lemma:regret_to_coverage}),
inspired by prior work that derives analogous regret-based bounds for false discovery rate control in selective generation \citep{lee2025regretperspectiveonlineselective}.
The proof is deferred to Appendix \ref{sec:proof:lemma:regret_to_coverage}.

%Having established a regret bound with respect to the loss (Eq.~\ref{eq:lossftn:exp3.p}), we now explain how the bound leads to the approximate guarantee on the target coverage. 
% Recall that the loss function incorporates both uncertainty minimization and a coverage deviation term. 
% In particular, the coverage deviation $d_t(\pi, \alpha)$ directly measures the error between the achieved and desired coverage levels at each round.

% Here, we connect the miscoverage rate in online conformal prediction with the regret notion in adversarial bandits, inspired by the conversion idea in selective generation \citep{lee2025regretperspectiveonlineselective}. 
% Using the loss $\ell_t(\pi,c)$ from Section~\ref{sec:loss}, we show that a bound on the regret with respect to $\{\ell_t(\cdot,c)\}_{t=1}^T$ yields an explicit upper bound on the empirical miscoverage rate in terms of the target level $\alpha$.
% This connection between regret and coverage is formalized in the following lemma.
% Therefore, minimizing the cumulative loss also minimizes the cumulative coverage deviation.
% As a result, a small cumulative regret implies that the empirical miscoverage rate remains close to the target level $\alpha$.
% We formalize this relationship in the following lemma.

{\color{\cc}\begin{lemma}
\label{lemma:regret_to_coverage}
    For any $
        T \in \naturalnum, 
        \alpha \in (0, 0.5),
    $
    under the design choice that $a_t(\pi)$ is of order $o(T)^{-1}$ for all $\pi \in \Pi$, 
    any learner incurring regret $\Reg(T)$ with respect to the loss defined in Eq.~\ref{eq:convertedloss} satisfies
    \begin{equation*}
        \mc(T) - \alpha 
        \le 
        \frac{1}{T} \Reg\left(T\right) 
        +  
        C_{\mc}(T),
    \end{equation*}
    where $C_{\mc}(T)$ is a constant term of order $o(T)^{-1}$.
\end{lemma}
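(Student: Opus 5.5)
The plan is to use the explicit form of the miscoverage term $d_t$ to turn the learner's cumulative loss into an affine function of $T\,\mc(T)$. We then bound the comparator term in $\Reg(T)$ using a single arm that never miscovers. Since $m_t(\pi)\in\{0,1\}$, Eq.~\ref{eq:miscoverage_term} collapses to an affine function of $m_t(\pi)$:
\begin{equation*}
    d_t(\pi;\alpha) = \alpha(2-\alpha) + c_\alpha\, m_t(\pi), \qquad c_\alpha \coloneqq 1-\alpha-2\alpha(1-\alpha) = (1-\alpha)(1-2\alpha).
\end{equation*}
Here $c_\alpha>0$ precisely because $\alpha\in(0,0.5)$. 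Summing over $t$ at the played arms gives $\sum_{t} d_t(\pi_t) = T\alpha(2-\alpha) + c_\alpha T\,\mc(T)$.

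Next we write $\sum_t \ell_t(\pi_t) = \Reg(T) + \min_{\pi\in\Pi}\sum_t \ell_t(\pi)$ and upper bound the minimum by the loss of the arm $\pi=0$. We assume, as a harmless design choice of the discretization, that $0\in\Pi$. Because $f_t\ge 0$, we have $\Ch_0(x)=\Ys$, so $m_t(0)=0$ for every $t$ regardless of the adaptive adversary. Hence $\sum_t \ell_t(0) = T\alpha(2-\alpha) + \sum_t a_t(0)$. Combining the two displays, the $T\alpha(2-\alpha)$ terms cancel and we obtain
\begin{equation*}
    c_\alpha\, \mc(T) \le \frac{1}{T}\Reg(T) + \frac{1}{T}\sum_{t=1}^T \bigl(a_t(0) - a_t(\pi_t)\bigr).
\end{equation*}
The hypothesis that $a_t(\pi)=o(T)^{-1}$ uniformly in $\pi$ makes the last average $o(T)^{-1}$. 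Any sign of $a_t$ is fine here, since we only need the difference to be small. This step is purely deterministic and pathwise, so it holds against an adaptive adversary with no probabilistic argument.

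The main obstacle is matching the exact form of the statement: $\mc(T)-\alpha$ on the left with coefficient $1$ on $\frac1T\Reg(T)$. The argument above naturally yields $\mc(T)\le c_\alpha^{-1}\bigl(\frac1T\Reg(T)+o(T)^{-1}\bigr)$, which gives the target bound whenever $(c_\alpha^{-1}-1)\frac1T\Reg(T)\le \alpha$. My first attempt would be to exploit the slack $\alpha$ on the left. For the regime in which the per-round regret is small (all regimes of interest, since the lemma is only meaningful for sublinear regret), this inequality holds and we conclude. For large regret I would check whether a sharper comparator than $\pi=0$ closes the gap, for instance the arm minimizing $\sum_t m_t(\pi)$ subject to an $\alpha$ budget. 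If neither route works cleanly, I would state the lemma with the constant $c_\alpha^{-1}=\bigl((1-\alpha)(1-2\alpha)\bigr)^{-1}$ multiplying $\frac1T\Reg(T)$ and $C_{\mc}(T)$. This leaves the qualitative conclusion unchanged: sublinear regret implies long-run coverage.
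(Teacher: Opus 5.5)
Your first step is the same as the paper's. The paper also lower-bounds $\Reg(T)$ with the comparator $\pi=0$ and gets $\Reg(T)\ge c_\alpha N_1-TC_1$. Here $N_j\coloneqq\sum_{t=1}^T\mathbbm{1}(m_t(\pi_t)=j)$ and $C_1\coloneqq\frac{1}{T}\sum_{t=1}^T(a_t(0)-a_t(\pi_t))$, so this is exactly your inequality $c_\alpha\,\mc(T)\le\frac1T\Reg(T)+C_1$.

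The gap is the coefficient-one form at large regret, and the repair you suggest there cannot work. For every $\pi$, $\sum_t\ell_t(\pi)=T\alpha(2-\alpha)+c_\alpha\sum_t m_t(\pi)+\sum_t a_t(\pi)\ge T\alpha(2-\alpha)+\min_{\pi'}\sum_t a_t(\pi')$. So $\pi=0$ is already optimal up to $o(T)$, and $\frac1T\Reg(T)=c_\alpha\,\mc(T)+o(1)$; no other comparator changes anything.

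Consequently, if $C_{\mc}(T)$ must vanish and may not depend on the learner's miscoverage, the stated inequality is, up to $o(1)$, equivalent to $\mc(T)\le 1/(3-2\alpha)$. Beyond that it is false. Take a learner that always plays $\max\Pi$ against data whose true-label scores all lie below $\max\Pi$. It has $\mc(T)=1$ and $\frac1T\Reg(T)=c_\alpha+o(1)$, so $\mc(T)-\alpha-\frac1T\Reg(T)=2\alpha(1-\alpha)+o(1)$. Your small-regret condition $(c_\alpha^{-1}-1)\frac1T\Reg(T)\le\alpha$ is precisely this threshold, so your conditional argument is already sharp.

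The paper reaches coefficient one only through a definitional device. You could reproduce it only by adopting its implicit assumption. The steps are:
\begin{itemize}
\item It writes $c_\alpha N_1=(N_1-\alpha T)+\alpha N_0-2\alpha(1-\alpha)N_1$.
\item It defines $C_{\text{gap}}(T)$ by $C_{\text{gap}}(T)\,o(T)^{-1}=\frac{1-\alpha}{\alpha}-\frac{N_0}{N_1}$.
\item It discards the nonnegative term $N_1(1-\alpha)(1-2\alpha)=c_\alpha N_1$, which is the entire informative part of your bound.
\item This leaves $C_{\mc}(T)=C_1+\alpha\frac{N_1}{T}C_{\text{gap}}(T)\,o(T)^{-1}$.
\end{itemize}

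However, that second term equals $\frac{(1-\alpha)N_1-\alpha N_0}{T}=\mc(T)-\alpha$ identically. So what is actually proved is only $0\le\frac1T\Reg(T)+C_1$. The assertion that $C_{\mc}(T)$ is $o(T)^{-1}$ is equivalent to assuming $\mc(T)-\alpha=o(1)$, which the paper supports only empirically.

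Your fallback $\mc(T)\le c_\alpha^{-1}\bigl(\frac1T\Reg(T)+C_1\bigr)$ is therefore the rigorous content here. It also gives a stronger qualitative conclusion than the paper needs: sublinear regret forces $\mc(T)\to 0\le\alpha$.
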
}
{\color{\cc}
Specifically, $
    C_{\mc}(T)
    \coloneqq 
    \frac{T a_t(0) - \sum_{t=1}^T a_t(\pi_t)}{T} 
    +
    \frac{\alpha C_{\text{gap}}(T) \sum_{t=1}^T \mathbbm{1}(m_t(\pi_t) = 1)}{T} o(T)^{-1},
$
where $
    C_{\text{gap}}(T)
$ is a constant that characterizes the tightness of the learner that satisfies the relationship $
    C_{\text{gap}}(T) o(T)^{-1}
    =
    \frac{1 - \alpha}{\alpha}
    -
    \frac{
        \sum_{t=1}^T 
        \mathbbm{1} ( m_t(\pi_t) = 0 )
    }{
        \sum_{t=1}^T 
        \mathbbm{1} ( m_t(\pi_t) = 1 )
    }.
$
% {\color{red}[KM: If time permits, planning to add empirical analysis on $C_{\text{gap}}$]}
% Note that the choice of $\lambda_a$ is valid as long as $\lambda_a = o(\lambda_d)$.
% Roughly speaking, with an appropriate choice of {\color{\cc}$c$, $\lambda_d$, and $\lambda_a$}, any learner with sublinear regret achieves long-run coverage at least $1-\alpha$.
Note that the empirical analysis reveals that $C_{\text{gap}}(T)$ decreases as $T$ increases (Appendix~\ref{sec:exp:offset_analysis}).
Roughly speaking, any learner with sublinear regret achieves long-run coverage at least $1-\alpha$.
For example,
we can employ the \texttt{EXP3.P} algorithm \citep{auer2002nonstochastic} which provides a high-probability sublinear regret even under an adaptive adversary.
}
\subsection{Tailoring \texttt{EXP3.P} for Online Conformal Prediction}
\label{sec:ouralg}
We propose modifications to the \texttt{EXP3.P} algorithm tailored to online conformal prediction with adversarial partial feedback.
First, we consider the following inefficiency term $a_t(\pi; \alpha, c) \in [0, 1]$ for all baselines:
{\color{\cc}
\begin{equation}
    \scriptstyle
    \begin{aligned}
        &a_t(\pi; \alpha, c)
        \coloneqq
        \\
        &\qquad\mathbbm{1}( m_t(\pi) = 0 )
        \left(
            - c\alpha \left(1 + \frac{\pi^2}{1-\alpha}\right) o(T)^{-1}
        \right) 
        +
        \mathbbm{1}( m_t(\pi) = 1 )
        \left( 
            -\frac{(c\alpha) o(T)^{-1}}{1 + \alpha(1 - 2\alpha)\pi}
        \right),
    \end{aligned}
    \label{eq:method:inefficiency_term}
\end{equation}
where $c > 0$.
Here, $c$ balances the trade-off between miscoverage and inefficiency terms.
We henceforth write $a_t(\pi)$ for simplicity, indicating the dependence on $\alpha$ and $c$ only when needed.
See Appendix~\ref{app:exp-details} for details on the choice of $c$.}
% \SP{add a guideline for c/d}
% }
% {\color{\cc} Under this design, we set $\lambda_d (\alpha c) + \lambda_a \leq \lambda_d(1 - \alpha c)$ to ensure that $
%     \ell_t(\pi_1; c, \lambda_a, \lambda_d)
%     < 
%     \ell_t(\pi_2; c, \lambda_a, \lambda_d)
% $
% for all $\pi_1 \in \Pi_t^\ast$ and $\pi_2 \in (\Pi_t^\ast)^c$.
% Consequently, the miscoverage rate is treated as the primary objective.}
For $\pi \in \Pi_t^\ast$, $
{\color{\cc}
    a_t(\pi)
}$ is a monotonically increasing function in the size of the corresponding conformal set $| \hat{C}_{\pi}(x_t) |$, since the set size decreases as $\pi$ increases (Eq.~\ref{eq:singlethres_conformal}).
This encourages smaller conformal sets as long as $m_t(\pi) = 0$.
In contrast, for $\pi \in (\Pi_t^\ast)^c$, {\color{\cc}$
    a_t(\pi)
$} decreases as $| \hat{C}_{\pi}(x_t) |$ increases.
When $m_t(\pi) = 1$, the primary role of $a_t(\pi)$ is to penalize the miscoverage, while the set size is of secondary importance.
The design leverages the monotonic structure of single-thresholded conformal sets (Eq.~\ref{eq:singlethres_conformal}), in which the miscoverage term $m_t(\pi)$ monotonically decreases as $| \hat{C}_{\pi}(x_t) |$ increases.
Since \texttt{EXP3.P}-based algorithms are formulated in terms of gains, we additionally introduce a normalized gain term $g_t(\pi) \in [0, 1]$, defined as
\begin{align}
    \label{eq:method:true_normalized_gain}
    g_t(\pi)
    \coloneqq 
    \frac{\ell_{\text{max}} - \ell_t(\pi)}{\ell_{\text{max}} - \ell_{\text{min}}}.
\end{align}

\para{\texttt{OCP-Bandit}.}
It is a direct modification of the \texttt{EXP3.P} algorithm (Algorithm~\ref{prelim:exp3p}) for online conformal prediction with adversarial partial feedback (Algorithm~\ref{alg:exp3p-cp}). 
\texttt{OCP-Bandit} provides a sublinear regret 
% of order $
%     \mathcal{O} \!\left( 
%         \sqrt{\tfrac{T K}{\ln K}} \ln (\delta^{-1}) + 5.15\sqrt{T K \ln K} 
%     \right)
% $ 
with probability at least $1 - \delta$, where $\delta \in (0, 1)$ is a confidence parameter (Theorem~\ref{thm:EXP3.P-CP}).
Following \texttt{EXP3.P}, \texttt{OCP-Bandit} adopts a biased gain estimator $\tilde{g}_t(\pi | \{ \pi_t \})$ for all $\pi \in \Pi$ to update the learner's strategy $p_t \in [0,1]^K$ as:
\begin{align}
    \label{eq:method:exp3p_biased_gain_estimator}
    \tilde{g}_t(\pi | \{ \pi_t \})
    \coloneqq 
    \mathbbm{1}(\pi = \pi_t) 
    \frac{g_t(\pi) + \beta}{p_t(\pi)}
    +
    \mathbbm{1}(\pi \neq \pi_t)
    \frac{\beta}{p_t(\pi)},
\end{align}
where $p_t$ lies in the probability simplex defined over the action space $\Pi$.
It is an importance-weighted estimator of the corresponding normalized gain $g_t(\pi)$, augmented with an additional exploration term $\beta > 0$, to ensure high-probability sublinear regret against an adaptive adversary.
Note that the normalized gain can be computed only for the chosen arm $\pi_t$. 

However, unlike the standard bandit feedback setting assumed by \texttt{EXP3.P}, we consider a partial feedback scenario in which the true label $y_t$ is additionally observed when $m_t(\pi_t) = 0$.
In this case, the scoring function $f_t(x_t, y_t)$ of the true label can be computed, which allows us to evaluate $m_t(\pi)$ for all $\pi \in \Pi$.
Moreover, exploiting the monotonicity of $m_t(\pi)$ with respect to $\pi$, we can partially evaluate miscoverage terms even when $m_t(\pi_t) = 1$.
This additional information, which is ignored by \texttt{OCP-Bandit}, can be exploited to achieve the target coverage level with fewer time steps.

\para{\texttt{OCP-Unlock+}.}
We propose a novel algorithm that fully exploits the additional information available when $m_t(\pi_t) = 0$ and uses the monotonicity of the miscoverage term $m_t(\pi)$ with respect to the threshold $\pi \in \Pi$ to obtain additional feedback when $m_t(\pi_t) = 1$ (Algorithm~\ref{alg:exp3p-cp-unlock}).
Specifically, if $m_t(\pi_t) = 1$, then $m_t(\pi) = 1$ for all $\pi \in \Pi$ such that $\pi \geq \pi_t$, since $m_t(\pi)$ in non-decreasing as $\pi$ increases. %$|\hat{C}_\pi(x_t)|$ decreases.
Therefore, we can evaluate $m_t(\pi)$ for the subset $\Pi_t(\pi_t) \subset \Pi$, which we refer to as \textit{unlocking set}, defined as follows:
\begin{align}
    \Pi_t(\pi_t)
    \coloneqq
    \begin{cases}
        \Pi
        & \text{if } m_t(\pi_t)=0 \\[1mm]
        \{ 
            {\pi} \in \Pi
            :
            {\pi} \geq \pi_t
        \} 
        & \text{if } m_t(\pi_t)=1
    \end{cases}.
    \label{eq:method:unlocking_set}
\end{align}
As such, we propose a new biased gain estimator $\tilde{g}_t(\pi | \Pi_t(\pi_t))$ that is designed in accordance with the amount of additional feedback available, as characterized by the unlocking set $\Pi_t(\pi_t)$.
In particular, since the degree of additional feedback varies with $m_t(\pi_t)$, we design separate estimators for each case where $m_t(\pi_t) = 0$ and $m_t(\pi_t) = 1$, \ie
\begin{align}
\label{eq:method:unlock:tilde_g_t}
    \tilde{g}_t \left( \pi \mid \Pi_t(\pi_t) \right)
    &\coloneqq
    \underbrace{\mathbbm{1}(m_t(\pi_t)=0) \times (A)}_{\text{full unlocking}} 
    +
    \underbrace{\mathbbm{1}(m_t(\pi_t)=1) \times (B) }_{\text{partial unlocking}}.
\end{align}
% {\color{red}The following includes a brief summary of our estimator but see Appendix \ref{appdx:baseline_comparisons} for details on our design choice.}\SP{properly update the related appendix.}
In what follows, we provide a high-level description of our estimator and refer the reader to Appendix~\ref{appdx:baseline_comparisons} for further details.

First, when $m_t(\pi_t) = 0$, we design the biased gain estimator as follows:
\begin{align*}
    (A)
    \!
    \coloneqq
    \!
    \mathbbm{1}( \pi \in \Pi_t^\ast )
    \!
    \left\{
    \!
        \frac{
            g_t(\pi)
        }{
            \sum_{\tilde{\pi} \in \Pi_t^\ast} p_t(\tilde{\pi})    
        }
        \!+\!
        \left( 
        \!
            1 
            \!
            +
            \!
            \frac{
                1
            }{
                \sum_{\tilde{\pi} \in \Pi_t^\ast} p_t(\tilde{\pi})
            }
        \!
        \right)
        \!
        \beta
        \!
    \right\}
    \!
    +
    \!
    \mathbbm{1}( \pi \notin \Pi_t^\ast )
    \Bigg\{
        \!
        g_t(\pi)
        \!
        +
        \!
        \frac{
            \beta
        }{
            \sum_{\tilde{\pi} \leq \pi} p_t(\tilde{\pi})    
        }
        \!
    \Bigg\},
\end{align*}
where we can compute the normalized gain $g_t(\pi)$ for all $\pi \in \Pi$ due to semi-bandit feedback.
The estimator is designed to have different importance weights for normalized gains of thresholds from two groups, $\Pi_t^\ast$ and its complement, where larger weight is assigned to those from the former. 
This is consistent with our design rationale for the loss function (Sec.~\ref{sec:loss} and Eq.~\ref{eq:method:inefficiency_term}), which prioritizes miscoverage control.
Also, for $\pi \notin \Pi_t^\ast$, we further strengthen this miscoverage-first rationale by placing less weight on the exploration term $\beta$ for larger conformal sets via the denominator $\sum_{\tilde{\pi} \leq \pi} p_t(\tilde{\pi})$.

On the other hand, when $m_t(\pi_t) = 1$, feedback can be evaluated only on the subset $
    \Pi_t(\pi_t)
    =
    \{
        \pi \in \Pi: \pi \geq \pi_t
    \}
$ of $(\Pi_t^\ast)^c$.
In this case, we define the estimator as follows:
\begin{align*}
    (B)
    &\coloneqq
    \mathbbm{1}(\pi \notin \Pi_t(\pi_t))
    \left\{
        \tilde{g}_t(\pi)
        +
        \left(
            1 
            +
            \frac{1}{p_t(\pi)}
        \right) \beta
    \right\}
    +
    \mathbbm{1}(\pi \in \Pi_t(\pi_t))
    \left\{
        g_t(\pi)
        +
        \frac{
            \beta
        }{
            \sum_{\tilde{\pi} \leq \pi} p_t(\tilde{\pi})    
        }
    \right\}.
\end{align*}
Here, the term $\tilde{g}_t(\pi)$ is introduced for all $\pi \notin \Pi_t(\pi_t)$, since $g_t(\pi)$ cannot be computed, as follows:
\begin{equation}
    \label{eq:method:normalized_gain}
    \tilde{g}_t(\pi)
    = 
    \frac{\ell_{\max} - \tilde{\ell}_t(\pi)}{\ell_{\max} - \ell_{\min}},
\end{equation}
with {\color{\cc}$
    \tilde{\ell}_t(\pi)
    \coloneqq 
    1 - \alpha ( 1 - \alpha )
    -
    \frac{c\alpha}{1 + \alpha(1 - 2\alpha)\pi}o(T)^{-1}
$}.
We refer to $\tilde{g}_t(\pi)$ as a \textit{pseudo-gain}, since it serves as an approximation of the true normalized gain $g_t(\pi)$.
Unlike the $m_t(\pi_t) = 0$ case where $\Pi_t^\ast$ can be explicitly characterized,
$\Pi_t(\pi_t)^c$ is decomposed as $
    \Pi_t(\pi_t)^c
    =
    \left\{
        \Pi_t(\pi_t)^c 
        \cap
        \Pi_t^\ast
    \right\}
    \cup
    \left\{
        \Pi_t(\pi_t)^c 
        \cap
        (\Pi_t^\ast)^c
    \right\}
    = 
    \Pi_t^\ast
    \cup
    \left\{
        \Pi_t(\pi_t)^c 
        \cap
        (\Pi_t^\ast)^c
    \right\}
$ when $m_t(\pi_t) = 1$.
Since $\Pi_t^\ast$ cannot be correctly specified, we do not upweight the pseudo-gain as we did for the normalized gain in the first term of $(A)$.
Nevertheless, the construction of the pseudo-gain is once again aligned with our miscoverage-first rationale. 
Specifically, $\tilde{g}_t(\pi)$ is defined so that $\tilde{g}_t(\pi_1) \geq g_t(\pi_2)$ for all $\pi_1 \in \Pi_t(\pi_t)^c$ and $\pi_2 \in \Pi_t(\pi_t)$, thereby incorporating the monotonicity property that the miscoverage term is non-decreasing as $\pi$ increases. %|\hat{C}_{\pi}(x_t)|$ increases.
For $\pi \in \Pi_t(\pi_t)$, the biased gain estimator is defined in the same manner as the second term of $(A)$.

Building on the biased gain esitmator as defined in Eq.~\ref{eq:method:unlock:tilde_g_t}, our \texttt{OCP-Unlock+} algorithm provides a long-run coverage guarantee with high probability, by controlling the deviation of the miscoverage rate $\mc(T)$ from $\alpha$ as in the following theorem.
% Notably, the proposed algorithm is user-friendly, as the only parameters that a user needs to specify are the target miscoverage level $\alpha \in (0, 0.5)$, the discretization level $K$, and the time horizon $T$.
% Unlike \texttt{OCP-Bandit} algorithm where $\lambda > 0$ should be additionally specified, \texttt{OCP-Unlock+} provides a closed form expression of $\lambda$ as a function of $\alpha$ and $T$.
% \begin{theorem}
% \label{thm:unlocking_regret_bound}
% For any $\alpha \in (0, 0.5)$, $T \in \mathbb{N}, K \geq 5$, and  $\delta \in (0, 1)$, run \textup{\texttt{OCP-Unlock+} (Algorithm~\ref{alg:exp3p-cp-unlock})} with
% ${\color{\cc} c=0, \lambda' = \alpha}, \beta = \sqrt{\tfrac{\ln K}{KT}}, \gamma = 1.05 \sqrt{\tfrac{K \ln K}{T}}, $ and $ \eta = 0.95 \sqrt{\tfrac{\ln K}{KT}}$.
% Then, the deviation of the miscoverage rate $\mc(T)$ from $\alpha$ satisfies
% \begin{equation*}
%     \mc(T) - \alpha
%     \le
%     (\ell_{\text{max}} - \ell_{\text{min}})
%     \left(
%         \sqrt{\frac{K}{T\ln K}} \ln (\delta^{-1})
%         +
%         \sqrt{\frac{C \ln K}{T}} 
%         + 
%         4.15\sqrt{\frac{K \ln K}{T}}
%         +
%         {\color{\cc} 2o(T)^{-1}}
%     \right)
% \end{equation*}
% with probability at least $1 - \delta$,
% where $C$ is the constant defined in Eq.~\ref{eq:s-exp3p-cp:generalbound-new}.
% % where $C$ is the constant defined in Eq.~\ref{eq:s-exp3p-cp:generalbound-new} and $\epsilon = \sqrt{\frac{4}{K \ln K}}$.
% \end{theorem}
{\color{\cc}
\begin{theorem}
\label{thm:unlocking_regret_bound}
For any $\alpha \in (0, 0.5)$, $T \in \mathbb{N}, K \geq 5$, and  $\delta \in (0, 1)$, run \textup{\texttt{OCP-Unlock+} (Algorithm~\ref{alg:exp3p-cp-unlock})} with
$\beta = \sqrt{\tfrac{\ln K}{KT}}, \gamma = 1.05 \sqrt{\tfrac{K \ln K}{T}}, $ and $ \eta = 0.95 \sqrt{\tfrac{\ln K}{KT}}$.
Then, the deviation of the miscoverage rate $\mc(T)$ from $\alpha$ satisfies
% \begin{equation*}
%     \mc(T) - \alpha
%     \le
%     (\ell_{\text{max}} - \ell_{\text{min}})
%     \left(
%         \sqrt{\frac{K}{T\ln K}} \ln (\delta^{-1})
%         +
%         \sqrt{\frac{C \ln K}{T}} 
%         + 
%         4.15\sqrt{\frac{K \ln K}{T}}
%         +
%         {\color{\cc} 2o(T)^{-1}}
%     \right)
% \end{equation*}
\begin{equation*}
    \mc(T) - \alpha
    \le
    \ell_{\text{diff}}
    \left(
        \sqrt{\frac{C \ln K}{T}}
        +
        4.15\sqrt{\frac{K \ln K}{T}}
        +
        \sqrt{\frac{K}{T \ln K}} \ln (\delta^{-1})
        +
        2o(T)^{-1}
    \right)
    +
    C_{\mc}(T),
\end{equation*}
with probability at least $1 - \delta$,
where $C$ is the constant defined in Eq.~\ref{eq:s-exp3p-cp:generalbound-new}
and
$\ell_\text{diff} \coloneqq \ell_{\text{max}} - \ell_{\text{min}}$.
% where $C$ is the constant defined in Eq.~\ref{eq:s-exp3p-cp:generalbound-new} and $\epsilon = \sqrt{\frac{4}{K \ln K}}$.
\end{theorem}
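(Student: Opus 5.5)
The plan is to combine the learner-agnostic conversion of Lemma~\ref{lemma:regret_to_coverage} with a high-probability regret bound for \texttt{OCP-Unlock+}. By Lemma~\ref{lemma:regret_to_coverage}, since the inefficiency term in Eq.~\ref{eq:method:inefficiency_term} is of order $o(T)^{-1}$, we have $\mc(T)-\alpha \le \frac{1}{T}\Reg(T) + C_{\mc}(T)$ deterministically. It therefore suffices to show that, with probability at least $1-\delta$,
\begin{equation*}
\Reg(T) \le \ell_{\text{diff}}\Big(\sqrt{C T\ln K} + 4.15\sqrt{TK\ln K} + \sqrt{\tfrac{TK}{\ln K}}\ln(\delta^{-1}) + 2T\,o(T)^{-1}\Big).
\end{equation*}
Since $\ell_t(\pi)=\ell_{\max}-\ell_{\text{diff}}\, g_t(\pi)$, the loss regret equals $\ell_{\text{diff}}$ times the gain regret $G_{\max}-\sum_t g_t(\pi_t)$. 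So I would work entirely with normalized gains.

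For the gain regret, I would follow the standard \texttt{EXP3.P} analysis of \citet{auer2002nonstochastic}, adapted to the estimator in Eq.~\ref{eq:method:unlock:tilde_g_t}. The first step is the potential argument on $W_t=\sum_\pi w_t(\pi)$ with exponential weights $w_{t+1}(\pi)=w_t(\pi)\exp(\eta\tilde g_t(\pi\mid\Pi_t(\pi_t)))$ and $p_t=(1-\gamma)w_t/W_t+\gamma/K$. Using $e^x\le 1+x+x^2$ for $\eta\tilde g_t\le 1$, which is ensured by $\gamma$ and $\beta\le$ the chosen rates and $K\ge5$, I would obtain
\begin{equation*}
\ln\tfrac{W_{T+1}}{W_1}\le \tfrac{\eta}{1-\gamma}\sum_t\sum_\pi p_t(\pi)\tilde g_t(\pi) + \tfrac{\eta^2}{1-\gamma}\sum_t\sum_\pi p_t(\pi)\tilde g_t(\pi)^2,
\end{equation*}
together with the lower bound $\ln\frac{W_{T+1}}{W_1}\ge \eta\sum_t\tilde g_t(\pi^\ast)-\ln K$.

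The second step bounds the two moment sums for the new estimator. For the first moment, I would show $\sum_\pi p_t(\pi)\tilde g_t(\pi)\le g_t(\pi_t)+(\text{bias}) + K\beta$-type terms. The bias comes from two sources: the $1/\sum_{\Pi_t^\ast}p_t$ reweighting in (A), and the pseudo-gain in (B). The pseudo-gain differs from the true gain only through the $o(T)^{-1}$ inefficiency terms and the miscoverage constants. I expect these bias pieces to produce the $2o(T)^{-1}$ term after dividing by $T$, and otherwise terms absorbed as in \texttt{EXP3.P}. For the second moment, the quantity $\sum_\pi p_t(\pi)\tilde g_t(\pi)^2$ is no longer simply bounded by $(1+\beta)\sum_\pi\tilde g_t(\pi)$. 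Its expectation involves ratios such as $p_t(\pi)/\sum_{\tilde\pi\le\pi}p_t(\tilde\pi)$, and a cumulative sum of these is bounded by a harmonic/log-type quantity. I would collect this into the constant $C$ of Eq.~\ref{eq:s-exp3p-cp:generalbound-new}, which yields the $\sqrt{CT\ln K}$ contribution once $\eta$ is plugged in.

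The third step, which I expect to be the main obstacle, is the high-probability concentration: showing that with probability $1-\delta$, simultaneously for all $\pi$,
\begin{equation*}
\sum_t g_t(\pi)\le \sum_t\tilde g_t(\pi)+\tfrac{\ln(K/\delta)}{\beta}.
\end{equation*}
In \texttt{EXP3.P} this follows from a supermartingale $\exp(\beta\sum_t(g_t-\tilde g_t))$, using $\mathbb{E}_t[\beta/p_t]$ exploration slack. Here the estimator is not unbiased and its exploration bonus for $\pi\in\Pi_t(\pi_t)$ is only $\beta/\sum_{\tilde\pi\le\pi}p_t$ rather than $\beta/p_t$. So I must verify $\mathbb{E}_t[\exp(\beta(g_t(\pi)-\tilde g_t(\pi)))]\le1$ case by case (the $m_t(\pi_t)=0$ branch, and the $m_t(\pi_t)=1$ branch with $\pi$ inside or outside the unlocking set). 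The key facts are that $\tilde g_t\ge g_t$ whenever $g_t$ is observed (up to the $o(T)^{-1}$ slack) and that the probability of observing $\pi$ is at least $\sum_{\tilde\pi\le\pi}p_t(\tilde\pi)$. If exact domination fails, I would absorb the discrepancy into the $2o(T)^{-1}$ term.

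Finally, I would combine the three steps and substitute $\beta,\gamma,\eta$. This gives the $4.15\sqrt{TK\ln K}$ and $\sqrt{TK/\ln K}\ln(\delta^{-1})$ terms with the same constants as in \texttt{EXP3.P}. I would then divide by $T$, multiply by $\ell_{\text{diff}}$, and apply Lemma~\ref{lemma:regret_to_coverage}.
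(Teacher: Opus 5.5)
Your skeleton matches the paper's: Theorem~\ref{thm:unlocking_regret_bound} is Lemma~\ref{lemma:regret_to_coverage} combined with the high-probability regret bound of Theorem~\ref{thm:EXP3.P-CP-UNLOCK-PS}. That bound is proved by the \texttt{EXP3.P} potential argument with a first-moment bound, a second-moment bound, and an arm-wise concentration lemma.

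The gap is in where you make $\sqrt{CT\ln K}$ come from. In the paper, $C=\min(\frac{1}{T}\sum_t C_t, K)$, where $C_t$ is the coefficient of $\beta$ in the \emph{first}-moment bound $\mathbb{E}_{\pi\sim p_t}\tilde g_t(\pi\mid\Pi_t(\pi_t))\le(1+o(T)^{-1})g_t(\pi_t)+o(T)^{-1}+C_t\beta$ (Eq.~\ref{eq:useful_inequality_combined}). That is, $C_t$ is the $p_t$-weighted mass of the exploration bonuses $\beta/\sum_{\tilde\pi\in\Pi_t^\ast}p_t(\tilde\pi)$, $\beta/\sum_{\tilde\pi\le\pi}p_t(\tilde\pi)$, $\beta/p_t(\pi)$ and the offsets. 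It enters as $C\beta T\le\sqrt{CT\ln K}$ (using $C\le K$) and \emph{replaces} the $K\beta T=\sqrt{TK\ln K}$ term of \texttt{EXP3.P}. That replacement is exactly why $5.15$ becomes $4.15$.

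The second moment needs nothing new. In every branch of the estimator, $p_t(\pi)\tilde g_t(\pi\mid\Pi_t(\pi_t))\le 1+2\beta$, because each denominator dominates $p_t(\pi)$. Hence $\sum_\pi\omega_t(\pi)\tilde g_t(\pi\mid\Pi_t(\pi_t))^2\le\frac{1+2\beta}{1-\gamma}\sum_\pi\tilde g_t(\pi\mid\Pi_t(\pi_t))$, and you get the usual $(1+2\beta)\eta KT$ term under $(1+2\beta)K\eta\le\gamma$.

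Your plan is to keep ``$K\beta$-type'' bonuses in the first moment and extract $C$ from a harmonic bound on the \emph{expected} second moment. This fails in two ways:
\begin{itemize}
\item It would give a bound with a different constant from the one the theorem names.
\item In a high-probability argument you cannot replace the realized $\sum_\pi p_t(\pi)\tilde g_t(\pi\mid\Pi_t(\pi_t))^2$ by its conditional expectation. That quantity is of order $1/\sum_{\tilde\pi\in\Pi_t^\ast}p_t(\tilde\pi)$ on the event $m_t(\pi_t)=0$, so you would need a further martingale concentration step that you have not budgeted for.
\end{itemize}

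Your fallback in the concentration step also fails. For $\pi\in\Pi_t^\ast$ and $m_t(\pi_t)=1$, the pseudo-gain scores $\pi$ as a miscovering arm. So $g_t(\pi)-\tilde g_t(\pi)$ is of constant order (the $d_t$-gap $(1-2\alpha)(1-\alpha)$ divided by $\ell_{\text{diff}}$), not $o(T)^{-1}$, and cannot be absorbed into $2o(T)^{-1}$.

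What makes Lemma~\ref{lem:armwise_guarantee_new} work is an exact conditional-mean computation. Write $S_t^\ast=\sum_{\tilde\pi\in\Pi_t^\ast}p_t(\tilde\pi)$, and let $\hat g_t(\pi)$ be the non-bonus part of $\tilde g_t(\pi\mid\Pi_t(\pi_t))$.
\begin{itemize}
\item For $\pi\in\Pi_t^\ast$: the event $m_t(\pi_t)=0$ has probability exactly $S_t^\ast$ and gives $\hat g_t(\pi)=g_t(\pi)/S_t^\ast$. Otherwise $\pi_t>\pi$, so $\pi\notin\Pi_t(\pi_t)$ and $\hat g_t(\pi)=\tilde g_t(\pi)\ge 0$. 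Hence $\mathbb{E}_t[\beta g_t(\pi)-\beta\hat g_t(\pi)]=-(1-S_t^\ast)\beta\tilde g_t(\pi)\le 0$. The extra $\beta^2$ of variance from the pseudo-gain branch is paid for by the offset $1$ in the bonuses $(1+1/\cdot)\beta$.
\item For $\pi\notin\Pi_t^\ast$: the pseudo-loss $\tilde\ell_t(\pi)$ equals $\ell_t(\pi)$ exactly. So $\hat g_t(\pi)\equiv g_t(\pi)$ on every branch, and the increment is identically zero.
\end{itemize}
So neither ``$\tilde g_t\ge g_t$ when observed'' nor the observation probability $\sum_{\tilde\pi\le\pi}p_t(\tilde\pi)$ is the mechanism. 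The $2o(T)^{-1}$ in the statement comes solely from the first-moment bias in Eq.~\ref{eq:useful_inequality_combined}, not from concentration.
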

}

Note that this results from Lemma~\ref{lemma:regret_to_coverage}, where the upper bound of $\mc(T) - \alpha$ is {\color{\cc}the sum of $C_{\mc}(T)$ and} the per-round regret $\frac{1}{T} \Reg(T)$ of \texttt{OCP-Unlock+} with respect to the loss defined in Eq.~\ref{eq:convertedloss} and Eq.~\ref{eq:method:inefficiency_term} (Theorem~\ref{thm:EXP3.P-CP-UNLOCK-PS}). 
See Appendix~\ref{appdx:proof:exp3p-cp-unlock} for proof and additional details, including the introduction of $1$ as offset terms in Eq.~\ref{eq:method:unlock:tilde_g_t}.

\begin{algorithm}[t]
\caption{\texttt{OCP-Unlock+}}
\label{alg:exp3p-cp-unlock}
\begin{algorithmic}[1]
    \Procedure{\textsc{OCP-Unlock+}}{$\Pi, T, \eta, \gamma, \beta, {\color{\cc} \alpha, c}, (f_t)_{t=1}^T$}
    \State Initialize cumulative estimated gain: $\tilde{G}_{0}(\pi) \gets 0$ for all $\pi \in \Pi$
    \For {$t \in \{ 1, \dots, T \}$}
        \State Update strategy:
        $
            p_t(\pi) \gets (1-\gamma)\frac{\exp(\eta \tilde{G}_{t-1}(\pi))}{\sum_{\tilde{\pi} \in \Pi} \exp(\eta \tilde{G}_{t-1}(\tilde{\pi}))} 
            + 
            \gamma\frac{1}{K}
        $, where $K = |\Pi|$
        \State Sample arm: $\pi_{t} \sim p_t$
        \State Receive $m_t(\pi_t) \leftarrow \mathbbm{1} \left( y_t \notin \Ch_{\pi_t}(x_t) \right)$, where $\hat{C}_{\pi_t}(x_t)$ is defined as Eq.~\ref{eq:singlethres_conformal}

        \If{ $m_t(\pi_t) = 0$ } 
        $\Comment{\text{\textit{Full unlocking}: Observe the true label $y_t$}}$
            \State $\Pi_t(\pi_t) \gets \Pi$    
        \Else 
        $\Comment{\text{\textit{Partial unlocking}}}$
            \State $\Pi_t(\pi_t) \gets \{ \pi \in \Pi \mid \pi \ge \pi_t \}$
        \EndIf
        
        \For{$\pi \in \Pi_t(\pi_t)$} $\Comment{\text{Evaluate $m_t(\pi)$ for all $ \pi \in \Pi_t(\pi_t)$}}$
            \State $\ell_t(\pi) \gets \textsc{ComputeLoss}(\pi, m_t(\pi), {\color{\cc} \alpha, c})$
            \State Compute normalized gain $g_{t}(\pi)$, defined as Eq.~\ref{eq:method:true_normalized_gain}
        \EndFor

        \For{$\pi \in \Pi$}
            \State Construct biased gain estimator $\tilde{g}_{t}(\pi|\Pi_t(\pi_t))$, defined as Eq.~\ref{eq:method:unlock:tilde_g_t}
            \State Update cumulative gain: $\tilde{G}_{t}( \pi ) \gets \tilde{G}_{t-1}( \pi ) + \tilde{g}_{t}( \pi )$
        \EndFor
    \EndFor
    \EndProcedure
    
    \Procedure{ComputeLoss}{$\pi, m, {\color{\cc} \alpha, c}$}
        \State \textbf{return} $
        {\color{\cc}
            d_t(\pi; \alpha)
            +
            a_t(\pi; \alpha, c)
        }
        $, defined as Eq.~\ref{eq:miscoverage_term} and Eq.~\ref{eq:method:inefficiency_term}
    \EndProcedure
\end{algorithmic}
\end{algorithm}

\section{Experiment}\label{sec:exp}
In this section, we empirically evaluate the performance of constructed conformal sets from
% $\texttt{OCP-Bandit}$ (Algorithm~\ref{alg:exp3p-cp}) and
\texttt{OCP-Unlock+} (Algorithm~\ref{alg:exp3p-cp-unlock}) in online conformal prediction with semi-bandit feedback, both in terms of the long-run miscoverage (Eq.~\ref{eq:miscoverage}) and inefficiency (Eq.~\ref{eq:inefficiency}).
In addition to the above algorithm, we additionally consider the following baselines: 
% \texttt{OCP-Unlock}, 
\textit{Semi-bandit Prediction Sets} (\texttt{SPS}), and \textit{MultiValid Prediction} (\texttt{MVP}).
% \texttt{OCP-Unlock} (Algorithm~\ref{alg:exp3p-cp-semi}) is a generalization of \texttt{OCP-Bandit} to accommodate semi-bandit feedback, which takes \texttt{OCP-Bandit} as a special case under the bandit feedback setting.
% See Appendix~\ref{appdx:baseline_comparisons} for detail.
While \texttt{SPS} \citep{ge2025stochastic} provides a long-run miscoverage guarantee in the partial feedback setting, the guarantee is valid under the i.i.d. assumption.
In particular, we consider \texttt{MVP} \citep{bastani2022practical} as an oracle baseline, since the algorithm is constructed under the \textit{full feedback} setting.
See Section~\ref{sec:rel} for detail.

We conduct experiments on both classification and regression tasks, where both i.i.d. and non-i.i.d. settings are considered for each task.
In \textbf{Experiment 1} (Section~\ref{sec:experiment_classification}), we consider classification task using ImageNet dataset.
We simulate the non-i.i.d. setting by evolving the functional form of scoring function $f_t(\cdot, \cdot)$ over time.
In \textbf{Experiment 2} (Section~\ref{sec:experiment_regression}), we consider regression task using the Airfoil Self-Noise dataset \citep{Dua2017UCI} from UCI repository.
% For the non-i.i.d. scenario, we consider a covariate shift setup where the input distribution changes after the first one-third of the data stream.
{\color{\cc}For the non-i.i.d. scenario, we consider a covariate shift setup where the input distribution changes after the first one-third of the data stream.}
{\color{\cc} Moreover,
in \textbf{Experiment 3} (Appendix~\ref{sec:ablation}), we present an ablation study on the discretization level 
% $\lambda>0$ and 
$K\in\mathbb{N}$.
In \textbf{Experiment 4} (Appendix~\ref{sec:exp:ablation-variants}), we further compare \texttt{OCP-Unlock+} with its variants, \texttt{OCP-Bandit} (Algorithm~\ref{alg:exp3p-cp}) and \texttt{OCP-Unlock} (Algorithm~\ref{alg:exp3p-cp-semi}).
In addition, \textbf{Experiment 5} (Appendix~\ref{sec:exp:varying-alpha}) examines the sensitivity of \texttt{OCP-Unlock+} to the target miscoverage level $\alpha$.
Finally, in \textbf{Experiment 6} (Appendix~\ref{sec:exp:offset_analysis}), we empirically analyze the additional term $C_{\mc}(T)$ that appears in the upper bound on the deviation of the miscoverage rate $\mc(T)$ from $\alpha$ (Lemma~\ref{lemma:regret_to_coverage}).}
% In \textbf{Experiment 1} and \textbf{Experiment 2}, we set $\lambda = 1$ for \texttt{OCP-Bandit} and \texttt{OCP-Unlock}.
See Appendix~\ref{app:exp-details} for additional experimental details.
\begin{figure}[t]
    \centering
    \subfigure[$\mc(t)$ (i.i.d.)]{
        \includegraphics[width=0.48\textwidth]{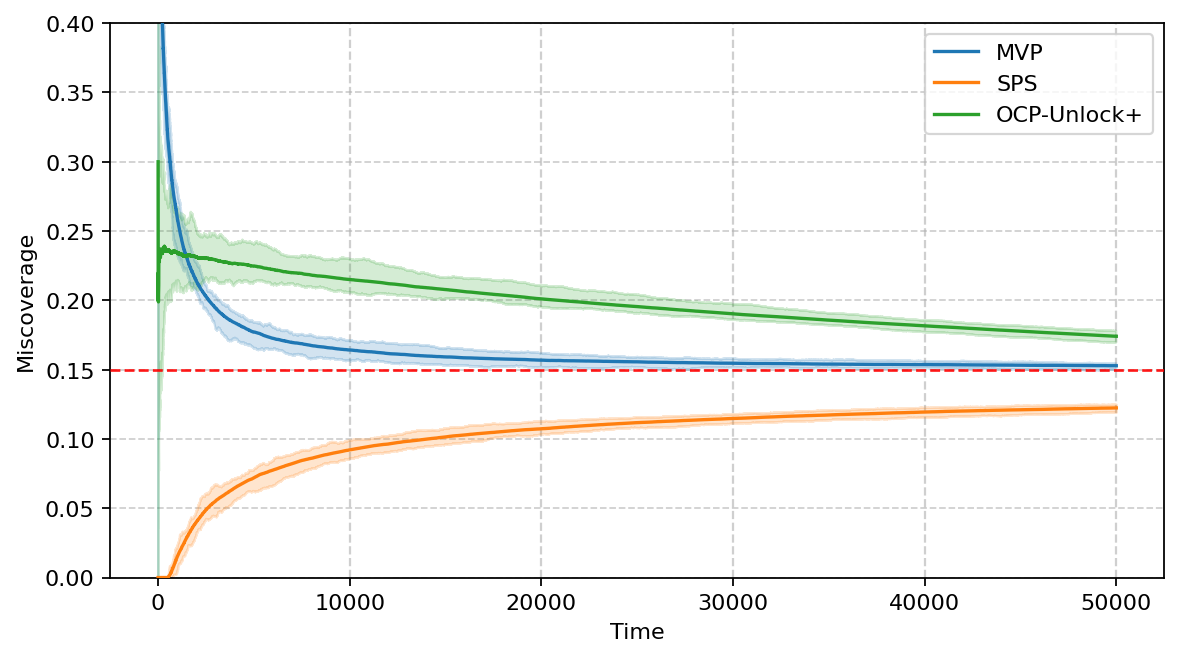}
    }
    \subfigure[$\ineff(t)$ (i.i.d.)]{
        \includegraphics[width=0.48\textwidth]{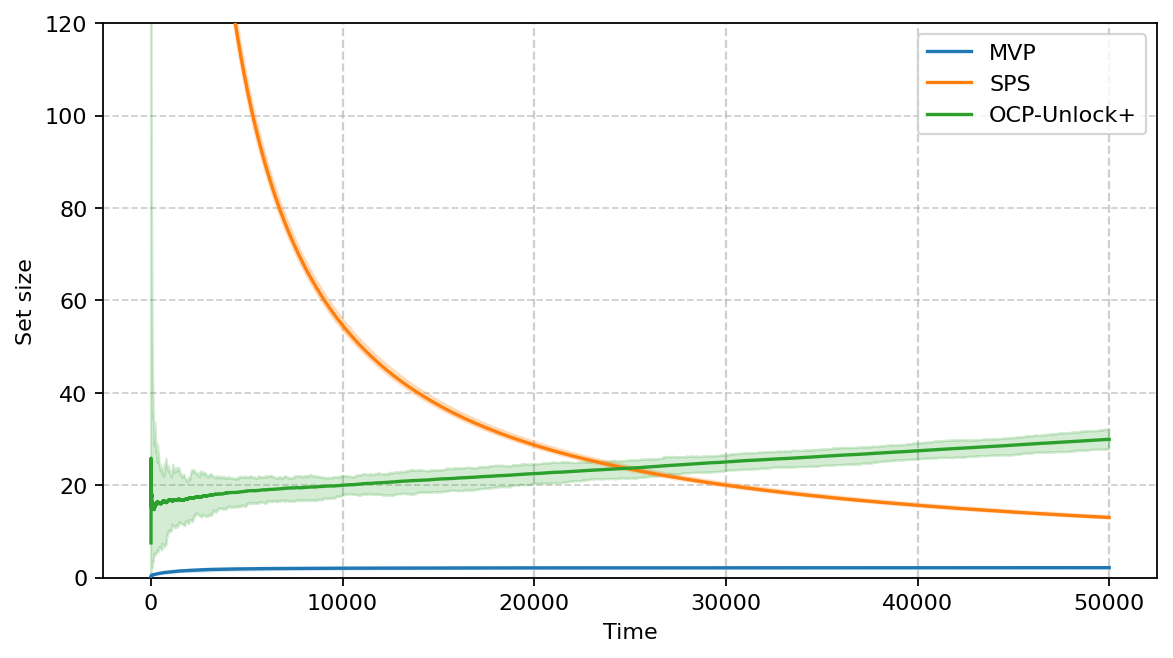}
    }

    \vspace{-1.2em}

    \subfigure[$\mc(t)$ (Non-i.i.d.)]{
        \includegraphics[width=0.48\textwidth]{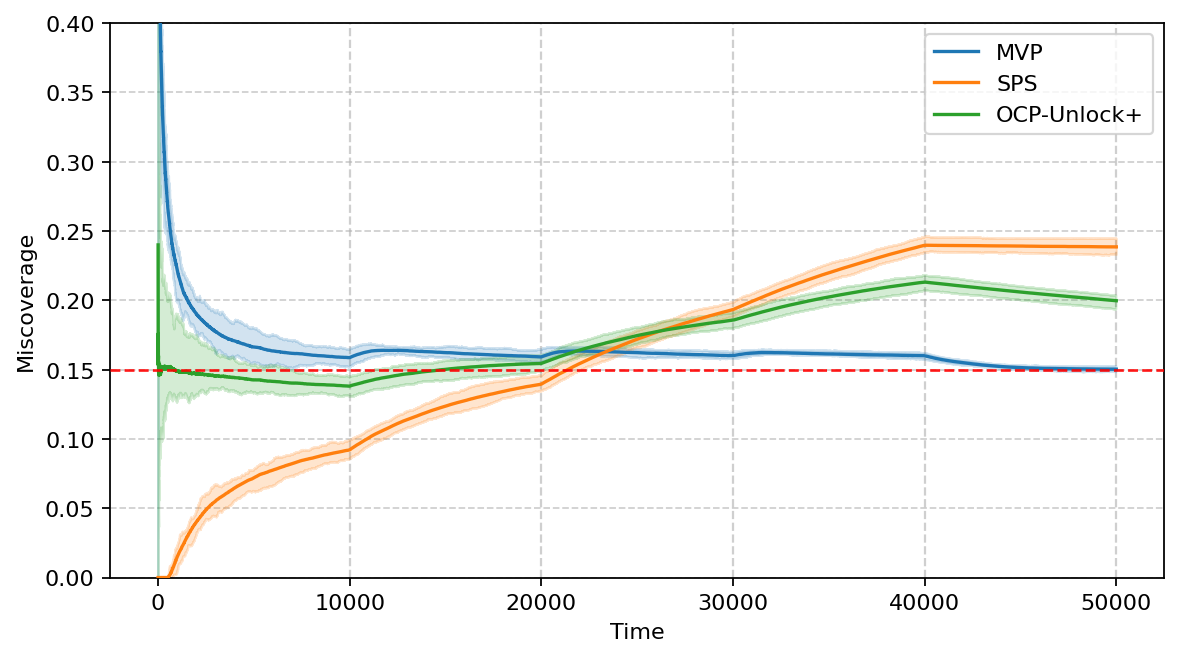}
    }
    \subfigure[$\ineff(t)$ (Non-i.i.d.)]{
        \includegraphics[width=0.48\textwidth]{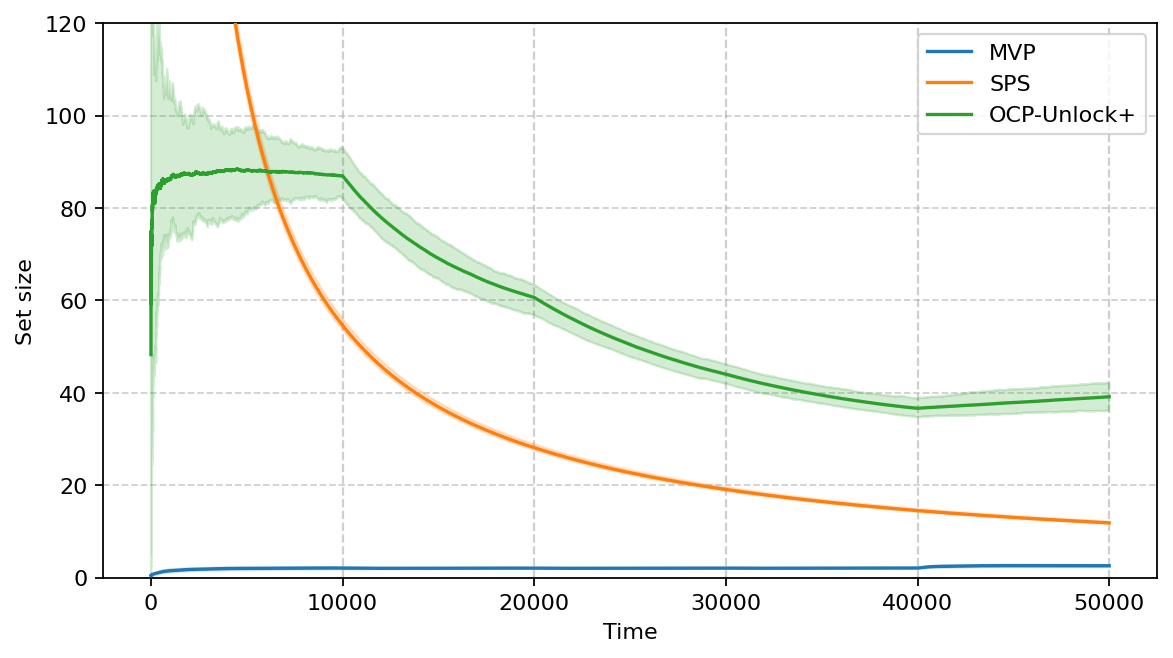}
    }
    \vspace{-1.2em}
    \caption{
    Long-run miscoverage $\mc(t)$ and inefficiency $\ineff(t)$ as a function of time $t \in [T]$ on the ImageNet dataset under both the i.i.d. and non-i.i.d. settings, averaged over 50 independent runs
    }
    \vspace{-1.5em}
    \label{fig:imgcls:main}
\end{figure}

\subsection{Experiment 1: Coverage vs Inefficiency on ImageNet}
\label{sec:experiment_classification}
For both i.i.d. and non-i.i.d. settings, we evaluate average miscoverage rate $\mc(t)$ and inefficiency $\ineff(t)$ as a function of time $t \in [T]$, which is averaged over 50 random trials for {\color{\cc}$\alpha = 0.15$, $K = 200$, and $T = 50, 000$ (Figure~\ref{fig:imgcls:main}).}
$\Pi$ is a uniformly discretized set over $[0, 1]$, consisting of $K$ candidate thresholds.

Under the i.i.d. setting, $1 - \mc(t)$ of \texttt{MVP} and \texttt{SPS} closely approach the target coverage level $1 - \alpha$, where \texttt{MVP} has the lowest $\ineff(t)$ for all $t \in [T]$.
Smaller conformal sets from \texttt{MVP} compared to the other baselines are the result of the different amount of feedback available, where \texttt{MVP} can evaluate the miscoverage $m_t(\pi)$ for all $\pi \in \Pi$ at every time step.  
\texttt{SPS} has $\mc(t) \leq \alpha$ for all $t \in [T]$, which is aligned with its high-probability any-time coverage guarantee under the i.i.d. setting \citep{ge2025stochastic}.
% Among bandit-based algorithms, except for \texttt{OCP-Unlock+}, \texttt{OCP-Bandit} and \texttt{OCP-Unlock} fail to approach the target coverage level.  
Since 
% both \texttt{OCP-Unlock} and 
\texttt{OCP-Unlock+} exploits additional feedback when $m_t(\pi_t) = 0$, it highlights the importance of incorporating the miscoverage-first rationale in the design of biased gain estimator $\tilde{g}_t(\pi | \Pi_t(\pi_t))$, as illustrated in Section~\ref{sec:ouralg}.

Meanwhile, under the non-i.i.d. setting, both $\mc(t)$ and $\ineff(t)$ show fluctuating patterns over time across all baselines. 
Notably, $\mc(t)$ of \texttt{SPS} consistently decreases, having difficulty adapting to the dynamic environment in which the long-run coverage guarantee no longer holds.
Similar to the i.i.d. setting, $\mc(t)$ of \texttt{MVP} and \texttt{OCP-Unlock+} approach the target coverage level, where \texttt{MVP} once again consistently achieves the lowest $\ineff(t)$.
% Besides, \texttt{OCP-Bandit} and \texttt{OCP-Unlock} still fail to approach the target coverage.

% \begin{figure}[t]
%     \centering
%     \subfigure[$1 - \mc(t)$ (i.i.d.)]{
%         \includegraphics[width=0.48\textwidth]{Exp-ImageNet/imagenet_coverage_timeseries.png}
%     }
%     \subfigure[$\ineff(t)$ (i.i.d.)]{
%         \includegraphics[width=0.48\textwidth]{Exp-ImageNet/imagenet_width_timeseries.png}
%     }

%     \vspace{-1.2em}

%     \subfigure[$1 - \mc(t)$ (non-i.i.d.)]{
%         \includegraphics[width=0.48\textwidth]{Exp-ImageNet-Adv/imagenet_coverage_timeseries.png}
%     }
%     \subfigure[$\ineff(t)$ (non-i.i.d.)]{
%         \includegraphics[width=0.48\textwidth]{Exp-ImageNet-Adv/imagenet_width_timeseries.png}
%     }
%     \vspace{-1.2em}
%     \caption{
%     Average coverage rate ($1 - \mc(t)$) and inefficiency ($\ineff(t)$) as a function of time $t \in [T]$ on the \textit{ImageNet} dataset under both the i.i.d.\ (top) and non-i.i.d. (bottom) settings, averaged over 50 independent runs. In the non-i.i.d. setting, the functional form of the scoring function changes over time.
%     }
%     \label{fig:imgcls:main}
% \end{figure}

\subsection{Experiment 2: Coverage vs Inefficiency on Airfoil Self-Noise}
\label{sec:experiment_regression}

For both the i.i.d.\ and covariate-shift settings, we evaluate the long-run miscoverage $\mc(T)$ and inefficiency $\ineff(T)$ on the Airfoil Self-Noise dataset, averaged over 50 independent runs with $\alpha=0.1$, $K=20$, and $T=50{,}000$ (Figure~\ref{fig:regression:main:airfoil}).
The policy class $\Pi$ is a uniformly discretized subset of $[0,1]$ with $K$ candidate thresholds.

In the i.i.d.\ setting, \texttt{MVP} is closest to the target miscoverage level and also attains the smallest inefficiency.
\texttt{SPS} and \texttt{OCP-Unlock+} both produce miscoverage below the target level, with \texttt{OCP-Unlock+} exhibiting a larger inefficiency.
Under covariate shift, \texttt{MVP} continues to remain near the target miscoverage level, whereas \texttt{SPS} becomes less reliable and \texttt{OCP-Unlock+} remains conservative with a larger set size.
% These results illustrate that the relative behavior of the methods changes noticeably between the i.i.d.\ and shifted regimes.

\begin{figure}[t]
    \centering
    \subfigure[$\mc(T)$ (i.i.d.)]{
        \includegraphics[width=0.48\textwidth]{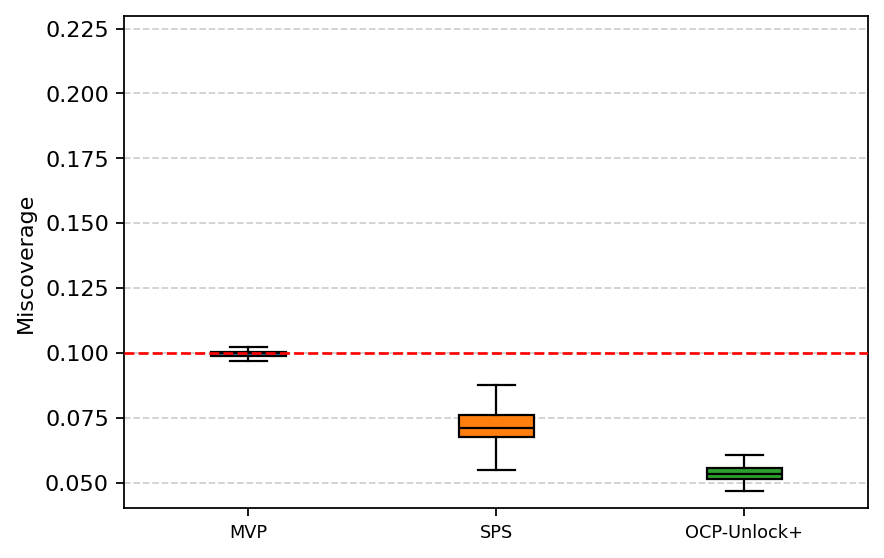}
    }
    \subfigure[$\ineff(T)$ (i.i.d.)]{
        \includegraphics[width=0.48\textwidth]{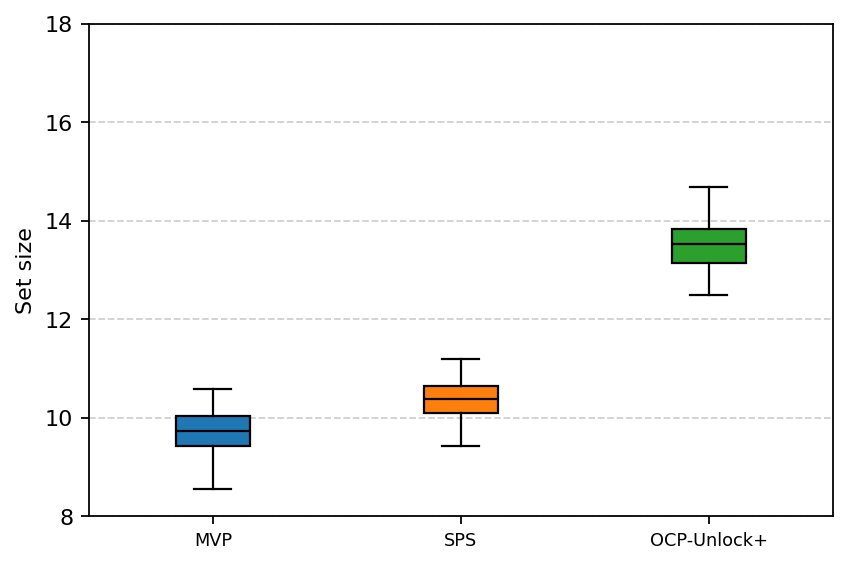}
    }

    \vspace{-1.2em}

    \subfigure[$\mc(T)$ (Covariate Shift)]{
        \includegraphics[width=0.48\textwidth]{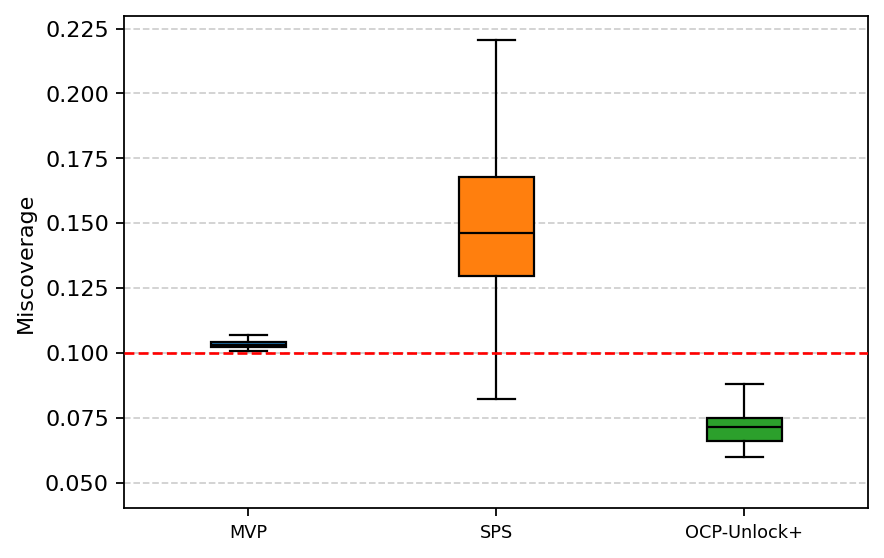}
    }
    \subfigure[$\ineff(T)$ (Covariate Shift)]{
        \includegraphics[width=0.48\textwidth]{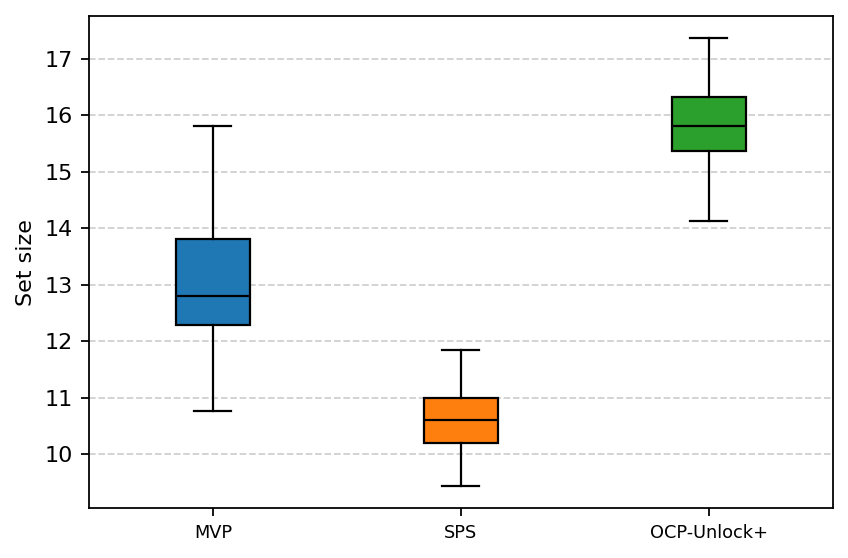}
    }
    \vspace{-1.2em}
        \caption{
        Long-run miscoverage $\mc(T)$ and inefficiency $\ineff(T)$ on the {Airfoil Self-Noise} dataset under the i.i.d. and covariate shift settings, averaged over {\color{\cc}50} independent runs
        }
        \label{fig:regression:main:airfoil}
    \vspace{-1.5em}
\end{figure}

\section{Conclusion} \label{sec:conc}
We propose the first algorithms for online conformal prediction under adversarial semi-bandit feedback.
By formulating the problem as a multi-armed adversarial bandit problem and designing a loss function tailored to online conformal prediction, we show that any learner achieving sublinear regret with respect to this loss also guarantees long-run coverage at the target level.
In particular, our algorithm, \texttt{OCP-Unlock+}, fully exploits the feedback available in the semi-bandit setting and leverages the monotonicity property of single-threshold conformal sets with respect to the miscoverage. As a result, it not only achieves sublinear regret but also demonstrates performance comparable to an oracle baseline with full feedback in both i.i.d. and non-i.i.d. settings.
Unlike most online conformal prediction methods that assume full feedback, our approach supports more realistic human-in-the-loop workflows in which the true label is observed only when it lies within the conformal set.
Finally, while our algorithms are context-free and do not leverage additional information, extending them to contextual semi-bandit settings presents a promising direction for future research.
{\color{\cc}We also believe that the bound in Theorem~\ref{thm:unlocking_regret_bound} may not be tight and that sharper analysis is possible.}

\clearpage

% We introduce an online conformal prediction algorithm that operates with semi-bandit feedback in both stochastic (i.i.d.) and adversarial settings. The method can be applied to several tasks, like classification and regression, constructing prediction sets while observing only partial information each round. We establish that, under semi feedback coupled with adversarial bandit updates, minimizing an appropriate regret objective implies coverage at least $1-\alpha$, and that the coverage shortfall decays at rate $\mathcal{O}\big(\sqrt{\frac{K\ln K}{T}}\big)$. This contrasts with prior online CP approaches—which typically assume full feedback to update thresholds—and supports more realistic human-in-the-loop workflows where the ground-truth label may be unobservable unless it lies in the prediction set. Our method is currently context-free—it does not leverage the context, $x_t$; extending both the algorithm and its analysis to contextual semi-bandit settings is a promising direction for future work.

%\clearpage
\section*{Acknowledgements}

We appreciate constructive feedback by anonymous reviewers. 
This work was supported by Institute of Information \& communications Technology Planning \& Evaluation (IITP) and the National Research Foundation of Korea (NRF) grant funded by the Korea government (MSIT) (RS-2019-II191906, Artificial Intelligence Graduate School Program (POSTECH) (5\%); RS-2024-00457882, National AI Research Lab Project (15\%); No. RS-2024-00509258 and No. RS-2024-00469482, Global AI Frontier Lab (50\%); RS-2025-00560062 (30\%)).

\bibliographystyle{iclr2026_conference}
\bibliography{tml-lab,llm,sml}

\clearpage
\appendix
\onecolumn

% prelim
{\section{Preliminary}\label{sec:bg}
\subsection{Adversarial Bandit Problem}
The multi-armed bandit problem is a sequential game between a learner and an environment \citep{MAL-068, lattimore2020bandit}, where the game proceeds over $T \in \mathbb{N}$ rounds.
At each round $t \in [T]$, the learner chooses an arm $\pi_t$ from a given set $\Pi$, where we refer to the problem as $K$-armed bandit problem when $| \Pi | = K$.
Then, the environment reveals a loss $\ell_t(\pi_t) \in [\ell_{\text{min}}, \ell_{\text{max}}]$, from which the learner updates its strategy to choose an arm in the next round.

In contrast to the \emph{full feedback} setting where a learner observes a whole loss vector $\ell_t \in \mathbb{R}^K$ at each round, the multi-armed bandit problem assumes the \emph{bandit feedback} setting such that only the loss of a chosen arm is observed.
Specifically, the adversarial bandit problem makes almost no assumptions on the generation of loss vectors.
We often refer to the environment as the adversary in the adversarial bandit problem, where two different types of adversaries are usually considered.
First, an oblivious adversary chooses loss vectors $\ell_t$ for all $t \in [T]$ at the start of the game.
On the other hand, an adaptive adversary chooses a loss vector $\ell_t$ at round $t$ based on the previous interaction $\ell_1, \pi_1, \dots, \ell_{t-1}, \pi_{t-1}$ with a learner.

In the adversarial bandit problem, the performance of a learner is measured by its expected regret, where the regret is defined as the difference between the cumulative loss of the learner and that of the best arm in hindsight as 
\begin{equation*}
\Reg(T) \coloneqq \sum_{t=1}^T \ell_t(\pi_t) - \min_{\pi \in \Pi} \sum_{t=1}^T \ell_t(\pi).
\end{equation*}
Under the adaptive adversary setup, the expectation is taken with respect to two sources of randomness: the learner's randomized strategy and the adversary's choice of loss vectors.
The main goal of adversarial bandit algorithms is to design a learner that provides a sublinear expected regret.
In the next section, we provide a simple algorithm under the adaptive adversary that not only achieves sublinear expected regret, but also guarantees sublinear regret with high probability.

\clearpage

\subsection{\texttt{EXP3.P}}
\label{sec:exp3p_algorithm}

% \SP{If possible, let's use loss instead of gain here.}
% {\color{green} [KM: While it is possible, we have to change the algorithm which in turn makes it necessary to change the original proof. Specifically, $+\beta$ needs to be converted to $-\beta$. Since we are running out of time and making such modification could lead to unexpected errors, would it be okay to stick to the original notation?]}
% algorithm
\texttt{EXP3.P} (Algorithm~\ref{prelim:exp3p}) is a variant of \texttt{EXP3} \citep{auer2002nonstochastic} that achieves sublinear expected regret under both oblivious and adaptive adversaries. 
While both algorithms consider an importance-weighted gain estimator $\tilde{g}_t(\pi)$ of $g_t(\pi)$ (Eq.~\ref{eq:method:true_normalized_gain}) to update their strategy, \texttt{EXP3.P} introduces an additional bonus term $\beta > 0$, thereby making it a biased gain estimator.
This enables \texttt{EXP3.P} to further achieve sublinear regret with high probability (Theorem~\ref{prelim:exp3p}) by controlling the variance of regret. 
\begin{theorem}[\cite{auer2002nonstochastic}]
\label{thm:EXP3P_Bound}
For any $\delta \in (0, 1)$ and $\ell_{t}(\pi) \in [ \ell_{\text{min}}, \ell_{\text{max}} ]$, run \texttt{EXP3.P} (Algorithm~\ref{prelim:exp3p}) with
$
    \beta = \sqrt{\tfrac{\ln K}{KT}},
    \gamma = 1.05\sqrt{\tfrac{K \ln K}{T}}, ~\text{and}~
    \eta = 0.95\sqrt{\tfrac{\ln K}{KT}}.
$
Then, with probability at least $1 - \delta$, the regret $\Reg(T)$ satisfies
\begin{equation*}
    \Reg(T) 
    \leq (\ell_{\text{max}} - \ell_{\text{min}}) \left(
        \sqrt{\frac{TK}{\ln K}} ~ \ln(\delta^{-1})
        +
        5.15 \sqrt{TK\ln K}
    \right).
\end{equation*}
\end{theorem}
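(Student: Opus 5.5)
The statement is the classical high-probability regret bound for \texttt{EXP3.P}, so I would follow the original argument of \cite{auer2002nonstochastic}, written in terms of the normalized gains $g_t(\pi)\in[0,1]$ of Eq.~\ref{eq:method:true_normalized_gain}. Because $\ell_t(\pi)=\ell_{\max}-(\ell_{\max}-\ell_{\min})g_t(\pi)$, the loss regret is exactly $(\ell_{\max}-\ell_{\min})$ times the gain regret $\max_\pi\sum_t g_t(\pi)-\sum_t g_t(\pi_t)$. It therefore suffices to show that, with probability at least $1-\delta$,
\[
\max_{\pi}\sum_t g_t(\pi)-\sum_t g_t(\pi_t)\le \sqrt{TK/\ln K}\,\ln(\delta^{-1})+5.15\sqrt{TK\ln K}.
\]

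\emph{Step 1: optimism of the estimator.} Let $G_T(\pi)=\sum_t g_t(\pi)$ and $\tilde G_T(\pi)=\sum_t\tilde g_t(\pi)$, where $\tilde g_t=\mathbbm{1}(\pi=\pi_t)g_t/p_t+\beta/p_t$. I would show that for each fixed $\pi$, $\Pr(G_T(\pi)>\tilde G_T(\pi)+\ln(\delta^{-1})/\beta)\le\delta$, and then take a union bound over the $K$ arms (with $\delta$ replaced by $\delta/K$). The route is the supermartingale $Z_t=\exp\bigl(\beta\sum_{s\le t}(g_s(\pi)-\tilde g_s(\pi))\bigr)$. Conditional on the history, and even against an adaptive adversary since $g_t$ is fixed given the past, one has
\[
\mathbb{E}_t\bigl[\exp(\beta(g_t-\tilde g_t))\bigr]\le 1.
\]
This follows from $e^x\le1+x+x^2$ for $x\le1$, which needs $\beta g_t/p_t\le1$, guaranteed by $p_t\ge\gamma/K$ and the stated parameters. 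The key cancellation is that the bonus $\beta/p_t$ absorbs the second-moment term $\beta^2\mathbb{E}_t[(g_t\mathbbm{1}/p_t)^2]\le\beta^2/p_t$. Markov's inequality then completes this step. I expect this step to be the main obstacle: the martingale argument under an adaptive adversary and the parameter constraints making $x\le1$ hold need care.

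\emph{Step 2: deterministic potential argument.} With $W_t=\sum_\pi\exp(\eta\tilde G_t(\pi))$, I would bound $\ln(W_T/W_0)$ from below by $\eta\max_\pi\tilde G_T(\pi)-\ln K$. For the upper bound I would use $e^x\le1+x+x^2$ with $x=\eta\tilde g_t\le1$, which requires $\eta(1+\beta)K/\gamma\le1$. Together with $p_t\ge(1-\gamma)w_t/W_t$, $\sum_\pi p_t\tilde g_t=g_t(\pi_t)+K\beta$, and $\sum_\pi p_t\tilde g_t^2\le(1+\beta)\sum_\pi\tilde g_t$, this gives
\[
\sum_t g_t(\pi_t)\ge(1-\gamma-2\eta K)\max_\pi\tilde G_T(\pi)-\frac{\ln K}{\eta}-(1+\beta)\beta KT.
\]

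\emph{Step 3: combine and plug in.} Substituting Step 1 (on the probability-$(1-\delta)$ event) and $G_T\le T$ yields a regret bound of
\[
(\gamma+2\eta K)T+\frac{\ln K}{\eta}+(1+\beta)\beta KT+\frac{\ln(K\delta^{-1})}{\beta}.
\]
With the stated $\beta,\gamma,\eta$, each term is of order $\sqrt{TK\ln K}$, and the constants $1.05+1.9+\cdots$ add to at most $5.15$ once the $\ln K/\beta$ part is absorbed. The remaining term gives $\sqrt{TK/\ln K}\,\ln(\delta^{-1})$. The bound is trivial when $T$ is small relative to $K\ln K$, because then the stated bound exceeds $T$, which legitimizes the parameter constraints used in Steps 1 and 2. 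Finally, multiplying by $\ell_{\max}-\ell_{\min}$ gives the claim.
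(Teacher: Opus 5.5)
\paragraph{Verdict.} Your architecture matches the paper's proof: the per-arm supermartingale bound, the union bound over the $K$ arms, the exponential-weights potential argument, and the small-$T$ reduction. The paper writes the potential step through the cumulant-generating-function decomposition of $-\mathbb{E}_{p_t}\tilde g_t$ with $p_t=(1-\gamma)\omega_t+\gamma u$, which is equivalent to your $W_{t+1}/W_t$ bound. It arrives at $K\beta T+\gamma T+(1+\beta)\eta KT+\ln(K\delta^{-1})/\beta+\ln K/\eta$. The gap is in Step 3: the constant $5.15$ does not follow from the Step 2 inequality you wrote, and the arithmetic you claim is false.

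\paragraph{The constant comes out near $6$.} With the stated parameters:
\begin{itemize}
\item $\gamma T=1.05\sqrt{TK\ln K}$ and $2\eta KT=1.9\sqrt{TK\ln K}$;
\item $\ln K/\eta=\sqrt{TK\ln K}/0.95\approx1.053\sqrt{TK\ln K}$;
\item $\beta KT=\sqrt{TK\ln K}$ and $\beta^2KT=\ln K$;
\item $\ln(K\delta^{-1})/\beta=\sqrt{TK\ln K}+\sqrt{TK/\ln K}\,\ln(\delta^{-1})$.
\end{itemize}
So your final expression is about $6.00\sqrt{TK\ln K}+\ln K+\sqrt{TK/\ln K}\,\ln(\delta^{-1})$. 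The $\ln K/\beta=\sqrt{TK\ln K}$ piece cannot be ``absorbed''. The only other term in the target is $\sqrt{TK/\ln K}\,\ln(\delta^{-1})$, which vanishes as $\delta\to1$. As written, you prove the bound with constant about $6$, not $5.15$.

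\paragraph{Where the loss occurs and how to fix it.} The damage comes from replacing $(1+\beta)\eta K$ by $2\eta K$, which doubles the $\eta KT$ contribution. Your own potential computation, before loosening, gives
$\sum_t g_t(\pi_t)\ge(1-\gamma-(1+\beta)\eta K)\max_\pi\tilde G_T(\pi)-\ln K/\eta-\beta KT$.
Note that the first-order term contributes only $\beta KT$, not $(1+\beta)\beta KT$.

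With this inequality, the $\eta$-term contributes $0.95\sqrt{TK\ln K}+\beta\eta KT=0.95\sqrt{TK\ln K}+0.95\ln K$. The total is $(1+1.05+0.95+1+1/0.95)\sqrt{TK\ln K}+0.95\ln K\approx5.053\sqrt{TK\ln K}+0.95\ln K$. The leftover $0.95\ln K$ fits in the remaining slack of about $0.097\sqrt{TK\ln K}$ whenever the bound is non-trivial, since then $T\ge5.15^2K\ln K$ and $K\ge2$. That regime also gives $1-\gamma-(1+\beta)\eta K\ge0$, which you need when substituting the Step 1 lower bound on $\max_\pi\tilde G_T(\pi)$.

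\paragraph{Minor point on Step 1.} You do not need $\beta g_t/p_t\le1$. The exponent $\beta g_t\bigl(1-\mathbbm{1}(\pi_t=\pi)/p_t\bigr)$ is at most $\beta\le1$ automatically, and $e^x\le1+x+x^2$ holds for every $x\le1$.
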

% Note that the original proof on the regret bound of \texttt{EXP3.P} algorithm requires $\ell_t(\pi) \in [0, 1]$ for any $t \in \mathbb{N}$ and $\pi \in \Pi$ \citep{auer2002nonstochastic}. 
% But, here we consider a simple loss normalization in the algorithm and bound, providing the equivalent result for any bounded loss functions.
% {\color{green} Note that \autoref{thm:EXP3P_Bound} holds when Algorithm~\autoref{prelim:exp3p} is run with $\ell_{t}(\pi) \in [0, 1]$ for all $\pi \in \Pi$ and for all $t \in \mathbb{N}$. 
% This in return requires $g_{t}(\pi) \in [0, 1]$ for all $\pi \in \Pi$ and for all $t \in \mathbb{N}$.
% However, since we are considering a general bounded loss function $\ell_{t}(\pi) \in [\ell_{\text{min}}, \ell_{\text{max}}]$, we run Algorithm~\autoref{prelim:exp3p} by letting $g_{t}(\pi) = \frac{\ell_{\text{max}} - \ell_{t}(\pi)}{\ell_{\text{max}} - \ell_{\text{min}}} \in [0, 1]$. 
% This will still provide the high probability regret bound with respect to our loss function of interest, $\ell_{t}(\pi)$.} 

\begin{algorithm}[hbt!]
\caption{\texttt{EXP3.P}}
\label{prelim:exp3p}
\begin{algorithmic}[1]
    \Procedure{\textsc{EXP3.P}}{$\Pi, T, \eta, \gamma, \beta$}
    \State Initialize cumulative estimated gain: $\tilde{G}_{0}(\pi) \gets 0$ for all $\pi \in \Pi$
    \For {$t \in \{ 1, \dots, T \}$}
        \State Update strategy:
        $
            p_t(\pi) \gets (1-\gamma)\frac{\exp\left(\eta \tilde{G}_{t-1}(\pi)\right)}{\sum_{\tilde{\pi} \in \Pi} \exp\left(\eta \tilde{G}_{t-1}(\tilde{\pi})\right)} 
            + 
            \gamma\frac{1}{K},
        $ where $K = | \Pi |$
        
        \State Sample arm: $\pi_t \sim p_t$
        \State Observe loss: $\ell_t(\pi_t) $
        
        \State Compute normalized gain $g_t(\pi_t)$, defined as Eq.~\ref{eq:method:true_normalized_gain}
        \State Construct biased gain estimator:
        $
            \tilde{g}_{t}(\pi) \gets \frac{g_{t}(\pi)\mathbbm{1}(\pi_t = \pi) + \beta}{p_t(\pi)}
        $ for all $\pi \in \Pi$
        \State Update cumulative gain: $\tilde{G}_{t}(\pi) \gets \tilde{G}_{t-1}(\pi) + \tilde{g}_{t}(\pi)$ for all $\pi \in \Pi$
    \EndFor
    \EndProcedure
\end{algorithmic}
\end{algorithm}

\clearpage

\section{Proof of Lemma~\ref{lemma:regret_to_coverage}}
\label{sec:proof:lemma:regret_to_coverage}

{\color{black}
% First, note that the tight algorithm will satisfy the following property for all $\alpha \in (0, 0.5)$:
% \begin{align*}
%     % \label{eq:tight_alg_property}
%     \frac{
%         \sum_{t=1}^T \mathbbm{1}( m_t(\pi_t) = 0 )
%     }{
%         \sum_{t=1}^T \mathbbm{1}( m_t(\pi_t) = 1 )
%     }
%     &\approx
%     \frac{1 - \alpha}{\alpha}.
% \end{align*}
Let
\begin{equation*}
\scalebox{1}{$\begin{aligned}
    \ell_t(\pi)
    &
    =
    d_t(\pi)
    +
    a_t(\pi)
    \\
    &=
    m_t(\pi)
    +
    \mathbbm{1}(m_t(\pi) = 0)
    \underbrace{\left(
        \alpha
        +
        \alpha (1-\alpha)
    \right)}_{g(\alpha)}
    +
    \mathbbm{1}(m_t(\pi) = 1)
    \underbrace{\left(
        -\alpha ( 1- \alpha)
    \right)}_{h(\alpha)}
    +
    a_t(\pi)
    \\
    &=
    \underbrace{\left(
        m_t(\pi)
        +
        \mathbbm{1}(m_t(\pi) = 0)
        g(\alpha)
        +
        \mathbbm{1}(m_t(\pi) = 1)
        h(\alpha)
    \right)}_{\text{Miscoverage Term (Discrepancy decreases as $\alpha \uparrow 0.5$)}}
    + 
    \underbrace{a_t(\pi)}_{\text{Inefficiency Term}},   
\end{aligned}$}
\end{equation*}
where Lemma~\ref{lemma:regret_to_coverage} assumes that $a_t(\pi)$ is of order $o(T)^{-1}$ for all $\pi \in \Pi$.

Then,
\begin{equation*}
\scalebox{0.85}{$\begin{aligned}
    \Reg\left( T \right) 
    &=
    \sum_{t=1}^T 
    \ell_t\left(\pi_t\right) 
    - 
    \min_{\pi \in \Pi} \sum_{t=1}^T \ell_t\left(\pi\right)
    \nonumber
    \\
    &=
    \left(
        \sum_{\scriptscriptstyle m_t(\pi_t)=0} 
        \ell_t\left(\pi_t\right) 
        +
        \sum_{\scriptscriptstyle m_t(\pi_t)=1} 
        \ell_t\left(\pi_t\right) 
    \right)
    - 
    \min_{\pi \in \Pi} \sum_{t=1}^T \ell_t\left(\pi\right)
    \nonumber
    \\
    &\geq
    \left(
        \sum_{\scriptscriptstyle m_t(\pi_t)=0} 
        \ell_t\left(\pi_t\right) 
        +
        \sum_{\scriptscriptstyle m_t(\pi_t)=1} 
        \ell_t\left(\pi_t\right) 
    \right)
    - 
    \sum_{t=1}^T \ell_t\left(0\right)
    ~ (\because ~ \text{Simply choose $\pi=0$})
    \nonumber    
    \\
    &=
    \left(
        \sum_{t=1}^T \left( m_t(\pi_t) + a_t(\pi_t) \right)
        +
        \sum_{\scriptscriptstyle m_t(\pi_t)=0} 
        g(\alpha)
        +
        \sum_{\scriptscriptstyle m_t(\pi_t)=1} 
        h(\alpha)
    \right)
    - 
    T (\alpha + \alpha (1 - \alpha) + a_t(0))
    \nonumber  
    \\
    &=
    \left(
        \sum_{t=1}^T \left( m_t(\pi_t) + a_t(\pi_t) \right)
        +
        \sum_{\scriptscriptstyle m_t(\pi_t)=1} 
        \left(
            - \alpha
            - 2 \alpha (1 - \alpha) 
        \right)
    \right)
    -
    T a_t(0)
    \nonumber  
    \\
    &=
    \left(
        \sum_{t=1}^T m_t(\pi_t)
        +
        \sum_{\scriptscriptstyle m_t(\pi_t)=1} 
        \left(
            - \alpha
            - 2 \alpha (1 - \alpha) 
        \right)
    \right)
    -
    \underbrace{\left(
        T a_t(0)
        -
        \sum_{t=1}^T a_t(\pi_t)
    \right)}_{T C_1}
    \nonumber  
    \\
    &=
    \left(
        \sum_{t=1}^T m_t(\pi_t)
        -
        \sum_{m_t(\pi_t)=0} \alpha
        +
        \sum_{m_t(\pi_t)=0} \alpha
        +
        \sum_{\scriptscriptstyle m_t(\pi_t)=1} 
        \left(
            - \alpha
            - 2 \alpha (1 - \alpha) 
        \right)
    \right)
    -
    T C_1
    \nonumber
    \\
    &=
    \left(
        \sum_{t=1}^T m_t(\pi_t)
        -
        T \alpha
    \right)
    +
    \left(
        \sum_{m_t(\pi_t)=0} 
        \alpha
        +
        \sum_{\scriptscriptstyle m_t(\pi_t)=1} 
        \left(
            - 2 \alpha (1 - \alpha) 
        \right)
    \right)
    -
    T C_1.
    \nonumber
\end{aligned}$}
\end{equation*}
Note that $C_1$ is of order $o(T)^{-1}$, as $a_t(\pi)$ is of order $o(T)^{-1}$ for all $\pi \in \Pi$.

For a given learner, there exists a constant $C_{\text{gap}}(T)$ that satisfies the following relationship:
\begin{align*}
    C_{\text{gap}}(T) o(T)^{-1}
    &=
    \frac{1 - \alpha}{\alpha}
    -
    \frac{
        \sum_{t=1}^T \mathbbm{1}( m_t(\pi_t) = 0 )
    }{
        \sum_{t=1}^T \mathbbm{1}( m_t(\pi_t) = 1 )
    }.
\end{align*}
Then, 
% \SP{check}
\begin{equation*}
\scalebox{0.85}{$\begin{aligned}
    \Reg(T)
    &\geq 
    \left(
        \sum_{t=1}^T m_t(\pi_t)
        -
        T \alpha
    \right)
    +
    \left(
        \underbrace{\sum_{m_t(\pi_t)=1} 
        (1 - 2\alpha) 
        (1 - \alpha)}_{\geq 0}
        -
        \alpha \sum_{\scriptscriptstyle m_t(\pi_t)=1} 
        C_{\text{gap}}(T) o(T)^{-1}
    \right)
    -
    T C_1
    \\
    &\geq
    \left(
        \sum_{t=1}^T m_t(\pi_t)
        -
        T \alpha
    \right)
    -
    T C_1
    -
    \alpha
    \sum_{\scriptscriptstyle m_t(\pi_t)=1} 
    C_{\text{gap}}(T) o(T)^{-1}.
\end{aligned}$}
\end{equation*}

Dividing each side by $T$, we have
\begin{align*}
    \mc(T) - \alpha
    &\leq 
    \frac{1}{T}\Reg(T)
    +
    \underbrace{C_1 
    +
    \alpha\frac{\sum_{t=1}^T \mathbbm{1}(m_t(\pi_t) = 1)}{T} C_{\text{gap}}(T) o(T)^{-1}}_{C_{\mc}(T)}.
\end{align*}

\clearpage 

\section{\texttt{EXP3.P}-style Algorithms for Online Conformal Prediction with Adversarial Partial Feedback}
\label{appdx:baseline_comparisons}

We propose three modifications to the \texttt{EXP3.P} algorithm tailored to the setting of online conformal prediction with adversarial partial feedback: \texttt{OCP-Bandit} (Algorithm~\ref{alg:exp3p-cp}), \texttt{OCP-Unlock} (Algorithm~\ref{alg:exp3p-cp-semi}), and \texttt{OCP-Unlock+} (Algorithm~\ref{alg:exp3p-cp-unlock}).
While sharing the same algorithmic structure, the three variants differ in their specification of the unlocking set $\Pi_t(\pi_t)$ and in the definition of the biased gain estimator $\tilde{g}_t(\pi | \Pi_t(\pi_t))$.

Before describing each algorithm in detail, Table~\ref{tab:biased_gain_estimator_comparison_mt_0} and Table~\ref{tab:biased_gain_estimator_comparison_mt_1} provide a high-level overview of these differences.
The difference in the design of $\tilde{g}_t(\pi | \Pi_t(\pi_t))$ between \texttt{OCP-Bandit} and \texttt{OCP-Unlock} is highlighted in \textcolor{red}{red}, while the additional difference between \texttt{OCP-Unlock} and \texttt{OCP-Unlock+}, on top of the difference between the first two baselines, is highlighted in \textcolor{blue}{blue}. 
The offset term 1 of $\tilde{g}_t(\pi \mid \Pi_t(\pi_t))$ in \texttt{OCP-Unlock+} (Eq.~\ref{eq:method:unlock:tilde_g_t}) is omitted from the tables solely to highlight the differences among the three methods.
\begin{table}[hbt!]
  \vspace{-1ex}
  \caption{Comparison of the unlocking set $\Pi_t(\pi_t)$ and the biased gain estimator $\tilde{g}_t(\pi \mid \Pi_t(\pi_t))$ for each method when $m_t(\pi_t)=0$}
  \label{tab:biased_gain_estimator_comparison_mt_0}
  \vspace{-2ex}
  \centering
  \small
  \setlength{\tabcolsep}{6pt}
  \setlength{\arrayrulewidth}{0.4pt}

  % struts for nicer header spacing
  \newcommand{\Tstrut}{\rule{0pt}{2.4ex}} % top strut
  \newcommand{\Bstrut}{\rule[-1.1ex]{0pt}{0pt}} % bottom strut (optional)

  \scalebox{0.68}{\begin{tabular}{c | c | c | c}
    \toprule
    Feedback Type & Method & $\Pi_t(\pi_t)$ & $\tilde{g}_t(\pi | \Pi_t(\pi_t))$ \\
    \midrule
    Bandit Feedback &
    \texttt{OCP-Bandit} & 
    $\{ \pi_t \}$ & 
    $
        \mathbbm{1}(\pi = \pi_t) 
        \left\{ 
            \frac{g_t(\pi)}{p_t(\pi)} + \frac{\beta}{p_t(\pi)} 
        \right\}
        + 
        \mathbbm{1}(\pi \neq \pi_t)
        %+
        \frac{\beta}{p_t(\pi)}
    $ \\
    \midrule
    \multirow{2}{*}{\makecell{Semi-bandit\\ Feedback}} & 
    \texttt{OCP-Unlock} & 
    \multirow{2}{*}{$\Pi = \Pi_t^\ast \cup (\Pi_t^\ast)^c$} & 
    $
        {\color{red} \mathbbm{1}(\pi \in \Pi_t^\ast)} 
        \left\{ 
            \frac{g_t(\pi)}{{\color{red} \sum_{\tilde{\pi} \in \Pi_t^\ast}p_t(\tilde{\pi})}} + \frac{\beta}{p_t(\pi)} 
        \right\}
        + 
        {\color{red} \mathbbm{1}(\pi \in (\Pi_t^\ast)^c)}
        \frac{\beta}{p_t(\pi)}
    $ \\
    \cmidrule{2-2}\cmidrule{4-4}
    &
    \texttt{OCP-Unlock+} & 
    & 
    $
        {\color{red} \mathbbm{1}(\pi \in \Pi_t^\ast)} 
        \left\{ 
            \frac{g_t(\pi)}{{\color{red} \sum_{\tilde{\pi} \in \Pi_t^\ast}p_t(\tilde{\pi})}} + \frac{\beta}{{\color{blue} \sum_{\tilde{\pi} \in \Pi_t^\ast}p_t(\tilde{\pi})}} 
        \right\}
        + 
        {\color{red} \mathbbm{1}(\pi \in (\Pi_t^\ast)^c)}
        \left\{
            {\color{blue} g_t(\pi)}
            +
            \frac{\beta}{{\color{blue} \sum_{\tilde{\pi} \leq \pi}p_t(\tilde{\pi})}}
        \right\}
    $ \\
    \bottomrule
  \end{tabular}}

  \vspace{-2ex}
\end{table}

\begin{table}[hbt!]
  \vspace{-1ex}
  \caption{Comparison of the unlocking set $\Pi_t(\pi_t)$ and the biased gain estimator $\tilde{g}_t(\pi \mid \Pi_t(\pi_t))$ for each method when $m_t(\pi_t)=1$}
  \label{tab:biased_gain_estimator_comparison_mt_1}
  \vspace{-2ex}
  \centering
  \small
  \setlength{\tabcolsep}{6pt}
  \setlength{\arrayrulewidth}{0.4pt}

  % struts for nicer header spacing
  \newcommand{\Tstrut}{\rule{0pt}{2.4ex}} % top strut
  \newcommand{\Bstrut}{\rule[-1.1ex]{0pt}{0pt}} % bottom strut (optional)

  \scalebox{0.72}{\begin{tabular}{c | c | c | c}
    \toprule
    Feedback Type & Method & $\Pi_t(\pi_t)$ & $\tilde{g}_t(\pi | \Pi_t(\pi_t))$ \\
    \midrule
    Bandit Feedback &
    \texttt{OCP-Bandit} & 
    $\{ \pi_t \}$ & 
    $
        \mathbbm{1}(\pi = \pi_t) 
        \left\{ 
            \frac{g_t(\pi)}{p_t(\pi)} + \frac{\beta}{p_t(\pi)} 
        \right\}
        + 
        \mathbbm{1}(\pi \neq \pi_t)
        \frac{\beta}{p_t(\pi)}
    $ \\
    \midrule
    \multirow{2}{*}{\makecell{Semi-bandit\\ Feedback}} &
    \texttt{OCP-Unlock} & 
    \multirow{2}{*}{$\{ \pi \in \Pi: \pi \geq \pi_t \}$} & 
    $
        {\color{red} \mathbbm{1}(\pi \in \Pi_t(\pi_t))} 
        \left\{ 
            \frac{g_t(\pi)}{{\color{red} \sum_{\tilde{\pi} \in \Pi_t(\pi_t)}p_t(\tilde{\pi})}} + \frac{\beta}{p_t(\pi)} 
        \right\}
        + 
        {\color{red} \mathbbm{1}(\pi \in \Pi_t(\pi_t)^c)}
        \frac{\beta}{p_t(\pi)}
    $ \\
    \cmidrule{2-2}\cmidrule{4-4}
    &
    \texttt{OCP-Unlock+} & 
    &
    $
        {\color{red} \mathbbm{1}(\pi \in \Pi_t(\pi_t))} 
        \left\{ 
            {\color{blue} g_t(\pi)}
            +
            \frac{\beta}{{\color{blue} \sum_{\tilde{\pi} \leq \pi}p_t(\tilde{\pi})}}
        \right\}
        + 
        {\color{red} \mathbbm{1}(\pi \in \Pi_t(\pi_t)^c)}
        \left\{
            {\color{blue} \tilde{g}_t(\pi)}
            +
            \frac{\beta}{p_t(\pi)}
        \right\}
    $ \\
    \bottomrule
  \end{tabular}}

  \vspace{-2ex}
\end{table}

\subsection{\texttt{OCP-Bandit}}
As illustrated in Section~\ref{sec:ouralg}, \texttt{OCP-Bandit} (Algorithm~\ref{alg:exp3p-cp}) is a direct modification of \texttt{EXP3.P} (Algorithm~\ref{prelim:exp3p}) for online conformal prediction with adversarial partial feedback.
Accordingly, irrespective of the miscoverage feedback $m_t(\pi_t)$ from the chosen arm, the unlocking set is defined as a singleton set containing the chosen arm $\pi_t$:
\begin{align}
    \label{eq:unlocking_set_bandit}
    \Pi_t(\pi_t) \coloneqq \{ \pi_t \}.
\end{align}
The biased gain estimator $\tilde{g}_t(\pi | \Pi_t(\pi_t))$ is then defined as in Eq.~\ref{eq:method:exp3p_biased_gain_estimator}:
\begin{align}
    \label{eq:method:exp3p_biased_gain_estimator_v2}
    \tilde{g}_t \left( \pi \mid \Pi_t(\pi_t) \right)
    &\coloneqq
    \underbrace{\mathbbm{1}(m_t(\pi_t)=0)}_{\text{no unlocking}} \times (A)
    +
    \underbrace{\mathbbm{1}(m_t(\pi_t)=1)}_{\text{no unlocking}} \times (B),
\end{align}
where
\begin{align*}
    (A) = (B) = 
    \mathbbm{1}(\pi = \pi_t) 
    \left\{ 
        \frac{g_t(\pi)}{p_t(\pi)} + \frac{\beta}{p_t(\pi)} 
    \right\}
    + 
    \mathbbm{1}(\pi \neq \pi_t)
    %+
    \frac{\beta}{p_t(\pi)}.
\end{align*}
While applicable to the partial feedback setting, \texttt{OCP-Bandit} is tailored to the bandit feedback setting as \texttt{EXP3.P}, thereby ignoring additional information available.
Under this design, \texttt{OCP-Bandit} satisfies the following high-probability bound.

\begin{theorem}[\citeauthor{auer2002nonstochastic}, \citeyear{auer2002nonstochastic}]
\label{thm:EXP3.P-CP}
For any $\alpha \in (0, 0.5), T \in \mathbb{N}, ~ \text{and} ~ \delta \in (0, 1)$, run \texttt{OCP-Bandit} (Algorithm ~ \ref{alg:exp3p-cp}) with 
$
    % {\color{\cc}
    %     c = \frac{1}{T^{\frac{1}{6}}}, \lambda_d = T^{\frac{1}{6}}
    % },
    \beta = \sqrt{\tfrac{\ln K}{KT}},
    \gamma = 1.05 \sqrt{\tfrac{K \ln K}{T}}, ~ \text{and} ~ 
    \eta = 0.95 \sqrt{\tfrac{\ln K}{KT}}.
$
Then, with probability at least $1 - \delta$, the regret $\Reg(T)$ with respect to the loss defined in Eq.~\ref{eq:convertedloss} and Eq.~\ref{eq:method:inefficiency_term} satisfies
\begin{equation*}
    \Reg(T) 
    \le
    (\ell_{\text{max}} - \ell_{\text{min}})
    \left(
        5.15\sqrt{T K \ln K}
        +
        \sqrt{\frac{T K}{\ln K}} \ln (\delta^{-1})
    \right).
\end{equation*}
\end{theorem}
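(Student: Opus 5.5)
The plan is to reduce the statement directly to Theorem~\ref{thm:EXP3P_Bound}. The point is that \texttt{OCP-Bandit} is \emph{literally} \texttt{EXP3.P} run on the sequence of normalized gains $g_t(\pi)$ from Eq.~\ref{eq:method:true_normalized_gain}. The only work is to check that the setting meets the hypotheses of that theorem, and then to translate the bound from gains back to losses.

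First, I verify boundedness. Since $m_t(\pi)\in\{0,1\}$, the miscoverage term $d_t(\pi;\alpha)$ takes exactly two values:
\begin{itemize}
\item $\alpha+\alpha(1-\alpha)$ when $m_t(\pi)=0$;
\item $1-\alpha(1-\alpha)$ when $m_t(\pi)=1$.
\end{itemize}
The inefficiency term $a_t(\pi;\alpha,c)$ in Eq.~\ref{eq:method:inefficiency_term} is bounded for $\pi\in[0,1]$, because the denominator $1+\alpha(1-2\alpha)\pi\ge 1$. Hence every loss lies in a fixed interval $[\ell_{\min},\ell_{\max}]$ that depends only on $\alpha$, $c$ and $T$, and not on the data. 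Consequently $g_t(\pi)\in[0,1]$ for all $t$ and $\pi$.

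Second, I match the algorithms. The estimator in Eq.~\ref{eq:method:exp3p_biased_gain_estimator}, used identically in both branches $(A)=(B)$, coincides with the \texttt{EXP3.P} estimator $(g_t(\pi)\mathbbm{1}(\pi_t=\pi)+\beta)/p_t(\pi)$. The strategy update is also the same. Only $g_t(\pi_t)$ is used, and it is observable from $m_t(\pi_t)$ and $\pi_t$. So the sequence of actions generated by \texttt{OCP-Bandit} is exactly that of \texttt{EXP3.P} fed the gain vectors $(g_t(\pi))_{\pi}$.

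Next, I address adaptivity. The adversary picks $(x_t,y_t)$ based on the history, and the learner's fixed loss design is then applied. Thus $g_t$ is a measurable function of the past interaction, which is precisely the adaptive-adversary model covered by \citet{auer2002nonstochastic}. The high-probability gain-regret bound for \texttt{EXP3.P} therefore applies. It gives, with probability at least $1-\delta$,
\[
\max_\pi\sum_t g_t(\pi)-\sum_t g_t(\pi_t)\le \sqrt{TK/\ln K}\,\ln(\delta^{-1})+5.15\sqrt{TK\ln K}
\]
under the stated $\beta,\gamma,\eta$.

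Finally, I convert gains to losses. Since $g_t=(\ell_{\max}-\ell_t)/(\ell_{\max}-\ell_{\min})$ is an affine, decreasing map, we get
\[
\sum_t\ell_t(\pi_t)-\min_\pi\sum_t\ell_t(\pi)=(\ell_{\max}-\ell_{\min})\Bigl(\max_\pi\sum_t g_t(\pi)-\sum_t g_t(\pi_t)\Bigr).
\]
Multiplying through by $(\ell_{\max}-\ell_{\min})$ yields the claimed bound on $\Reg(T)$.

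The main obstacle I anticipate is the first step. The inefficiency term carries an $o(T)^{-1}$ factor that depends on $T$, so I must make sure $\ell_{\min}$ and $\ell_{\max}$ are deterministic constants fixed in advance, not data-dependent. If they were data-dependent, the normalization in $g_t$ would not be legitimate for the adaptive-adversary analysis. I would fix them explicitly from the two-case formulas above before invoking Theorem~\ref{thm:EXP3P_Bound}.
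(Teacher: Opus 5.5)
Your reduction is correct, but it is not the route the paper takes. You use Theorem~\ref{thm:EXP3P_Bound} as a black box. You check three things: that \texttt{OCP-Bandit} is literally \texttt{EXP3.P} fed the normalized gains $g_t$ (same exponential-weights update, same estimator $(g_t(\pi)\mathbbm{1}(\pi=\pi_t)+\beta)/p_t(\pi)$); that $\ell_{\min},\ell_{\max}$ are fixed in advance, so $g_t\in[0,1]$; and that $g_t$ is determined before $\pi_t$ is drawn. The claim then follows immediately, and your affine gain-to-loss conversion is exactly the identity the paper uses in Eq.~\ref{eq:exp3p:armwise_equality}.

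The paper instead re-derives the \texttt{EXP3.P} analysis from scratch. It proves the arm-wise exponential-supermartingale bound (Lemma~\ref{lem:armwise_guarantee}) and uses the identity $\Exp_{\pi\sim p_t}\tilde g_t(\pi\mid\Pi_t(\pi_t))=g_t(\pi_t)+\beta K$. It bounds the cumulant-generating-function term under $\gamma\le\frac12$ and $(1+\beta)K\eta\le\gamma$, then takes a union bound over the $K$ arms. The regime $T<\sqrt{TK/\ln K}\,\ln(\delta^{-1})+5.15\sqrt{TK\ln K}$ is disposed of by the trivial bound $\Reg(T)\le(\ell_{\max}-\ell_{\min})T$.

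Your route is shorter and makes plain that the specific loss design, including the $o(T)^{-1}$ inefficiency term, matters only through boundedness and a normalization fixed in advance. The paper's route buys a template whose three steps are reused for \texttt{OCP-Unlock} and \texttt{OCP-Unlock+}. There the estimator is no longer the \texttt{EXP3.P} one, so a black-box appeal to Theorem~\ref{thm:EXP3P_Bound} is unavailable.
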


\begin{algorithm}[t]
\caption{\texttt{OCP-Bandit}}
\label{alg:exp3p-cp}
\begin{algorithmic}[1]
    \Procedure{\textsc{OCP-Bandit}}{$\Pi, T, \eta, \gamma, \beta, {\color{\cc} \alpha, c}, (f_t)_{t=1}^T$}
    \State Initialize cumulative estimated gain: $\tilde{G}_{0}(\pi) \gets 0$ for all $\pi \in \Pi$
    \For {$t \in \{ 1, \dots, T \}$}
        \State Update strategy:
        $
            p_t(\pi) \gets (1-\gamma)\frac{\exp(\eta \tilde{G}_{t-1}(\pi))}{\sum_{\tilde{\pi} \in \Pi} \exp(\eta \tilde{G}_{t-1}(\tilde{\pi}))} 
            + 
            \gamma\frac{1}{K}
        $, where $K = |\Pi|$
        \State Sample arm: $\pi_t \sim p_t$
        \State Receive $m_t(\pi_t) \leftarrow \mathbbm{1} \left( y_t \notin \Ch_{\pi_t}(x_t) \right)$, where $\hat{C}_{\pi_t}(x_t)$ is defined as Eq.~\ref{eq:singlethres_conformal}
        \State $\Pi_t(\pi_t) \leftarrow \{ \pi_t \}$ $\Comment{\text{\textit{No unlocking}}}$
        %\State Observe loss: $\ell_t(\pi_t, \alpha) \leftarrow \textsc{ComputeLoss}( \pi_t, m_t(\pi_t), \lambda, \alpha )$

        \For{$\pi \in \Pi_t(\pi_t)$} $\Comment{\text{Evaluate $m_t(\pi)$ for all $ \pi \in \Pi_t(\pi_t)$}}$
            \State $\ell_t(\pi) \gets \textsc{ComputeLoss}(\pi, m_t(\pi), {\color{\cc} \alpha, c})$
            \State Compute normalized gain $g_{t}(\pi)$, defined as Eq.~\ref{eq:method:true_normalized_gain}
        \EndFor

        \For{$\pi \in \Pi$}
            \State Construct biased gain estimator $\tilde{g}_{t}(\pi|\Pi_t(\pi_t))$, defined as Eq.~\ref{eq:method:exp3p_biased_gain_estimator}
            \State Update cumulative gain: $\tilde{G}_{t}( \pi ) \gets \tilde{G}_{t-1}( \pi ) + \tilde{g}_{t}( \pi )$
        \EndFor
    \EndFor
    \EndProcedure
    
    \Procedure{ComputeLoss}{$\pi, m, {\color{\cc} \alpha, c}$}
        \State \textbf{return} $
        {\color{\cc}
            d_t(\pi; \alpha)
            +
            a_t(\pi; \alpha, c)
        }
        $, defined as Eq.~\ref{eq:miscoverage_term} and Eq.~\ref{eq:method:inefficiency_term}
    \EndProcedure
\end{algorithmic}
\end{algorithm}

% {\color{\cc} Note that any $\lambda_a \in \mathbb{R}$ satisfying $\lambda_d (\alpha c) + \lambda_a \leq \lambda_d(1 - \alpha c)$ can be used in \texttt{OCP-Bandit}, provided that $\lambda_a = o(\lambda_d)$.}
% All $\lambda>0$ are applicable to \texttt{OCP-Bandit}, and we set \todo {\color{red}$\lambda=1$} in our experiments. 
See Appendix~\ref{sec:proof:thm:EXP3P_Bound} for proof and additional details.

\subsection{\texttt{OCP-Unlock}}
In contrast to \texttt{OCP-Bandit}, \texttt{OCP-Unlock} (Algorithm~\ref{alg:exp3p-cp-semi}) utilizes additional information available when $m_t(\pi_t) = 0$, and leverages the monotonicity of the miscoverage term $m_t(\pi)$ with respect to the threshold $\pi \in \Pi$ to obtain additional feedback when $m_t(\pi_t) = 1$.
As such, the unlocking set $\Pi_t(\pi_t)$ is defined as in Eq.~\ref{eq:method:unlocking_set}:
\begin{align*}
    \Pi_t(\pi_t)
    \coloneqq
    \begin{cases}
        \Pi
        & \text{if } m_t(\pi_t)=0 \\[1mm]
        \{ 
            {\pi} \in \Pi
            :
            {\pi} \geq \pi_t
        \} 
        & \text{if } m_t(\pi_t)=1
    \end{cases}.
\end{align*}
The biased gain estimator $\tilde{g}_t(\pi | \Pi_t(\pi_t))$ is then defined as
\begin{align}
    \label{eq:exp3p_semi_biased_gain_estimator}
    \tilde{g}_t \left( \pi \mid \Pi_t(\pi_t) \right)
    &\coloneqq
    \underbrace{\mathbbm{1}(m_t(\pi_t)=0)}_{\text{full unlocking}} \times (A)
    +
    \underbrace{\mathbbm{1}(m_t(\pi_t)=1)}_{\text{partial unlocking}} \times (B),
\end{align}
where
\begin{align*}
    (A)
    &\coloneqq
    {\color{red} \mathbbm{1}( \pi \in \Pi_t^\ast )}
    \left\{
        \frac{
            g_t(\pi) 
        }{
            {\color{red} \sum_{\tilde{\pi} \in \Pi_t^\ast} p_t(\tilde{\pi})}    
        }
        +
        \frac{
            \beta
        }{
            p_t(\pi)    
        }
    \right\}
    +
    {\color{red} \mathbbm{1}( \pi \in (\Pi_t^\ast)^c )}
    \frac{
        \beta
    }{
        p_t(\pi)    
    }
\end{align*}
and 
\begin{align*}
    (B)
    &\coloneqq
    {\color{red} \mathbbm{1}(\pi \in \Pi_t(\pi_t))}
    \left\{
        \frac{
            g_t( \pi )
        }{
            {\color{red} \sum_{\tilde{\pi} \in \Pi_t(\pi_t)} p_t( \tilde{\pi} )}
        }
        +
        \frac{
            \beta
        }{
            p_t(\pi)    
        }
    \right\}
    +
    {\color{red} \mathbbm{1}(\pi \in \Pi_t(\pi_t)^c)}
    \frac{
        \beta
    }{
        p_t( \pi )
    }.
\end{align*}
As highlighted in \textcolor{red}{red} in Table~\ref{tab:biased_gain_estimator_comparison_mt_0} and Table~\ref{tab:biased_gain_estimator_comparison_mt_1}, note that \texttt{OCP-Unlock} is designed such that it reduces to \texttt{OCP-Bandit} under a bandit feedback setting, where the unlocking set is defined as in Eq.~\ref{eq:unlocking_set_bandit}.
The following theorem establishes a high-probability bound for \texttt{OCP-Unlock}.

\begin{theorem}
\label{thm:EXP3.P-CP-UNLOCK}
For any $\alpha \in (0, 0.5), T \in \mathbb{N}, \delta \in (0, 1)$, and $\lambda > 0$, run \texttt{OCP-Unlock} (Algorithm ~ \ref{alg:exp3p-cp-semi}) with 
$
    % {\color{\cc}
    %     c = \frac{1}{T^{\frac{1}{6}}}, \lambda_d = T^{\frac{1}{6}},
    % }
    \beta = \sqrt{\tfrac{\ln K}{KT}},
    \gamma = 1.05 \sqrt{\tfrac{K \ln K}{T}}, ~ \text{and} ~ 
    \eta = 0.95 \sqrt{\tfrac{\ln K}{KT}}.
$
Then, with probability at least $1 - \delta$, the regret $\Reg(T)$ with respect to the loss defined in Eq.~\ref{eq:convertedloss} and Eq.~\ref{eq:method:inefficiency_term} satisfies
\begin{equation*}
    \Reg(T) 
    \le
    (\ell_{\text{max}} - \ell_{\text{min}})
    \left(
        5.15\sqrt{T K \ln K}
        +
        \sqrt{\frac{T K}{\ln K}} \ln (\delta^{-1})
        +
        o(T)^{-1}T
    \right).
\end{equation*}
\end{theorem}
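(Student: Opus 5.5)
We adapt the standard high-probability analysis of \texttt{EXP3.P} (Theorem~\ref{thm:EXP3P_Bound}) to the estimator of \texttt{OCP-Unlock}. Write $W_t=\sum_{\pi}\exp(\eta\tilde G_{t-1}(\pi))$, $\hat G(\pi)=\sum_t\tilde g_t(\pi\mid\Pi_t(\pi_t))$ and $G(\pi)=\sum_t g_t(\pi)$. The regret in normalized gains is $\max_\pi G(\pi)-\sum_t g_t(\pi_t)$, and multiplying by $\ell_{\max}-\ell_{\min}$ converts it to $\Reg(T)$. The argument has three ingredients, as in \citet{auer2002nonstochastic}.

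\textbf{Upper confidence step.} We show that with probability at least $1-\delta$, simultaneously for all $\pi$,
\[
\hat G(\pi)\ge G(\pi)-\beta^{-1}\ln(K/\delta)\quad\text{(up to constants).}
\]
We use the supermartingale $\exp(\eta'(G(\pi)-\hat G(\pi)))$, with a Freedman/Bernstein-type bound applied to $\mathbbm{E}_t[\exp(\eta'(g_t-\tilde g_t))]$. The key inputs are $\mathbbm{E}_t[\tilde g_t(\pi)]\ge g_t(\pi)+\beta\,\mathbbm{E}_t[\cdot]$-type bias terms and a second-moment bound.

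For $\pi\in\Pi_t^\ast$ the conditional expectation of the $g_t$-part is
\[
\Pr_t\bigl(\pi_t\in\Pi_t^\ast\bigr)\cdot\frac{g_t(\pi)}{\sum_{\tilde\pi\in\Pi_t^\ast}p_t(\tilde\pi)}=g_t(\pi),
\]
so it is unbiased. For $\pi\notin\Pi_t^\ast$, the $g_t$-part in case (B) is nonzero only when $\pi\ge\pi_t$ and $\pi_t\notin\Pi_t^\ast$. Its expectation is $g_t(\pi)\cdot\Pr_t(\pi_t\le\pi,\ m_t(\pi_t)=1)/\sum_{\tilde\pi\in\Pi_t(\pi_t)}p_t$, which need not equal $g_t(\pi)$. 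This is where the extra term arises. Because the loss gap between these arms and the unbiased ones is controlled by $a_t$, which is $o(T)^{-1}$, we bound the cumulative bias by $T\cdot o(T)^{-1}$. This is the source of the additional $o(T)^{-1}T$ in the statement. The second moment is bounded by $\mathbbm{E}_t[\tilde g_t^2]\le 1/p_t(\pi)+O(\beta/p_t)$, since each normalizer $\sum_{\tilde\pi\in S}p_t(\tilde\pi)$ is at least $p_t(\pi)$ whenever $\pi\in S$.

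\textbf{Potential argument.} Using $p_t\ge\gamma/K$, we get $\eta\tilde g_t\le 1$ for the given $\beta,\gamma,\eta$, so $e^x\le1+x+x^2$ applies. We then bound $\ln(W_{T+1}/W_1)$ from above by
\[
\frac{\eta}{1-\gamma}\sum_t\sum_\pi p_t\tilde g_t+\frac{\eta^2}{1-\gamma}\sum_t\sum_\pi p_t\tilde g_t^2 ,
\]
and from below by $\eta\hat G(\pi^\ast)-\ln K$.

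The identities needed are:
\begin{itemize}
\item $\sum_\pi p_t(\pi)\tilde g_t(\pi)\le g_t(\pi_t)$-like quantities plus $K\beta$.
\item $\sum_\pi p_t\tilde g_t^2\le(1+\beta)\sum_\pi\tilde g_t$.
\end{itemize}
For the first, note that $\sum_{\pi\in\Pi_t^\ast}p_t(\pi)g_t(\pi)/\sum_{\Pi_t^\ast}p_t$ is a weighted average of the true gains. Its conditional expectation matches $\mathbbm{E}_t[g_t(\pi_t)]$ up to the same $o(T)^{-1}$ discrepancy, and an Azuma step converts expectation to realization.

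\textbf{Combination.} Combining the two steps with $\mathbbm{E}_t\sum_\pi\tilde g_t\le K(1+\beta)$-type bounds yields
\[
G_{\max}-\sum_t g_t(\pi_t)\le \gamma T+2\eta K T+\frac{\ln K}{\eta}+\beta^{-1}\ln(K/\delta)+\beta KT+T\,o(T)^{-1}.
\]
Plugging in the parameter choices reproduces the constants $5.15\sqrt{TK\ln K}+\sqrt{TK/\ln K}\,\ln\delta^{-1}$, exactly as in Theorem~\ref{thm:EXP3P_Bound}.

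The main obstacle is the first step. We must check that the nonstandard normalizers $\sum_{\tilde\pi\in\Pi_t(\pi_t)}p_t$ still give an estimator that is unbiased, or biased only by $o(T)^{-1}$, and whose second moment is bounded by $1/p_t(\pi)$. We would first attempt a direct case analysis over $m_t(\pi_t)$ and the position of $\pi$ relative to the threshold boundary of $\Pi_t^\ast$, exploiting the monotonicity of $m_t$ in $\pi$.
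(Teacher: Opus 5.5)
Your skeleton is the paper's. It has four parts: a bound on $\mathbbm{E}_{\pi\sim p_t}\tilde g_t(\pi\mid\Pi_t(\pi_t))$, an arm-wise exponential-supermartingale lemma, the potential argument with $\eta\tilde g_t\le 1$ and $\omega_t\le p_t/(1-\gamma)$, and a union bound.

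The concrete gap is in your first identity. You pass to conditional expectations and then invoke Azuma. That adds a deviation term of order $\sqrt{T\ln(1/\delta)}$, plus a split of $\delta$, which appears neither in your final display nor in the statement. The missing observation is that the bound holds pointwise, for every realization of $\pi_t$; this is how the paper obtains Eq.~\ref{eq:unlock_useful_inequality_combined}:
\begin{itemize}
\item If $m_t(\pi_t)=0$, then $\sum_\pi p_t(\pi)\tilde g_t(\pi\mid\Pi_t(\pi_t))$ is a $p_t$-weighted average of $g_t$ over $\Pi_t^\ast$, plus $K\beta$. On $\Pi_t^\ast$ the losses differ only through $a_t$, so every such gain is at most $g_t(\pi_t)+o(T)^{-1}$ (Eq.~\ref{eq:useful_inequality1}).
\item If $m_t(\pi_t)=1$, it is a weighted average over $\{\pi\ge\pi_t\}\subset(\Pi_t^\ast)^c$, plus $K\beta$. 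There $\ell_t$ is nondecreasing in $\pi$, so the average is at most $g_t(\pi_t)$ (Eq.~\ref{eq:useful_inequality2}).
\end{itemize}
This is the source of the paper's $o(T)^{-1}T$ term, and no Azuma step is needed.

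A related problem: a bound of order $K(1+\beta)$ on $\mathbbm{E}_t\sum_\pi\tilde g_t$ cannot be used in a high-probability argument without a further concentration step. Use instead the deterministic $\sum_t\sum_\pi\tilde g_t=\sum_\pi\tilde G_T(\pi)\le K\max_\pi\tilde G_T(\pi)$, and absorb it into the left-hand side. That yields $(1+\beta)\eta KT$. Your $2\eta KT$ gives a leading constant of about $6.0$, not $5.15$.

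Your concern about bias in the upper-confidence step is well founded, and here you are more careful than the paper. For $\pi\in(\Pi_t^\ast)^c$, the gain part of the \texttt{OCP-Unlock} estimator has conditional mean $g_t(\pi)\sum_{\tilde\pi\in(\Pi_t^\ast)^c,\,\tilde\pi\le\pi}\bigl(p_t(\tilde\pi)/\sum_{\pi'\ge\tilde\pi}p_t(\pi')\bigr)$. This can be strictly below $g_t(\pi)$, for instance at the smallest element of $(\Pi_t^\ast)^c$. Step~3-2-1 in the proof of Lemma~\ref{lem:unlock_armwise_guarantee_new} sums over all $\tilde\pi\in(\Pi_t^\ast)^c$, which drops the indicator $\tilde\pi\le\pi$.

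However, your stated reason that this bias is small is wrong. The loss gap between arms in $(\Pi_t^\ast)^c$ and arms in $\Pi_t^\ast$ is $(1-2\alpha)(1-\alpha)$ up to $o(T)^{-1}$ corrections, which is of order one. What actually works is that the per-round bias is at most $g_t(\pi)$ itself. Moreover, $g_t(\pi)=O(o(T)^{-1})$ for every $\pi\notin\Pi_t^\ast$, because all such losses lie within $c\alpha\,o(T)^{-1}$ of $\ell_{\max}$ (Appendix~\ref{para:gain_property}, Case~1). So subtract the predictable term $\beta g_t(\pi)\mathbbm{1}(\pi\notin\Pi_t^\ast)$ in the supermartingale exponent. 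Its sum is $O(T\,o(T)^{-1})$ and merges with the other $o(T)^{-1}T$ term.

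The same smallness rescues your second-moment claim. For such $\pi$, the gain part collects contributions from several $\tilde\pi\le\pi$ with different normalizers. The fact that each normalizer is at least $p_t(\pi)$ therefore only yields a bound of $K/p_t(\pi)$. A telescoping argument improves this to $2/p_t(\pi)$, which is harmless here only because of the small factor $g_t(\pi)^2$.
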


\begin{algorithm}[th]
\caption{\texttt{OCP-Unlock}}
\label{alg:exp3p-cp-semi}
\begin{algorithmic}[1]
    \Procedure{\textsc{OCP-Unlock}}{$\Pi, T, \eta, \gamma, \beta, {\color{\cc} \alpha, c}, (f_t)_{t=1}^T$}
    \State Initialize cumulative estimated gain: $\tilde{G}_{0}(\pi) \gets 0$ for all $\pi \in \Pi$
    \For {$t \in \{ 1, \dots, T \}$}
        \State Update strategy:
        $
            p_t(\pi) \gets (1-\gamma)\frac{\exp(\eta \tilde{G}_{t-1}(\pi))}{\sum_{\tilde{\pi} \in \Pi} \exp(\eta \tilde{G}_{t-1}(\tilde{\pi}))} 
            + 
            \gamma\frac{1}{K}
        $, where $K = |\Pi|$
        \State Sample arm: $\pi_{t} \sim p_t$
        \State Receive $m_t(\pi_t) \leftarrow \mathbbm{1} \left( y_t \notin \Ch_{\pi_t}(x_t) \right)$, where $\hat{C}_{\pi_t}(x_t)$ is defined as Eq.~\ref{eq:singlethres_conformal}

        \If{ $m_t(\pi_t) = 0$ } 
        $\Comment{\text{\textit{Full unlocking}: Observe the true label $y_t$}}$
            \State $\Pi_t(\pi_t) \gets \Pi$    
        \Else 
        $\Comment{\text{\textit{Partial unlocking}}}$
            \State $\Pi_t(\pi_t) \gets \{ \pi \in \Pi \mid \pi \ge \pi_t \}$
        \EndIf
        
        \For{$\pi \in \Pi_t(\pi_t)$} $\Comment{\text{Evaluate $m_t(\pi)$ for all $ \pi \in \Pi_t(\pi_t)$}}$
            \State $\ell_t(\pi) \gets \textsc{ComputeLoss}(\pi, m_t(\pi), {\color{\cc} \alpha, c})$
            \State Compute normalized gain $g_{t}(\pi)$, defined as Eq.~\ref{eq:method:true_normalized_gain}
        \EndFor

        \For{$\pi \in \Pi$}
            \State Construct biased gain estimator $\tilde{g}_{t}(\pi|\Pi_t(\pi_t))$, defined as Eq.~\ref{eq:exp3p_semi_biased_gain_estimator}
            \State Update cumulative gain: $\tilde{G}_{t}( \pi ) \gets \tilde{G}_{t-1}( \pi ) + \tilde{g}_{t}( \pi )$
        \EndFor
    \EndFor
    \EndProcedure
    
    \Procedure{ComputeLoss}{$\pi, m, {\color{\cc} \alpha, c}$}
        \State \textbf{return} $
        {\color{\cc}
            d_t(\pi; \alpha)
            +
            a_t(\pi; \alpha, c)
        }
        $, defined as Eq.~\ref{eq:miscoverage_term} and Eq.~\ref{eq:method:inefficiency_term}
    \EndProcedure
\end{algorithmic}
\end{algorithm}
% {\color{\cc}
% Similar to \texttt{OCP-Bandit}, any $\lambda_a \in \mathbb{R}$ satisfying $\lambda_d (\alpha c) + \lambda_a \leq \lambda_d(1 - \alpha c)$ can be used in \texttt{OCP-Unlock}.}
See Appendix~\ref{sec:proof:thm:EXP3P-SEMI_Bound} for proof and additional details.

\subsection{\texttt{OCP-Unlock+}}
% As illustrated in Section~\ref{sec:ouralg}, \texttt{OCP-Unlock+} (Algorithm~\ref{alg:exp3p-cp-unlock}) fully exploits the additional information available when $m_t(\pi_t) = 0$, and leverages the monotonicity of the miscoverage term $m_t(\pi)$ with respect to the threshold $\pi \in \Pi$ to obtain additional feedback when $m_t(\pi_t) = 1$.
\texttt{OCP-Unlock+} (Algorithm~\ref{alg:exp3p-cp-unlock}) uses the same definition of the unlocking set $\Pi_t(\pi_t)$ (Eq.~\ref{eq:method:unlocking_set}) as \texttt{OCP-Unlock}:
\begin{align*}
    \Pi_t(\pi_t)
    \coloneqq
    \begin{cases}
        \Pi
        & \text{if } m_t(\pi_t)=0 \\[1mm]
        \{ 
            {\pi} \in \Pi
            :
            {\pi} \geq \pi_t
        \} 
        & \text{if } m_t(\pi_t)=1
    \end{cases}.
\end{align*}
However, we further strengthen the \textit{miscoverage-first} rationale in the design of the biased gain estimator $\tilde{g}_t(\pi | \Pi_t(\pi_t))$ (Eq.~\ref{eq:method:unlock:tilde_g_t}), defined as follows:
\begin{align*}
    % \label{eq:exp3p_semi_biased_gain_estimator}
    \tilde{g}_t \left( \pi \mid \Pi_t(\pi_t) \right)
    &\coloneqq
    \underbrace{\mathbbm{1}(m_t(\pi_t)=0)}_{\text{full unlocking}} \times (A)
    +
    \underbrace{\mathbbm{1}(m_t(\pi_t)=1)}_{\text{partial unlocking}} \times (B),
\end{align*}
where
\begin{align*}
    (A)
    =
    {\color{red} \mathbbm{1}(\pi \in \Pi_t^\ast)} 
    \left\{ 
        \frac{g_t(\pi)}{{\color{red} \sum_{\tilde{\pi} \in \Pi_t^\ast}p_t(\tilde{\pi})}} + \frac{\beta}{{\color{blue} \sum_{\tilde{\pi} \in \Pi_t^\ast}p_t(\tilde{\pi})}} 
    \right\}
    + 
    {\color{red} \mathbbm{1}(\pi \in (\Pi_t^\ast)^c)}
    \left\{
        {\color{blue} g_t(\pi)}
        +
        \frac{\beta}{{\color{blue} \sum_{\tilde{\pi} \leq \pi}p_t(\tilde{\pi})}}
    \right\}
\end{align*}
and
\begin{align*}
    (B)
    =
    {\color{red} \mathbbm{1}(\pi \in \Pi_t(\pi_t))} 
    \left\{ 
        {\color{blue} g_t(\pi)}
        +
        \frac{\beta}{{\color{blue} \sum_{\tilde{\pi} \leq \pi}p_t(\tilde{\pi})}}
    \right\}
    + 
    {\color{red} \mathbbm{1}(\pi \in \Pi_t(\pi_t)^c)}
    \left\{
        {\color{blue} \tilde{g}_t(\pi)}
        +
        \frac{\beta}{p_t(\pi)}
    \right\}.
\end{align*}
The main difference from \texttt{OCP-Unlock} is that \texttt{OCP-Unlock+} additionally introduces a pseudo-gain $\tilde{g}_t(\pi)$ when $m_t(\pi_t) = 1$.
Unlike \texttt{OCP-Unlock}, the introduction of $\tilde{g}_t(\pi)$ ensures $\tilde{g}_t(\pi_1 | \Pi_t(\pi_t)) \geq \tilde{g}_t(\pi_2 | \Pi_t(\pi_t))$ for all $\pi_1 \in \Pi_t(\pi_t)^c$ and $\pi_2 \in \Pi_t(\pi_t)$ when $m_t(\pi_t) = 1$, which aligns with the monotonicity property that the miscoverage term decreases as $|\hat{C}_\pi(x_t)|$ increases.
Despite the use of additional information under a partial feedback setting, the lack of the pseudo-gain results in the misalignment with the monotonicity property, which may be the cause of sub-optimal empirical performance of \texttt{OCP-Unlock} in terms of approaching the target coverage level (Section~\ref{sec:exp}).

In addition, for both cases when $m_t(\pi_t) = 0$ and $m_t(\pi_t) = 1$, importance weights on both normalized gains (pseudo-gains) and exploration parameter $\beta$ are designed such that both gain and exploration terms for $\pi \in \Pi_t^\ast$ are larger than those from $\pi \in (\Pi_t^\ast)^c$.
Furthermore, for $\pi \notin \Pi_t^\ast$, we further strengthen this miscoverage-first rationale by placing more weight on the exploration term $\beta$ for larger conformal sets through the denominator $\sum_{\tilde{\pi} \leq \pi} p_t(\tilde{\pi})$.
See Figure~\ref{fig:methods} for illustration.
\begin{figure}[H]
    \centering
    \subfigure[Full unlocking when $m_t(\pi_t) = 0$]{
        \includegraphics[width=1.0\textwidth]{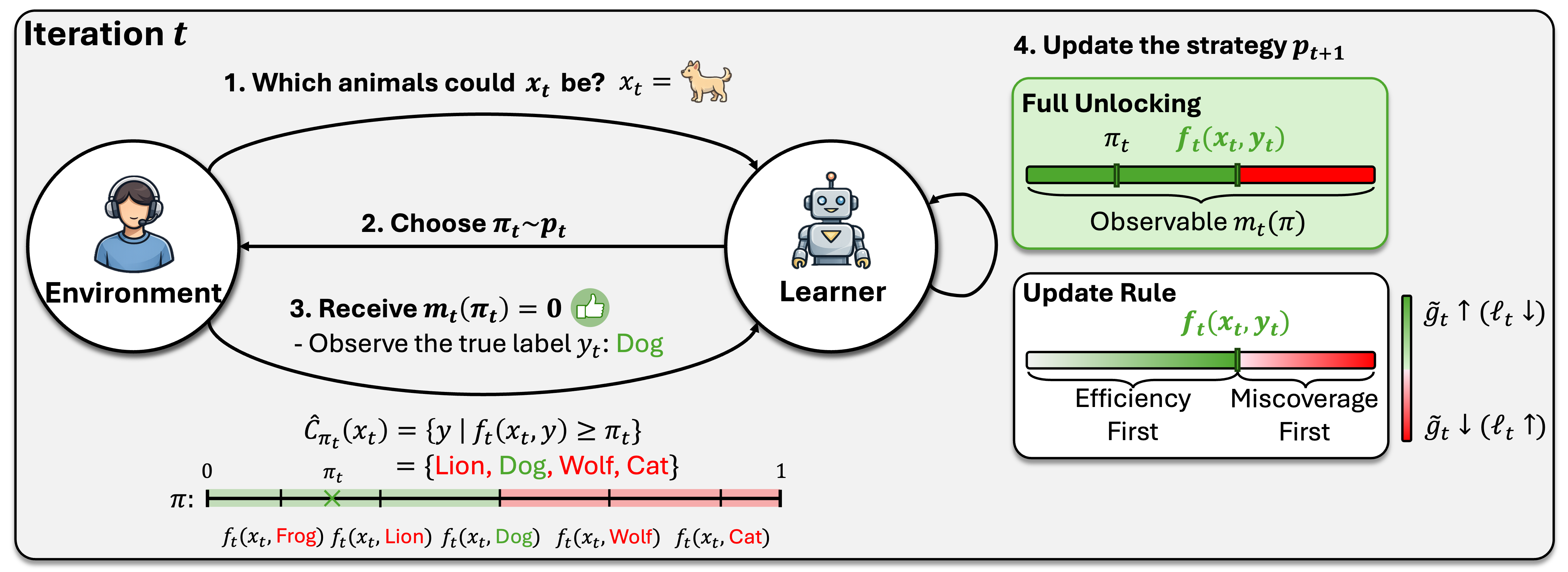}
    }
    \subfigure[Partial unlocking when $m_t(\pi_t) = 1$]{
        \includegraphics[width=1.0\textwidth]{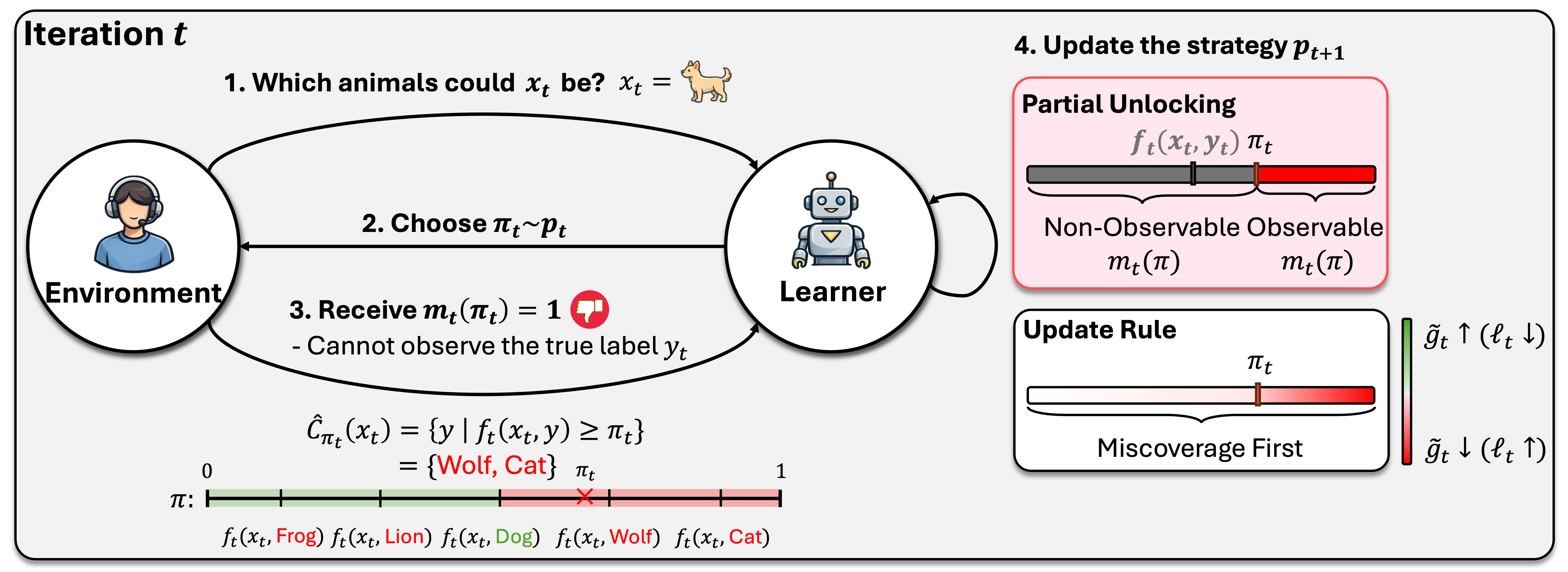}
    }
    \vspace{-1.2em}
    \caption{
        Overview of \texttt{OCP-Unlock+} for online conformal prediction with semi-bandit feedback
    }
    \vspace{-1.5em}
    \label{fig:methods}
\end{figure}
We now state a high-probability bound for \texttt{OCP-Unlock+}.
\begin{theorem}
\label{thm:EXP3.P-CP-UNLOCK-PS}
For any $\alpha \in (0, 0.5), T \in \mathbb{N}, K \geq 5, ~ \text{and}~ \delta \in (0, 1)$, run \texttt{OCP-Unlock+} (Algorithm ~ \ref{alg:exp3p-cp-unlock}) with 
$
    % {\color{\cc}
    %     c = \frac{1}{T^{\frac{1}{6}}}, \lambda_d = T^{\frac{1}{6}}
    % },
    \beta = \sqrt{\tfrac{\ln K}{KT}},
    \gamma = 1.05 \sqrt{\tfrac{K \ln K}{T}}, ~ \text{and} ~ 
    \eta = 0.95 \sqrt{\tfrac{\ln K}{KT}}.
$
Then, with probability at least $1 - \delta$, the regret $\Reg(T)$ with respect to the loss defined in Eq.~\ref{eq:convertedloss} and Eq.~\ref{eq:method:inefficiency_term} satisfies
\begin{equation*}
    \Reg(T) 
    \le
    (\ell_{\text{max}} - \ell_{\text{min}})
    \left(
        \sqrt{T C \ln K}
        +
        4.15\sqrt{T K \ln K}
        +
        \sqrt{\frac{T K}{\ln K}} \ln (\delta^{-1})
        +
        2o(T)^{-1}T
    \right),
\end{equation*}
\end{theorem}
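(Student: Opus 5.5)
The plan is to follow the standard \texttt{EXP3.P} analysis of \cite{auer2002nonstochastic}, as used for Theorem~\ref{thm:EXP3.P-CP}, and to adapt each of its three ingredients to the estimator $\tilde{g}_t(\pi \mid \Pi_t(\pi_t))$ of Eq.~\ref{eq:method:unlock:tilde_g_t}. Write $W_t = \sum_{\pi} \exp(\eta \tilde{G}_{t}(\pi))$ and $G_T(\pi) = \sum_t g_t(\pi)$. The three ingredients are:
\begin{enumerate}
\item a potential argument giving a lower bound on $\ln(W_T/W_0)$ in terms of $\max_\pi \tilde{G}_T(\pi) - \ln K$;
\item an upper bound on $\ln(W_t/W_{t-1})$ via $e^x \le 1 + x + x^2$ for $x \le 1$, producing $\frac{\eta}{1-\gamma}\sum_\pi p_t(\pi)\tilde{g}_t(\pi) + \frac{\eta^2}{1-\gamma}\sum_\pi p_t(\pi)\tilde{g}_t(\pi)^2$;
\item a high-probability statement that $\tilde{G}_T(\pi) \ge G_T(\pi) - \beta$-dependent slack for every $\pi$.
\end{enumerate}
Combining these and converting gains back to losses via Eq.~\ref{eq:method:true_normalized_gain} multiplies everything by $\ell_{\text{max}}-\ell_{\text{min}}$.

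\textbf{Step 1.} I would check that $\eta \tilde{g}_t(\pi) \le 1$. The denominators $p_t(\pi)$, $\sum_{\tilde\pi \in \Pi_t^\ast} p_t(\tilde\pi)$ and $\sum_{\tilde\pi\le\pi}p_t(\tilde\pi)$ are all at least $\gamma/K$. The additive offset $1\cdot\beta$ and the unweighted terms $g_t$ and $\tilde{g}_t$ are $O(1)$, and $\eta\cdot O(1)$ is small, so the inequality holds for the stated constants once $K\ge 5$. This is presumably where that hypothesis enters.

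\textbf{Step 2.} For the first-order term $\sum_\pi p_t(\pi)\tilde{g}_t(\pi)$, I would bound it by $g_t(\pi_t)$ plus corrections, case by case. When $m_t(\pi_t)=0$, the importance weights over $\Pi_t^\ast$ average out, since $\sum_{\pi\in\Pi_t^\ast}p_t(\pi)/\sum_{\tilde\pi \in \Pi_t^\ast} p_t(\tilde\pi)=1$. The $\beta$ terms contribute at most $\beta(1+\sum_\pi p_t(\pi)/\sum_{\tilde\pi\le\pi}p_t(\tilde\pi)) \le \beta(2+\ln(1/p_{\min}))$-type quantities. The unweighted $g_t$, $\tilde{g}_t$ terms contribute $O(1)$, but with a mismatch relative to $g_t(\pi_t)$ that I expect to be controlled by the $o(T)^{-1}$ inefficiency scale. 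Indeed, on $\Pi_t^\ast$ and on the complement the losses differ from each other only through $a_t$. Likewise the pseudo-gain $\tilde{g}_t(\pi)$ differs from the true $g_t(\pi)$ only by $O(o(T)^{-1})$ when $\pi \notin \Pi_t^\ast$, and is an overestimate when $\pi \in \Pi_t^\ast$. Summed over $T$ rounds, these corrections yield the $2o(T)^{-1}T$ term. The second-order term $\sum_\pi p_t \tilde{g}_t^2$ is handled analogously. Its weighted parts give the usual $K$-type contribution, and the partial-sum denominators give a bounded sum which I would absorb into a constant $C$; this is the source of $\sqrt{TC\ln K}$. The improvement of $5.15$ to $4.15$ comes from one of the $\sqrt{TK\ln K}$ pieces being replaced by this $C$-term.

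\textbf{Step 3.} For the concentration step, I would apply the \texttt{EXP3.P} martingale lemma to $\hat{U}_T(\pi)=\tilde{G}_T(\pi)$ versus $G_T(\pi)$. The argument is the supermartingale bound on $\exp(\eta(G_T-\tilde{G}_T))$, with a union bound over $K$ arms, giving $\tilde{G}_T(\pi)\ge G_T(\pi)-\ln(K/\delta)/\beta$ with probability $1-\delta$. Here the key point is that each $\tilde{g}_t(\pi)$ is conditionally at least as large in expectation as $g_t(\pi)$ up to the $o(T)^{-1}$ pseudo-gain discrepancy, and has conditional second moment at most $1/(\text{denominator})$, which is what the $\beta/(\cdot)$ bonus is designed to dominate. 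Plugging in $\beta,\gamma,\eta$ then produces $\sqrt{TK/\ln K}\ln\delta^{-1}$.

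\textbf{Main obstacle.} I expect the hard step to be the unbiasedness and second-moment claims in Step 3 for the non-importance-weighted and pseudo-gain components. These are not unbiased. I would treat them by exhibiting an explicit one-sided inequality $\mathbb{E}_t[\tilde{g}_t(\pi)] \ge g_t(\pi) - O(o(T)^{-1})$, using the monotonicity of $m_t$ in $\pi$, and charge the bias to the second $o(T)^{-1}T$ term.
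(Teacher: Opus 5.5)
Your skeleton matches the paper's: a per-round comparison $\mathbb{E}_{\pi\sim p_t}\tilde g_t(\pi\mid\Pi_t(\pi_t))\le(1+o(T)^{-1})g_t(\pi_t)+o(T)^{-1}+C_t\beta$, an arm-wise concentration lemma, and the standard potential argument using $\eta\tilde g_t\le 1$ and $\tilde g_t(\pi)\le(1+2\beta)/p_t(\pi)$. The genuine problem is the step you flag as the main obstacle, because it rests on a sign error.

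The pseudo-loss $\tilde\ell_t(\pi)$ is exactly the loss $\pi$ would incur \emph{if it miscovered}. Hence $\tilde g_t(\pi)=g_t(\pi)$ exactly for $\pi\notin\Pi_t^\ast$. For a covering arm $\pi\in\Pi_t^\ast$ that gets locked (that is, $m_t(\pi_t)=1$, so $\pi<\pi_t$ by monotonicity), the pseudo-gain is an \emph{under}estimate: $g_t(\pi)-\tilde g_t(\pi)=\big((1-\alpha)(1-2\alpha)+O(o(T)^{-1})\big)/\ell_{\text{diff}}$, which is close to $1$. So there is no ``$o(T)^{-1}$ pseudo-gain discrepancy'' to charge to the $2o(T)^{-1}T$ term. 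Bounding the bias branch by branch, as you propose, leaves an order-one slack per round, i.e.\ linear regret.

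The missing idea is that no bias has to be absorbed at all. The event $\{m_t(\pi_t)=0\}$ is exactly $\{\pi_t\in\Pi_t^\ast\}$, whose conditional probability is $P_t^\ast:=\sum_{\tilde\pi\in\Pi_t^\ast}p_t(\tilde\pi)$. The $1/P_t^\ast$ weight in (A) therefore makes the full-unlocking branch alone conditionally unbiased for $g_t(\pi)$, $\pi\in\Pi_t^\ast$. On the complementary event such a $\pi$ is locked and receives the nonnegative $\tilde g_t(\pi)$, which is pure surplus. So the gain part has conditional mean $g_t(\pi)+(1-P_t^\ast)\tilde g_t(\pi)\ge g_t(\pi)$, giving $\mathbb{E}_t\Delta_t(\pi)\le 0$. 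For $\pi\notin\Pi_t^\ast$ the gain part equals $g_t(\pi)$ deterministically, so $\Delta_t(\pi)\equiv 0$.

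The order-one mismatch enters only the second moment. The paper bounds that by $\beta^2/P_t^\ast+\beta^2$, which is dominated by the bonus $\beta/P_t^\ast+\beta$ (or $\beta/p_t(\pi)+\beta$ in the locked branch). This is the role of the offset $1\cdot\beta$ that you left unexplained. The lemma then reads $\sum_t g_t(\pi)\le\sum_t\tilde g_t(\pi\mid\Pi_t(\pi_t))+\ln(\delta^{-1})/\beta$, with no bias term. Note that the exponential supermartingale must be scaled by $\beta$, not $\eta$. Both $o(T)^{-1}T$ terms come from the first-order comparison: one from the additive $o(T)^{-1}$, the other from the factor $1+o(T)^{-1}$ on $g_t(\pi_t)$.

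Your bookkeeping of $C$ is also off. $C$ arises from the $\beta$-terms of the first-order comparison, not from the second-order term, which gives the usual $(1+2\beta)\eta KT$. In the partial-unlocking case the locked arms carry full bonuses $\beta/p_t(\pi)$, summing to $\beta|\Pi_t(\pi_t)^c|$ (up to $(K-1)\beta$). Your bound $\beta(2+\ln(1/p_{\min}))$ therefore fails there, which is why $C_t$ contains $|\Pi_t(\pi_t)^c|$.
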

where $C$ is the constant defined in Eq.~\ref{eq:s-exp3p-cp:generalbound-new}. See Appendix~\ref{appdx:proof:exp3p-cp-unlock} for proof and additional details.

% Note that \texttt{OCP-Unlock+} achieves the tightest bound as long as $
%     (
%      \sqrt{K} - \sqrt{C}
%     ) 
%     \sqrt{\ln K}
%     > 
%     2
% $.
{\color{\cc}
Note that \texttt{OCP-Unlock+} achieves the tightest bound as long as $
    (
     \sqrt{K} - \sqrt{C}
    ) 
    \sqrt{T\ln K}
    > 
    2 o(T)^{-1} T
$.}
In addition, using the same argument based on Lemma~\ref{lemma:regret_to_coverage} and Theorem~\ref{thm:EXP3.P-CP-UNLOCK-PS} to derive the high-probability upper bound of $\mc(T) - \alpha$ (Theorem~\ref{thm:unlocking_regret_bound}), 
we can show that both long-run coverage results of \texttt{OCP-Bandit} and \texttt{OCP-Unlock} are at least $1 - \alpha$ with high-probability.

\clearpage

\section{Proof of Theorem \ref{thm:EXP3.P-CP}}
\label{sec:proof:thm:EXP3P_Bound}
The proof is based on that of Theorem~\ref{prelim:exp3p} \citep{auer2002nonstochastic}, 
where we have slightly modified the original proof to accommodate arbitrary bounded loss functions.
Specifically, we first provide an arm-wise high-probability bound (Lemma~\ref{lem:armwise_guarantee}), which is then used to derive Theorem~\ref{thm:EXP3.P-CP}.

Note that both the proofs of \texttt{OCP-Unlock} (Theorem~\ref{thm:EXP3.P-CP-UNLOCK}) and \texttt{OCP-Unlock+} (Theorem~\ref{thm:EXP3.P-CP-UNLOCK-PS}) follow a similar structure to that of \texttt{OCP-Bandit} (Theorem~\ref{thm:EXP3.P-CP}), accompanied by additional technical details due to the introduction of the unlocking set $\Pi_t(\pi_t)$ and the modification of the biased gain estimator $\tilde{g}_t(\pi | \Pi_t(\pi_t))$ accordingly.
See Appendix~\ref{appdx:proof:useful_properties} and Appendix~\ref{appdx:proof:exp3p-cp-unlock} for detail.

First, we show that the biased gain estimator $\tilde{g}_{t}(\pi | \Pi_t(\pi_t))$ of \texttt{OCP-Bandit} (Eq.~\ref{eq:method:exp3p_biased_gain_estimator_v2}) satisfies the following high-probability inequality for all $\pi \in \Pi$.
Note that Eq.~\ref{eq:method:exp3p_biased_gain_estimator_v2} is equivalent to Eq.~\ref{eq:method:exp3p_biased_gain_estimator}, which is re-expressed to clarify the difference between the biased gain estimators of \texttt{OCP-Unlock} and \texttt{OCP-Unlock+}, as detailed in Appendix~\ref{appdx:baseline_comparisons}.

\begin{lemma}\label{lem:armwise_guarantee}
    Let $\beta \le 1$ and $\delta \in (0, 1)$. 
    Then, for each $\pi \in \Pi$, the following holds with probability at least $1 - \delta$:
    \begin{equation*}
        \sum_{t=1}^T g_{t}(\pi) 
        \le 
        \sum_{t=1}^T \tilde{g}_{t}(\pi | \Pi_t(\pi_t)) 
        + 
        \frac{\ln(\delta^{-1})}{\beta}.
    \end{equation*}
\end{lemma}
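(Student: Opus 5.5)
This is the standard \texttt{EXP3.P} exponential-supermartingale argument (Lemma 3.1 in \cite{auer2002nonstochastic}), with the observed gains normalized as $g_t(\pi)\in[0,1]$ via Eq.~\ref{eq:method:true_normalized_gain}. Fix $\pi \in \Pi$ and let $\mathcal{F}_{t-1}$ be the history up to the start of round $t$. This history includes the adaptive adversary's choice of $(x_t,y_t)$, so $p_t$ and $g_t(\cdot)$ are $\mathcal{F}_{t-1}$-measurable; only $\pi_t \sim p_t$ is random given $\mathcal{F}_{t-1}$. For the \texttt{OCP-Bandit} estimator (Eq.~\ref{eq:method:exp3p_biased_gain_estimator_v2}) write $\tilde g_t(\pi) = \hat g_t(\pi) + \beta/p_t(\pi)$ with $\hat g_t(\pi) = \mathbbm{1}(\pi_t=\pi) g_t(\pi)/p_t(\pi)$.

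Define
\[
Z_t = \exp\Bigl(\beta \sum_{s=1}^t \bigl(g_s(\pi) - \tilde g_s(\pi)\bigr)\Bigr), \qquad Z_0 = 1 .
\]
The plan is to show that $\Exp[Z_t \mid \mathcal{F}_{t-1}] \le Z_{t-1}$. Then $\Exp[Z_T] \le 1$, and Markov's inequality gives $\Pr(Z_T > \delta^{-1}) \le \delta$. Taking logarithms and dividing by $\beta$ gives exactly the claim.

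For the one-step bound, note that $\Exp[Z_t \mid \mathcal{F}_{t-1}] / Z_{t-1} = e^{-\beta^2/p_t(\pi)}\, \Exp\bigl[\exp(\beta(g_t(\pi) - \hat g_t(\pi))) \mid \mathcal{F}_{t-1}\bigr]$. Since $\hat g_t(\pi) \le 1/p_t(\pi)$ and $g_t(\pi) \le 1$, while $\beta \le 1$ and we may assume $p_t(\pi) \ge \beta$, the exponent $\beta(g_t - \hat g_t)$ is at most $1$. We therefore use $e^x \le 1 + x + x^2$ for $x \le 1$.

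Taking the conditional expectation, the linear term vanishes because $\Exp[\hat g_t(\pi) \mid \mathcal{F}_{t-1}] = g_t(\pi)$. The quadratic term is bounded by $\beta^2 \Exp[\hat g_t(\pi)^2 \mid \mathcal{F}_{t-1}] \le \beta^2 / p_t(\pi)$. Hence the conditional expectation of $\exp(\beta(g_t - \hat g_t))$ is at most $1 + \beta^2/p_t(\pi) \le e^{\beta^2/p_t(\pi)}$, which cancels the prefactor $e^{-\beta^2/p_t(\pi)}$.

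The main obstacle is justifying the range condition for $e^x \le 1+x+x^2$ with $x \le 1$. The cleanest route avoids needing $p_t(\pi) \ge \beta$ by handling the exponent directly. The term $\beta\hat g_t(\pi)$ enters with a negative sign, so $x = \beta(g_t - \hat g_t) \le \beta g_t \le 1$ holds automatically. The inequality $e^x \le 1 + x + x^2$ is valid for every $x \le 1$, including large negative $x$, so no lower bound on $p_t(\pi)$ is needed.

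The rest of the argument is bookkeeping: the tower property over $t = 1,\dots,T$ gives $\Exp[Z_T] \le 1$, and the measurability of $g_t$ under an adaptive adversary is already built into the filtration, which keeps the argument valid against adaptive adversaries.
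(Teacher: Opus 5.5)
Your proposal is correct and is essentially the paper's proof. The paper sets $\Delta_t(\pi)=\beta g_t(\pi)-\beta g_t(\pi)\mathbbm{1}(\pi_t=\pi)/p_t(\pi)\le 1$ and applies $e^x\le 1+x+x^2$ with $\Exp_t[\Delta_t(\pi)]=0$ and $\Exp_t[\Delta_t(\pi)^2]\le \beta^2/p_t(\pi)$ to get $\Exp_t[\exp(\Delta_t(\pi)-\beta^2/p_t(\pi))]\le 1$; it then chains this via the tower rule and concludes with Markov's inequality, exactly as in your supermartingale argument (your explicit filtration and your remark that $x\le\beta g_t(\pi)\le 1$ needs no lower bound on $p_t(\pi)$ just make precise what the paper uses implicitly).
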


\begin{proof}
Let $\Exp_t$ be the expectation with respect to the distribution $p_t$, from which $\pi_t$ is sampled.
Since $\exp(x)\le 1+x+x^2$ for $x\le1$, for $\beta\le1$, by letting
$
    \Delta_t(\pi) \coloneqq \beta g_{t}(\pi) - \frac{\beta g_{t}(\pi)\mathbbm{1}( \pi_t = \pi )}{p_t(\pi)} \le 1, 
$ 
we have
\begin{align*}
\Exp_t\left[ \exp\left( \Delta_t(\pi) - \frac{\beta^2}{p_t(\pi)} \right) \right]
&\le \Bigl( 1 + \Exp_t[\Delta_t(\pi)] + \Exp_t[\Delta_t(\pi)^2] \Bigr)
      \exp\left( -\frac{\beta^2}{p_t(\pi)} \right) \\
&\le \left( 1 + \frac{\beta^2 g_{t}(\pi)^2}{p_t(\pi)} \right)
      \exp\left( -\frac{\beta^2}{p_t(\pi)} \right) \\
&\le \left( 1 + \frac{\beta^2}{p_t(\pi)} \right) \exp\left( -\frac{\beta^2}{p_t(\pi)} \right) ~ (\because ~ g_t(\cdot) \in [0, 1]) \\
&\le 1 ~ (\because 1 + u \leq \text{exp}(u)).
\end{align*}

By sequentially applying the double expectation rule for $t=T, \dots, 1$, 
\begin{equation}\label{eq:lem:armwise_guarantee-1}
    \Exp \exp\left[ \sum_{t=1}^T \left( \Delta_t(\pi) - \frac{\beta^2}{p_t(\pi)} \right) \right] \le 1.
\end{equation}

Moreover, from the Markov's inequality, we have $\mathbb{P}\left( X > \ln(1/\delta) \right) = \mathbb{P}\left( \exp(X) > 1/\delta \right) \le \delta \Exp \exp(X)$. 
Combined with Eq.~\ref{eq:lem:armwise_guarantee-1}, we have
\begin{equation*}
    \beta\sum_{t=1}^T g_{t}(\pi) \le \beta\sum_{t=1}^T \tilde{g}_{t}(\pi | \Pi_t(\pi_t)) + \ln(\delta^{-1})
\end{equation*}
with probability at least $1 - \delta$.
This completes the proof.
\end{proof}

% \SP{TODO}
% \begin{theorem}[High Probability Bound \citep{auer2002nonstochastic}]
% \label{thm:EXP3P_Bound}
% For any given $\delta \in (0, 1)$ and $\ell_{t}(\pi) \in [ \ell_{\text{min}}, \ell_{\text{max}} ]$, if Algorithm~\autoref{prelim:exp3p} is run with
% $
%     \beta = \sqrt{\tfrac{\ln K}{KT}},
%     \eta = 0.95\sqrt{\tfrac{\ln K}{KT}}, 
%     \gamma = 1.05\sqrt{\tfrac{K \ln K}{T}}, 
% $
% then the following holds with probability at least $1 - \delta$:
% \begin{align*}
%     \Reg(T) 
%     &\coloneqq \sum_{t=1}^T \ell_{t}(\pi_t) - \min_{\pi \in \Pi} \sum_{t=1}^T \ell_{t}(\pi)\\
%     &= \sum_{t=1}^T (\ell_{\text{max}} - \ell_{\text{min}}) \left( \frac{\ell_{\text{max}}}{\ell_{\text{max}} - \ell_{\text{min}}} - g_{t}(\pi_t) \right) 
%     - 
%     \min_{\pi \in \Pi} \sum_{t=1}^T (\ell_{\text{max}} - \ell_{\text{min}}) \left( \frac{\ell_{\text{max}}}{\ell_{\text{max}} - \ell_{\text{min}}} - g_{t}(\pi) \right)\\
%     &= (\ell_{\text{max}} - \ell_{\text{min}}) \left( \max_{\pi \in \Pi} \sum_{t=1}^T g_{t}(\pi) - \sum_{t=1}^T g_{t}(\pi_t) \right)\\
%     &\leq (\ell_{\text{max}} - \ell_{\text{min}}) \left(
%         \sqrt{\frac{TK}{\ln K}} ~ \ln(\delta^{-1})
%         +
%         5.15 \sqrt{TK\ln K}
%     \right).
% \end{align*}

% \end{theorem}

Now, we derive the the regret bound of \texttt{OCP-Bandit} (Algorithm~\ref{alg:exp3p-cp}), which consists of three steps. 

% Assumptions
First, our goal is to show that, if $\gamma \le \tfrac{1}{2}$ and $(1+\beta)K\eta \le \gamma$,

% Main bound
\begin{equation}\label{eq:exp3p:generalbound}
\Reg(T)
\le
(\ell_{\text{max}} - \ell_{\text{min}})
\left(
K \beta T + \gamma T + (1+\beta)\eta K n
+ \frac{\ln(K\delta^{-1})}{\beta}
+ \frac{\ln K}{\eta}
\right).
\end{equation}

Irrespective of the hyperparameter setup, note that Theorem~\ref{thm:EXP3.P-CP} always holds if $T < \sqrt{\frac{TK}{\ln K}} \ln(\delta^{-1}) + 5.15\sqrt{T K \text{ln} K}$.
If $T \geq \sqrt{\frac{TK}{\ln K}} \ln(\delta^{-1}) + 5.15\sqrt{T K \text{ln} K}$, this implies that $\gamma \leq \frac{1}{2}$ and $(1+\beta)K\eta \leq \gamma$,
which makes it suffice to show that Eq.~\ref{eq:exp3p:generalbound} holds for $\gamma \le \tfrac{1}{2}$ and $(1+\beta)K\eta \le \gamma$.

% Step 1: notation and simple inequalities
\textbf{Step 1: Simple equalities.}
For all $\pi \in \Pi$, the following equality holds:
\begin{equation}\label{eq:exp3p:simple-equality}
    \Exp_{\pi \sim p_t} \tilde{g}_{t}(\pi | \Pi_t(\pi_t)) = g_{t}(\pi_t) + \beta K.
\end{equation}
Then, for all $\pi \in \Pi$, the following equality holds:
\begin{equation}\label{eq:exp3p:armwise_equality}
\begin{aligned}
R_{\pi}(T)
&\coloneqq \sum_{t=1}^{T}\ell_{t}(\pi_t) - \sum_{t=1}^{T}\ell_{t}(\pi) \\
&= (\ell_{\max}-\ell_{\min})
\left(
\sum_{t=1}^{T} g_{t}(\pi) - \sum_{t=1}^{T} g_{t}(\pi_t)
\right) ~ (\because ~ \text{Eq.}~\ref{eq:method:true_normalized_gain}) \\
&= (\ell_{\max}-\ell_{\min})
\left(
\beta KT + \sum_{t=1}^{T} g_{t}(\pi)
- \sum_{t=1}^{T}\Exp_{\tilde{\pi}\sim p_t}\tilde g_{t}(\tilde{\pi} | \Pi_t(\pi_t))
\right) ~ (\because ~ \text{Eq.} ~ \ref{eq:exp3p:simple-equality}).
\end{aligned}
\end{equation}
Using the definition of cumulant generating function and the relationship that $p_t = (1 - \gamma) \omega_t + \gamma u$ where $
    \omega_t(\pi) 
    =
    \frac{\text{exp}\left(\eta\tilde{G}_{t-1}(\pi)\right)}{\sum_{\tilde{\pi} \in \Pi}~\text{exp}\left(\eta\tilde{G}_{t-1}(\tilde{\pi})\right)}
$
and $u$ is the uniform distribution over $K$ arms, the following holds:
\begin{equation}
\begin{aligned}\label{eq:exp3p:cgf-uniform}
    -\mathbbm{E}_{\tilde{\pi} \sim p_t} & \tilde{g}_t(\tilde{\pi} | \Pi_t(\pi_t))
    \\
    &=
    -(1-\gamma) \mathbbm{E}_{\tilde{\pi} \sim \omega_t} \tilde{g}_t(\tilde{\pi} | \Pi_t(\pi_t))
    -\gamma \mathbbm{E}_{\tilde{\pi} \sim u} \tilde{g}_t(\tilde{\pi} | \Pi_t(\pi_t))
    ~ (\because p_t = (1 - \gamma) \omega_t + \gamma u)\\
    &=
    (1-\gamma) \bigg[ 
        \frac{1}{\eta}\text{ln}\mathbbm{E}_{\tilde{\pi} \sim \omega_t}\text{exp}\left(
            \eta ( 
                \tilde{g}_t(\tilde{\pi} | \Pi_t(\pi_t)) 
                - 
                \mathbbm{E}_{\tilde{\pi} \sim \omega_t} \tilde{g}_{t}(\tilde{\pi} | \Pi_t(\pi_t)) 
            ) 
        \right)
    \\
    &\qquad\qquad\qquad\qquad
        -\frac{1}{\eta}\text{ln}\mathbbm{E}_{\tilde{\pi} \sim \omega_t} \text{exp}(\eta \tilde{g}_t(\tilde{\pi} | \Pi_t(\pi_t))) 
    \bigg]
    - 
    \gamma\mathbbm{E}_{\tilde{\pi} \sim u}\tilde{g}_t(\tilde{\pi} | \Pi_t(\pi_t)).
\end{aligned}
\end{equation}

% Step 2: cgf bound on the first term
\textbf{Step 2: Bounding the first term of Eq.~\ref{eq:exp3p:cgf-uniform}.}
Since $\ln x\le x-1$, $\exp(x)\le 1+x+x^2$ for all $x\le 1$, 
and $
    \eta\tilde g_{t}(\tilde{\pi} | \Pi_t(\pi_t))
    \leq
    \eta\frac{g_t(\tilde{\pi})+\beta}{p_t(\tilde{\pi})}\leq\eta\frac{1+\beta}{(1-\gamma)w_t(\tilde{\pi})+\gamma\frac{1}{K}} \leq \frac{\gamma\frac{1}{K}}{(1-\gamma)w_t(\tilde{\pi})+\gamma\frac{1}{K}}\le 1$ $(\because~(1+\beta)\eta K \le \gamma)
$,

\begin{equation}\label{eq:exp3p:cgf-uniform:upperbd}
\begin{aligned}
    \ln \Exp_{\tilde{\pi} \sim \omega_t}
    &\exp\left(\eta(\tilde g_{t}(\tilde{\pi} | \Pi_t(\pi_t))-\Exp_{\tilde{\pi} \sim \omega_t}\tilde g_{t}(\tilde{\pi} | \Pi_t(\pi_t)))\right)\\
    &= 
    \ln \Exp_{\tilde{\pi} \sim \omega_t}\text{exp}\(\eta\tilde g_{t}(\tilde{\pi} | \Pi_t(\pi_t))\) 
    - 
    \eta \Exp_{\tilde{\pi} \sim \omega_t} \tilde g_{t}(\tilde{\pi} | \Pi_t(\pi_t)) \\
    &\le \Exp_{\tilde{\pi} \sim \omega_t} \left\{ \exp(\eta \tilde{g}_{t}(\tilde{\pi} | \Pi_t(\pi_t))) - 1 - \eta \tilde{g}_{t}(\tilde{\pi} | \Pi_t(\pi_t)) \right\} ~ (\because \ln x \leq x - 1)\\
    &\le \Exp_{\tilde{\pi} \sim \omega_t} \eta^2 \tilde{g}_{t}(\tilde{\pi} | \Pi_t(\pi_t))^2 ~ (\because \exp(x) \leq 1 + x + x^2) \\
    &\le \eta^2\frac{1 + \beta}{1 - \gamma}\sum_{\tilde{\pi} \in \Pi}\tilde g_{t}(\tilde{\pi} | \Pi_t(\pi_t)) ~ \left(\because ~ \frac{w_t(\tilde{\pi})}{p_t(\tilde{\pi})} \leq \frac{1}{1-\gamma} \right).
\end{aligned}
\end{equation}

% Step 3: summing
\textbf{Step 3: Summing.}
Let $\tilde{G}_0(\tilde{\pi}) = 0$.
Then, combining Eq.~\ref{eq:exp3p:cgf-uniform}-Eq.~\ref{eq:exp3p:cgf-uniform:upperbd} and summing over $t$ yield
\begin{equation}\label{eq:exp3p:final-upperbd}
\scalebox{0.85}{$
    \begin{aligned}
        -\sum_{t=1}^{T} & \Exp_{\tilde{\pi} \sim p_t}\tilde g_{t}(\tilde{\pi} | \Pi_t(\pi_t))\\
        &\le 
        (1+\beta) \eta \sum_{t=1}^{T} \sum_{\tilde{\pi} \in \Pi} \tilde g_{t}(\tilde{\pi} | \Pi_t(\pi_t))
        - 
        \frac{1-\gamma}{\eta}\sum_{t=1}^{T}
        \ln\left( \sum_{\tilde{\pi} \in \Pi} w_{t}(\tilde{\pi}) \exp( \eta \tilde g_{t}(\tilde{\pi} | \Pi_t(\pi_t)) ) \right) \\
        &= 
        (1+\beta) \eta \sum_{t=1}^{T} \sum_{\tilde{\pi} \in \Pi} \tilde g_{t}(\tilde{\pi} | \Pi_t(\pi_t))
        - \frac{1-\gamma}{\eta}
        \ln \left( \frac{\sum_{\tilde{\pi} \in \Pi} \exp(\eta\tilde G_{T}(\tilde{\pi}))}{\sum_{\tilde{\pi} \in \Pi} \exp(\eta\tilde G_{0}(\tilde{\pi}))} \right) ~ (\because ~ \text{Definition of} ~ \omega_t(\tilde{\pi}), \tilde{G}_t(\tilde{\pi})) \\
        &\le 
        (1+\beta)\eta K \max_{\tilde{\pi} \in \Pi}\tilde G_{T}(\tilde{\pi})
        + \frac{\ln K}{\eta}
        - \frac{1 - \gamma}{\eta}
        \ln\left(\sum_{\tilde{\pi} \in \Pi}\exp(\eta \tilde G_{T}(\tilde{\pi}))\right) ~ (\because 1 - \gamma \leq 1 ~\text{and}~\tilde{G}_0(\tilde{\pi})=0) \\
        &\le 
        -(1-\gamma-(1+\beta)\eta K)\max_{\tilde{\pi} \in \Pi}\tilde G_{T}(\tilde{\pi})
        + \frac{\ln K}{\eta} ~ (\because ~ \text{Property of log-sum-exponential}) \\
        &\le 
        -(1-\gamma-(1+\beta)\eta K)\max_{\tilde{\pi} \in \Pi}\sum^T_{t=1} g_{t}(\tilde{\pi}) + \frac{\ln(K\delta^{-1})}{\beta}
        + \frac{\ln K}{\eta},
    \end{aligned}
$}
\end{equation}
where the last inequality holds due to the Lemma~\ref{lem:armwise_guarantee} and the initial assumption that $\gamma \leq \frac{1}{2}$ and $(1+\beta)K\eta \leq \gamma$.
Plugging Eq.~\ref{eq:exp3p:final-upperbd} into Eq.~\ref{eq:exp3p:armwise_equality}, the following holds with probability $1-\frac{\delta}{K}$ for all $\pi \in \Pi$:
\begin{align*}
R_\pi(T)
\le
(\ell_{\text{max}} - \ell_{\text{min}})
\left(
K \beta T + \gamma T + (1+\beta)\eta K T
+ \frac{\ln(K\delta^{-1})}{\beta}
+ \frac{\ln K}{\eta}
\right).
\end{align*}
Since $\Reg(T) = \max R_\pi(T)$, this completes the proof by taking the union bound.

%\SP{add derivation for the final bound given the theoretically chosen hyperparameters. }
% Irrespective of the hyperparameter setup, (3.12) always holds if $T \geq 5.15\sqrt{T K \text{ln}(K\delta^{-1})}$.
% If it is not the case it implies that $\gamma \leq \frac{1}{2}$ and $(1+\beta)K\eta \leq \gamma$.

% % Step 4: normalization
% \textbf{Step 4: Loss normalization.}
% Finally, we consider the following regret bound for $\ell_t(\cdot) \in [\ell_\text{min}, \ell_\text{max}]$ by simple normalization. 
% \begin{align*}
% \Reg(T) 
%     &\coloneqq \sum_{t=1}^T \ell_{t}(\pi_t) - \min_{\pi \in \Pi} \sum_{t=1}^T \ell_{t}(\pi)\\
%     &= \sum_{t=1}^T (\ell_{\text{max}} - \ell_{\text{min}}) \left( \frac{\ell_{\text{max}}}{\ell_{\text{max}} - \ell_{\text{min}}} - g_{t}(\pi_t) \right) 
%     - 
%     \min_{\pi \in \Pi} \sum_{t=1}^T (\ell_{\text{max}} - \ell_{\text{min}}) \left( \frac{\ell_{\text{max}}}{\ell_{\text{max}} - \ell_{\text{min}}} - g_{t}(\pi) \right)\\
%     &= (\ell_{\text{max}} - \ell_{\text{min}}) \left( \max_{\pi \in \Pi} \sum_{t=1}^T g_{t}(\pi) - \sum_{t=1}^T g_{t}(\pi_t) \right)\\
%     &\leq (\ell_{\text{max}} - \ell_{\text{min}}) \left(
%         \sqrt{\frac{TK}{\ln K}} ~ \ln(\delta^{-1})
%         +
%         5.15 \sqrt{TK\ln K}
%     \right).
% \end{align*}
% This completes the proof.

\clearpage

\section{Useful Properties for the Proof of Theorem~\ref{thm:EXP3.P-CP-UNLOCK} and Theorem~\ref{thm:EXP3.P-CP-UNLOCK-PS}}
\label{appdx:proof:useful_properties}

\subsection{Monotonic Structure of Single-thresholded Conformal Sets}
We consider a class of conformal sets parameterized by a single threshold $\pi \in \Pi$, defined as Eq.~\ref{eq:singlethres_conformal}.
Within the model class, the miscoverage term $m_t(\pi)$ is monotonically non-increasing as the size of the corresponding conformal set $|\hat{C}_\pi(x_t)|$ increases.
Equivalently, $m_t(\pi)$ is a monotonically non-decreasing function with respect to the threshold parameter $\pi$.

\subsection{Properties of the Loss Function}
% \todo
% {\color{red} Need to fix all}
{\color{\cc}The followings are the properties of the loss function 
\begin{align*}
    \ell_t(\pi; \alpha, c) = d_t(\pi; \alpha) + a_t(\pi; \alpha, c),
\end{align*}
as defined in Eq.~\ref{eq:convertedloss}, Eq.~\ref{eq:miscoverage_term}, and Eq.~\ref{eq:method:inefficiency_term}.
We henceforth write $\ell_t(\pi)$, $d_t(\pi)$, and $a_t(\pi)$ for simplicity, indicating the dependence on $\alpha$ and $c$ only when needed.} 
\begin{itemize}
    \item By the definition of {\color{\cc}$d_t(\pi; \alpha)$} (Eq.~\ref{eq:miscoverage_term}),
    $
        {\color{\cc} d_t(\pi; \alpha) = \alpha + \alpha (1 - \alpha)}
        $ 
        for all
        $\pi \in \Pi_t^\ast
    $ 
    and
    $
        {\color{\cc}d_t(\pi; a) = 1 - \alpha (1 - \alpha)}
    $
        for all 
        $\pi \in (\Pi_t^\ast)^c.
    $
    {\color{\cc}
    Note that for $\alpha \in ( 0, 0.5 )$,}
    \begin{align}
        {\color{\cc} \alpha + \alpha (1 - \alpha)  < 1 - \alpha (1 - \alpha).}
        \label{eq:miscoverage_loss_property}
    \end{align}

    \item For all $\pi_1, \pi_2 \in \Pi_t^\ast$ such that $\pi_1 \leq \pi_2$,
    \begin{align*}
        a_t (\pi_1)
        \geq
        a_t (\pi_2).
        % ~ (\because ~ {\color{\cc}\left( 1 - \frac{\pi}{o(T)} \right)^k} ~ \text{is monotonically decreasing}).
    \end{align*}
    Therefore, for all $\pi_1, \pi_2 \in \Pi_t^\ast$ such that $\pi_1 \leq \pi_2$,
    \begin{align}
        \ell_t(\pi_1) \geq \ell_t(\pi_2).        
        \label{eq:penalty_loss_property1}
    \end{align}

    \item For all $\pi_1, \pi_2 \in (\Pi_t^\ast)^c $ such that $\pi_1 \leq \pi_2$,
    \begin{align*}
        a_t (\pi_1)
        \leq 
        a_t (\pi_2).
        % ~ (\because ~ {\color{\cc}\left( \frac{\pi}{o(T)} \right)^k} ~ \text{is monotonically increasing}).
    \end{align*}
    Therefore, for all $\pi_1, \pi_2 \in \Pi_t^\ast$ such that $\pi_1 \leq \pi_2$,
    \begin{align}
        \ell_t(\pi_1) \leq \ell_t(\pi_2).        
        \label{eq:penalty_loss_property2}
    \end{align}

    \item Let $
        \ell_{t,0}
        \coloneqq
        \text{max}_{\pi \in \Pi_t^\ast} ~ \ell_t(\pi)
    $ and $
        \ell_{t,1}
        \coloneqq
        \text{min}_{\pi \in (\Pi_t^\ast)^c} ~ \ell_t(\pi).
    $
    By letting 
    $\pi_{t,0} \coloneqq \text{min}_{\pi \in \Pi_t^\ast} \pi = 0$
    and
    $\pi_{t,1} \coloneqq \text{min}_{\pi \in (\Pi_t^\ast)^c} \pi$,
    \begin{align*}
        \ell_{t, 0} &= \ell_t(\pi_{t, 0}) ~ (\because ~ \text{Eq.}~\ref{eq:penalty_loss_property1}),\\
        \ell_{t, 1} &= \ell_t(\pi_{t, 1}) ~ (\because ~ \text{Eq.}~\ref{eq:penalty_loss_property2}).    
    \end{align*}
    Since controlling the miscoverage is the primary objective, 
    the loss estimator will be most satisfactory when $
        \ell_{t, 0} \leq \ell_{t, 1}$ for all
        $t \in [T]$.
    {\color{\cc}Indeed,
    \begin{align*}
        \ell_{t, 1} - \ell_{t, 0}
        &=
        (1 - 2\alpha) (1 - \alpha)
        +
        (a_t(\pi_{t,1}) - a_t(\pi_{t, 0}))
        \\
        &\geq         
        -(c\alpha) o(T)^{-1}
        +(c\alpha) o(T)^{-1}
        \\
        &= 0.
    \end{align*}
    Therefore,
    \begin{align}
        \ell_t(\pi_1)
        \leq
        \ell_t(\pi_2),
        ~ \forall \pi_1 \in \Pi_t^\ast, ~ \forall \pi_2 \in (\Pi_t^\ast)^c.
        \label{eq:algorithm_loss_property}
    \end{align}
    % Note that any $\lambda_a = o(\lambda_d)$ satisfying Eq.~\ref{eq:algorithm_loss_property} suffices.
    }

    \item $
        {\color{\cc} \ell_{\text{max}} = 1 - \alpha (1-\alpha) - \frac{c\alpha}{1+\alpha(1-2\alpha)}o(T)^{-1}}
    $
    
    \item $
        {\color{\cc} \ell_{\text{min}} = \alpha + \alpha (1 - \alpha) - \left( 1+\frac{1}{1-\alpha} \right)(c\alpha)o(T)^{-1}}
    $
\end{itemize}

\subsection{Properties of the Normalized Gain}
\label{para:gain_property}
The followings are properties of the normalized gain $g_t(\pi)$ (Eq.~\ref{eq:method:true_normalized_gain}) and the pseudo-gain $\tilde{g}_t(\pi)$ (Eq.~\ref{eq:method:normalized_gain}) terms under the partial feedback setting.
\begin{itemize}
    \item Case 1 ($m_t(\pi_t) = 0$)
    \begin{enumerate}
        \item $\pi \in \Pi_t^\ast ~ \text{and} ~ \pi \leq \pi_t$:
        $\ell_t(\pi) \geq \ell_t(\pi_t) \Leftrightarrow g_t(\pi) \leq g_t(\pi_t) ~ (\because ~ \text{Eq.}~\ref{eq:penalty_loss_property1})$

        \item $\pi \in \Pi_t^\ast ~ \text{and} ~ \pi > \pi_t$:
        $\ell_t(\pi) \leq \ell_t(\pi_t) \Leftrightarrow g_t(\pi) \geq g_t(\pi_t)$.
        % , where 
        % $   
        %     % \scriptstyle
        %     \ell_t(\pi_t) - \ell_t(\pi)
        %     =
        %     {\color{\cc} c\alpha}
        %     \left\{
        %         (1 + \pi_t^2) 
        %         - 
        %         (1 + \pi^2)
        %     \right\}o(T)^{-1}
        %    =
        %    o(T)^{-1} ~ (\because ~ \text{Taylor expansion})
        % $.
        In addition, since $
            g_t(\pi) - g_t(\pi_t)
            =
            \frac{
                {\color{\cc} c\alpha}
                \frac{
                    \pi^2 - \pi_t^2
                }{1-\alpha} o(T)^{-1}
            }{
                \ell_{\text{max}}- \ell_{\text{min}}
            }
            =
            \frac{
                {\color{\cc} c\alpha}
                \frac{
                    \pi^2 - \pi_t^2
                }{1-\alpha} o(T)^{-1}
            }{
                {\color{\cc}(1 - 2\alpha) (1 - \alpha)
                +
                (1 + \frac{1}{1-\alpha} - \frac{1}{1 + \alpha(1 - 2\alpha)})(c\alpha)o(T)^{-1}}
            },
        $
        \begin{align*}
            g_t(\pi) = g_t(\pi_t) + o(T)^{-1}
            ~ (\because ~ \text{Eq.}~\ref{eq:penalty_loss_property1} ~ \& ~ \text{Taylor expansion}).
        \end{align*}

        \item $\pi \in (\Pi_t^\ast)^c$: $\ell_t(\pi) \geq \ell_t(\pi_t) \Leftrightarrow g_t(\pi) \leq g_t(\pi_t) ~ (\because ~ \text{Eq.}~\ref{eq:algorithm_loss_property})$
        \begin{itemize}
            \item Additionally, for all $\pi \in (\Pi_t^\ast)^c$,
            \begin{align*}
                \frac{g_t(\pi)}{g_t(\pi_t)}
                &=
                \frac{
                    \ell_{\text{max}} - \ell_t(\pi)
                }{
                    \ell_{\text{max}} - \ell_t(\pi_t)
                }
                \\ 
                &=
                \frac{
                    {\color{\cc}c\alpha}
                    \left(
                        {\color{\cc}\frac{o(T)^{-1}}{1 + \alpha(1 - 2\alpha)\pi}
                        -\frac{o(T)^{-1}}{1 + \alpha(1 - 2\alpha)}}
                    \right)
                }{
                    {\color{\cc}(1 - 2\alpha) (1 - \alpha)
                    +
                    (1 + \frac{\pi_t^2}{1-\alpha} - \frac{1}{1 + \alpha(1 - 2\alpha)})(c\alpha)o(T)^{-1}}
                }
                \\
                % &=
                % {\color{\cc}\frac{
                %     \{ 
                %         \lambda_d (1 - \alpha c) 
                %         + 
                %         \lambda_a
                %     \}
                %     - 
                %     \{ 
                %         \lambda_d (1 - \alpha c) 
                %         +
                %         \lambda_a \left( \frac{\pi}{o(T)} \right)^k
                %     \}
                % }{
                %     \{ 
                %         \lambda_d (1 - \alpha c) 
                %         + 
                %         \lambda_a
                %     \}
                %     - 
                %     \{ 
                %         \lambda_d \alpha c
                %         +
                %         \lambda_a \left( 1 - \frac{\pi_t}{o(T)} \right)^k
                %     \}
                % }}
                % \\
                &= o(T)^{-1} ~ (\because ~ \text{Taylor expansion}).
                % &\leq 
                % \frac{
                %     \lambda'
                % }{
                %     1 - c - |c|
                % }                
                % \\
                % &=: \epsilon (\lambda, \alpha).
            \end{align*}
            Therefore, 
            \begin{align*}
                g_t(\pi) = o(T)^{-1} g_t(\pi_t) ~\forall \pi \in (\Pi_t^\ast)^c.
            \end{align*}
        \end{itemize}
    \end{enumerate}
    Therefore,
    \begin{align}
        \label{eq:useful_inequality1}
        g_t(\pi) 
        &\leq g_t(\pi_t) + o(T)^{-1}
        ~ \forall \pi \in \Pi_t^\ast,
        \\
        g_t(\pi) 
        &\leq o(T)^{-1} g_t(\pi_t)
        ~ \forall \pi \in (\Pi_t^\ast)^c.\nonumber
    \end{align}

    \item Case 2 ($m_t(\pi_t) = 1$)
    \begin{enumerate}
        \item $\pi \in \Pi_t(\pi_t) \subset (\Pi_t^\ast)^c$:
        $\ell_t(\pi) \geq \ell_t(\pi_t) \Leftrightarrow g_t(\pi) \leq g_t(\pi_t)$ 
        ($\because ~ \text{Eq.}~\ref{eq:penalty_loss_property2}$)

        \item $\pi \in \Pi_t(\pi_t)^c$:
        \begin{align*}
            \tilde{g}_t(\pi) - g_t(\pi_t)
            &=
            \frac{
                {\color{\cc}c\alpha}
                \left(
                    {\color{\cc}\frac{o(T)^{-1}}{1 + \alpha(1 - 2\alpha)\pi}
                    -\frac{o(T)^{-1}}{1 + \alpha(1 - 2\alpha)\pi_t}}
                \right)
            }{
                {\color{\cc}(1 - 2\alpha) (1 - \alpha)
                +
                (1 + \frac{1}{1 - \alpha} - \frac{1}{1 + \alpha(1 - 2\alpha)})(c\alpha)o(T)^{-1}}
            }
            \\
            &= 
            o(T)^{-1} ~ (\because ~ \text{Taylor expansion}) 
        \end{align*}
    \end{enumerate}
    Therefore, 
    \begin{align}
        g_t(\pi) 
        &\leq 
        g_t(\pi_t)
        ~ \forall \pi \in \Pi(\pi_t),\nonumber
        \\
        \tilde{g}_t(\pi) 
        &\leq 
        g_t(\pi_t) + o(T)^{-1}
        ~ \forall \pi \in \Pi(\pi_t)^c.
        \label{eq:useful_inequality2}
    \end{align}
\end{itemize}

\clearpage

\section{Proof of Theorem~\ref{thm:EXP3.P-CP-UNLOCK}}
\label{sec:proof:thm:EXP3P-SEMI_Bound}
The biased gain estimator $\tilde{g}_t(\pi | \Pi_t(\pi_t))$ of \texttt{OCP-Unlock} (Eq.~\ref{eq:exp3p_semi_biased_gain_estimator}) satisfies the following inequality.
\begin{itemize}
    \item Case 1 ($m_t(\pi_t)=0$)
    \begin{equation}\label{eq:unlock_exp3p-unlock-expectation-mt0}
    \begin{aligned}
        \scriptstyle\mathbbm{E}_{\pi \sim p_t} \tilde{g}_t( \pi | \Pi_t(\pi_t) ) 
        &\scriptstyle= 
        \sum_{\pi \in \Pi} p_t(\pi) \tilde{g}_t(\pi | \Pi_t(\pi_t))
        \\
        &\scriptstyle=
        \sum_{\pi \in \Pi_t^\ast} p_t(\pi) \tilde{g}_t(\pi | \Pi_t(\pi_t))
        + 
        \sum_{\pi \in (\Pi_t^\ast)^c} p_t(\pi) \tilde{g}_t(\pi | \Pi_t(\pi_t))
        \\
        &\scriptstyle=
        \sum_{\pi \in \Pi_t^\ast} p_t(\pi) 
        \left\{
            \frac{
                g_t(\pi)
            }{
                \sum_{\tilde{\pi} \in \Pi_t^\ast} p_t(\tilde{\pi})    
            }
            +
            \frac{\beta}{p_t(\pi)}
        \right\}
        +
        \sum_{\pi \in (\Pi_t^\ast)^c} p_t(\pi) 
        \left\{
            \frac{\beta}{p_t(\pi)}
        \right\}
        \\
        &\scriptstyle
        \leq g_t(\pi_t) + o(T)^{-1} + K\beta
        ~ (\because ~ \text{Eq.}~\ref{eq:useful_inequality1})
    \end{aligned}
    \end{equation}

    \item Case 2 ($m_t(\pi_t)=1$)
    \begin{equation}\label{eq:unlock_exp3p-unlock-expectation-mt1}
    \begin{aligned}
        \scriptstyle\mathbbm{E}_{\pi \sim p_t} \tilde{g}_t( \pi | \Pi_t(\pi_t) ) 
        &\scriptstyle= 
        \sum_{\pi \in \Pi} p_t(\pi) \tilde{g}_t(\pi | \Pi_t(\pi_t))
        \\
        &\scriptstyle=
        \sum_{\pi \in \Pi_t(\pi_t)} p_t(\pi) \tilde{g}_t(\pi | \Pi_t(\pi_t))
        + 
        \sum_{\pi \in \Pi_t(\pi_t)^c} p_t(\pi) \tilde{g}_t(\pi | \Pi_t(\pi_t))
        \\
        &\scriptstyle=
        \sum_{\pi \in \Pi_t(\pi_t)} p_t(\pi) 
        \left\{
            \frac{
                g_t( \pi )
            }{
                \sum_{\tilde{\pi} \in \Pi_t(\pi_t)} p_t( \tilde{\pi} )
            }
            +
            \frac{
                \beta
            }{
                p_t(\pi)    
            }
        \right\}
        +
        \sum_{\pi \in \Pi_t(\pi_t)^c} p_t(\pi) 
        \left\{
            \frac{
            \beta
            }{
                p_t( \pi )
            }
        \right\}
        \\
        &\scriptstyle
        \leq g_t(\pi_t) + K\beta
        ~ (\because ~ \text{Eq.}~\ref{eq:useful_inequality2})
    \end{aligned}
    \end{equation}
\end{itemize}

Combining Eq.~\ref{eq:unlock_exp3p-unlock-expectation-mt0} and Eq.~\ref{eq:unlock_exp3p-unlock-expectation-mt1},
the following upper bound holds irrespective of the choice of $\pi_t$:
\begin{align}
    \label{eq:unlock_useful_inequality_combined}
    \mathbbm{E}_{\pi \sim p_t} \tilde{g}_t( \pi | \Pi_t(\pi_t) )
    \leq
    g_t(\pi_t) + o(T)^{-1} + K\beta.
\end{align}

\paragraph{Arm-wise High-probability Bound.}
Before moving on to the regret analysis, 
we provide the following lemma, a variant of Lemma~\ref{lem:armwise_guarantee}.

\begin{lemma}\label{lem:unlock_armwise_guarantee_new}
    Let $\beta \le 1$ and $\delta \in (0, 1)$. 
    Then, for each $\pi \in \Pi$, the following holds with probability at least $1 - \delta$:
    \begin{equation*}
        \sum_{t=1}^T g_{t}(\pi) 
        \le 
        \sum_{t=1}^T \tilde{g}_{t}(\pi | \Pi_t(\pi_t)) + \frac{\ln(\delta^{-1})}{\beta}.
    \end{equation*}
\end{lemma}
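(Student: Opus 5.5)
The plan is to reuse the exponential-supermartingale argument of Lemma~\ref{lem:armwise_guarantee}, now applied to the \texttt{OCP-Unlock} estimator (Eq.~\ref{eq:exp3p_semi_biased_gain_estimator}). Fix $\pi \in \Pi$ and define
\[
    \Delta_t(\pi) \coloneqq \beta g_t(\pi) - \beta\bigl(\tilde g_t(\pi \mid \Pi_t(\pi_t)) - \tfrac{\beta}{p_t(\pi)}\bigr),
\]
so that the bracket is the ``gain part'' of the estimator. In the $m_t(\pi_t)=0$ case this gain part is $\mathbbm{1}(\pi\in\Pi_t^\ast)\, g_t(\pi)/\sum_{\tilde\pi\in\Pi_t^\ast}p_t(\tilde\pi)$. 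In the $m_t(\pi_t)=1$ case it is $\mathbbm{1}(\pi\in\Pi_t(\pi_t))\, g_t(\pi)/\sum_{\tilde\pi\in\Pi_t(\pi_t)}p_t(\tilde\pi)$. Since the gain part is nonnegative and $g_t\in[0,1]$, $\beta\le1$, we get $\Delta_t(\pi)\le 1$, which licenses $e^x\le 1+x+x^2$.

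The key step is computing $\Exp_t$ over $\pi_t\sim p_t$, conditioned on the past, with the adversary's $(x_t,y_t)$ fixed. I will view the event that the gain of $\pi$ is revealed as follows. Under $m_t(\pi_t)=0$, i.e.\ $\pi_t\in\Pi_t^\ast$, the gain is revealed when $\pi\in\Pi_t^\ast$. Under $m_t(\pi_t)=1$, it is revealed when $\pi_t\le\pi$ (as $\pi\ge\pi_t$). Write $q_t(\pi)$ for the probability of revelation and $w_t$ for the normalizer actually used, which is either $P_t^\ast=\sum_{\Pi_t^\ast}p_t$ or $\sum_{\Pi_t(\pi_t)}p_t$. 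The first-moment term then becomes $\Exp_t[\text{gain part}] = g_t(\pi)\,\Exp_t[\mathbbm{1}(\text{revealed})/w_t]$, and I aim to show $\Exp_t[\Delta_t(\pi)]\le 0$ up to the control below. For $\pi\in\Pi_t^\ast$ the event $\pi_t\in\Pi_t^\ast$ has probability exactly $P_t^\ast$ and the normalizer is $P_t^\ast$, giving ratio $1$. Revelation through $m_t(\pi_t)=1$ cannot occur here, since then $\pi\ge\pi_t\in(\Pi_t^\ast)^c$ would force $\pi\notin\Pi_t^\ast$ by monotonicity. For $\pi\in(\Pi_t^\ast)^c$, revelation occurs when $\pi_t\in(\Pi_t^\ast)^c$ and $\pi_t\le\pi$, with normalizer $\sum_{\tilde\pi\ge\pi_t}p_t(\tilde\pi)\ge p_t(\pi)$. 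Here the expectation is at least $p_t(\pi)/\sum_{\tilde\pi\ge\pi_t}p_t$ summed appropriately, which may fall short of $1$ and leave $\Exp_t[\Delta_t]>0$. This is the main obstacle.

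To handle it I would use the second-moment term together with the $-\beta^2/p_t(\pi)$ compensator exactly as in Lemma~\ref{lem:armwise_guarantee}. The quantity to bound is $\Exp_t[e^{\Delta_t-\beta^2/p_t(\pi)}]\le(1+\Exp_t\Delta_t+\Exp_t\Delta_t^2)e^{-\beta^2/p_t(\pi)}$. The second moment satisfies $\Exp_t[(\text{gain part})^2]\beta^2\le\beta^2/\min w_t\le\beta^2/p_t(\pi)$, because each normalizer contains $p_t(\pi)$ whenever $\pi$ is revealed. If the first-order deficit can be absorbed by bounding $\Exp_t[\mathbbm{1}(\text{revealed})/w_t]\ge 1$, then we obtain the same chain $\le(1+\beta^2/p_t(\pi))e^{-\beta^2/p_t(\pi)}\le1$. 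I would check this lower bound by noting the normalizer is a tail sum and applying Jensen/rearrangement. If it fails, I would fall back on the unconditioned version, using the bias of the $\beta$ terms, and expect the case analysis here to be where care is needed.

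Given the one-step bound, I iterate conditional expectations for $t=T,\dots,1$ to get $\Exp\exp\sum_t(\Delta_t(\pi)-\beta^2/p_t(\pi))\le1$. Markov's inequality then yields, with probability $1-\delta$, $\beta\sum_t g_t(\pi)\le\beta\sum_t\tilde g_t(\pi\mid\Pi_t(\pi_t))+\ln(\delta^{-1})$, since $\beta\cdot\text{gain part}+\beta^2/p_t(\pi)=\beta\tilde g_t$. Dividing by $\beta$ gives the claim.
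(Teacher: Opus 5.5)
Your skeleton is the paper's. You use the same split of $\tilde g_t(\pi\mid\Pi_t(\pi_t))$ into a gain part $(C1)$ and the exploration part $(C2)=\beta/p_t(\pi)$, and the same $\Delta_t(\pi)=\beta g_t(\pi)-\beta\,(C1)\le 1$. You also use the same compensator $\beta^2/p_t(\pi)$, followed by iterated conditioning and Markov. Your treatment of the case $\pi\in\Pi_t^\ast$ is correct.

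\textbf{The gap.} It is the case $\pi\in(\Pi_t^\ast)^c$, which you flag but do not close, and neither proposed fix can work. Put $r_t(\pi)\coloneqq\sum_{\tilde\pi\in(\Pi_t^\ast)^c,\,\tilde\pi\le\pi}p_t(\tilde\pi)/\sum_{\pi'\ge\tilde\pi}p_t(\pi')$. Then exactly $\Exp_t[\Delta_t(\pi)]=\beta g_t(\pi)(1-r_t(\pi))$.
\begin{itemize}
\item The bound $r_t(\pi)\ge 1$ that you hope to get from Jensen or rearrangement is false. For the smallest threshold $\pi$ in $(\Pi_t^\ast)^c$, $r_t(\pi)=p_t(\pi)/\sum_{\tilde\pi\in(\Pi_t^\ast)^c}p_t(\tilde\pi)<1$ whenever $(\Pi_t^\ast)^c$ contains another arm, which then has probability at least $\gamma/K$.
\item The fallback fails for an order-of-magnitude reason. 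The deficit is first order in $\beta$, while the compensator $\beta^2/p_t(\pi)$ is second order. By Jensen, $\Exp_t[\exp(\Delta_t(\pi)-\beta^2/p_t(\pi))]\ge\exp\bigl(\beta g_t(\pi)(1-r_t(\pi))-\beta^2/p_t(\pi)\bigr)>1$ as soon as $g_t(\pi)(1-r_t(\pi))>\beta/p_t(\pi)$.
\item Concrete instance: take $K=3$, uniform $p_t$ (e.g.\ $\eta=0$), $\Pi_t^\ast=\{\pi_1\}$ in every round, and $g_t(\pi_2)=1$. Then $\Exp_t[\tilde g_t(\pi_2\mid\Pi_t(\pi_t))]=\tfrac12+3\beta$. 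For $\beta<1/6$, $\sum_t g_t(\pi_2)-\sum_t\tilde g_t(\pi_2\mid\Pi_t(\pi_t))$ grows linearly in $T$ and eventually exceeds $\ln(\delta^{-1})/\beta$.
\end{itemize}
So no argument that uses only $g_t\in[0,1]$ and $\beta\le 1$, as yours does, can give the statement.

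\textbf{The paper does not supply the missing step.} Its Step 3-2-1 asserts $\Exp_t[\Delta_t(\pi)]\le 0$ for $\pi\in(\Pi_t^\ast)^c$ by summing $p_t(\tilde\pi)\,g_t(\pi)/\sum_{\pi'\in\Pi_t(\tilde\pi)}p_t(\pi')$ over all $\tilde\pi\in(\Pi_t^\ast)^c$. In doing so it drops the indicator $\mathbbm{1}(\pi\in\Pi_t(\tilde\pi))=\mathbbm{1}(\tilde\pi\le\pi)$, which its own Step 3-2-2 keeps. It then lowers each denominator to $\sum_{\pi'\in(\Pi_t^\ast)^c}p_t(\pi')$. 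With the indicator restored, that sum is exactly $r_t(\pi)$, so the obstacle you found is real. Closing it needs an ingredient beyond the statement; two options:
\begin{itemize}
\item Use the specific loss. For $\pi\in(\Pi_t^\ast)^c$, $g_t(\pi)=\frac{c\alpha\,o(T)^{-1}}{\ell_{\text{diff}}}\bigl(\frac{1}{1+\alpha(1-2\alpha)\pi}-\frac{1}{1+\alpha(1-2\alpha)}\bigr)$. Under an added hypothesis that this is at most $\beta/2$, the deficit plus the second-moment term (at most $\beta^2g_t(\pi)^2(1+p_t(\pi)^{-2})$) is dominated by $e^{\beta^2/p_t(\pi)}-1$.
\item Change the estimator. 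For $\pi\in(\Pi_t^\ast)^c$, $g_t(\pi)$ is observed exactly on $\{\pi_t\le\pi\}$: full unlocking if $\pi_t\in\Pi_t^\ast$, partial unlocking otherwise. Its probability $\sum_{\tilde\pi\le\pi}p_t(\tilde\pi)$ is computable whenever $g_t(\pi)$ is observed. Importance-weighting by it gives $\Exp_t[\Delta_t(\pi)]=0$ and $\Exp_t[\Delta_t(\pi)^2]\le\beta^2/p_t(\pi)$, after which your chain goes through verbatim.
\end{itemize}
For the \texttt{OCP-Unlock} estimator as defined and arbitrary $\beta\le 1$, your proposal does not establish the lemma.
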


\begin{proof}
\textbf{Step 1: Useful Decomposition.} 
\\
Let $\Exp_t$ be the expectation with respect to the distribution $p_t$, from which $\pi_t$ is sampled.
For each $\pi \in \Pi$, the biased gain estimator can be decomposed as the following:
\begin{align*}
    \tilde{g}_t \left( \pi \mid \Pi_t(\pi_t) \right)
    &\coloneqq
    \underbrace{\mathbbm{1}(m_t(\pi_t)=0)}_{\text{full unlocking}} \times (A)
    +
    \underbrace{\mathbbm{1}(m_t(\pi_t)=1)}_{\text{partial unlocking}} \times (B)\\
    &=
    \underbrace{\mathbbm{1}(m_t(\pi_t)=0)}_{\text{full unlocking}} \times \{ (A1) + (A2) \}
    +
    \underbrace{\mathbbm{1}(m_t(\pi_t)=1)}_{\text{partial unlocking}} \times \{ (B1) + (B2) \}\\
    &=
    \underbrace{\scriptstyle\{
        \mathbbm{1}(m_t(\pi_t)=0) \times (A1) 
        + 
        \mathbbm{1}(m_t(\pi_t)=1) \times (B1)
    \}}_{\scriptstyle(C1)}
    +
    \underbrace{\scriptstyle\{
        \mathbbm{1}(m_t(\pi_t)=0) \times (A2)        
        +
        \mathbbm{1}(m_t(\pi_t)=1) \times (B2)                
    \}}_{\scriptstyle (C2)},
\end{align*}
where
\begin{align*}
    (A)
    =
    \underbrace{
        \mathbbm{1}(\pi \in \Pi_t^\ast)
        \frac{g_t(\pi)}{\sum_{\tilde{\pi} \in \Pi_t^\ast} p_t(\tilde{\pi})}
    }_{(A1)}
    +
    \underbrace{
        \frac{\beta}{
            p_t(\pi)
        }
    }_{(A2)}
\end{align*}
and 
\begin{align*}
    (B)
    =
    \underbrace{
        \mathbbm{1}( \pi \in \Pi_t(\pi_t) )
        \frac{
            g_t( \pi )
        }{
            \sum_{\tilde{\pi} \in \Pi_t(\pi_t)} p_t( \tilde{\pi} )
        }
    }_{(B1)}
    +
    \underbrace{
        \frac{\beta}{
            p_t(\pi)
        }
    }_{(B2)}.
\end{align*}
\textbf{Step 2: Show $\Delta_t(\pi) \leq 1$.} 
\\
Next, we show the following claims.
\begin{itemize}
    \item Claim 1 ($m_t(\pi_t)=0$)
        \begin{align*}
        \beta g_t(\pi)
        - 
        \beta \times (A1)
        =
        \beta g_t(\pi)
        -
        \beta \frac{g_t(\pi)}{\sum_{\tilde{\pi} \in \Pi_t^\ast} p_t(\tilde{\pi})}
        \leq
        1
    \end{align*}

    \item Claim 2 ($m_t(\pi_t)=1$)
    \begin{align*}
        \beta g_t(\pi)
        - 
        \beta \times (B1)
        =
        \beta g_t(\pi)
        - 
        \beta \frac{
            g_t( \pi )
        }{
            \sum_{\tilde{\pi} \in \Pi_t(\pi_t)} p_t( \tilde{\pi} )
        }
        \leq 1
    \end{align*}
\end{itemize}

Given $\pi \in \Pi$, 
Claim 1 and Claim 2 always hold irrespective of the choice of $\pi_t$ by the algorithm.
Then, by letting $
    \Delta_t(\pi)
    \coloneqq 
    \beta g_t(\pi) - \beta \times (C1)
$, $\Delta_t(\pi) \leq 1$ for all $\pi \in \Pi$.

\textbf{Step 3: Show $\mathbb{E} ~ \text{exp}\left[ \sum_{t=1}^T \left( \Delta_t(\pi) - \beta \times (C2) \right) \right] \leq 1$.} 
\\
Therefore, since (1) $\exp(x)\le 1+x+x^2$ for $x\le1$ and (2) $\Delta_t(\pi) \leq 1$, for $\beta\le1$,
we have
\begin{align*}
    \Exp_t\left[ \exp\left( \Delta_t(\pi) - \beta \times (C2) \right) \right]
    &\leq
    \Exp_t \Big[ 
        (1 + \Delta_t(\pi) + \Delta_t(\pi)^2)
        \times 
        \text{exp}( -\beta \times (C2) )
    \Big]
    \\
    &=
    \Exp_t \Big[ 
        (1 + \Delta_t(\pi) + \Delta_t(\pi)^2)
    \Big]
    \times
    \exp\left(
    \frac{-\beta^2}{
        p_t(\pi)
    }\right).
\end{align*}
Since $\pi$ is fixed, 
we consider each case where $\pi \in \Pi_t^\ast$ and $\pi \in (\Pi_t^\ast)^c$.

Now, our goal is to show that
\begin{align*}
    \Exp_t\left[ \exp\left( \Delta_t(\pi) - \beta \times (C2) \right) \right]
    \leq 
    1 ~ \forall t \in [T].
\end{align*}

\textbf{Step 3-1: $\pi \in \Pi_t^\ast$.}
\\
\textbf{Step 3-1-1: $
    \Exp_t \Big[ 
        \Delta_t(\pi)
    \Big]
$.}
\begin{align*}
    \sum_{\tilde{\pi} \in \Pi} p_t(\tilde{\pi}) \Delta_t(\tilde{\pi})
    &=
    \beta g_t(\pi)
    - 
    \sum_{\tilde{\pi} \in \Pi} p_t(\tilde{\pi}) \times \beta (C1)
    \\
    &=
    \beta g_t(\pi)
    -
    \beta
    \sum_{\tilde{\pi} \in \Pi_t^\ast} p_t(\tilde{\pi}) \frac{g_t(\pi)}{\sum_{\pi' \in \Pi_t^\ast} p_t(\pi')}    
    \\
    &=
    0
\end{align*}

\textbf{Step 3-1-2: $
    \Exp_t \Big[ 
        \Delta_t(\pi)^2
    \Big]
$.}
\begin{align*}
    \sum_{\tilde{\pi} \in \Pi} p_t(\tilde{\pi}) \Delta_t(\tilde{\pi})^2
    &=
    \sum_{\tilde{\pi} \in \Pi_t^\ast} p_t(\tilde{\pi}) \Delta_t(\tilde{\pi})^2
    +
    \sum_{\tilde{\pi} \in (\Pi_t^\ast)^c} p_t(\tilde{\pi}) \Delta_t(\tilde{\pi})^2
    \\
    &=
    (\beta g_t(\pi))^2\sum_{\tilde{\pi} \in \Pi_t^\ast} p_t(\tilde{\pi}) \left(
        1 
        - 
        \frac{1}{\sum_{\pi' \in \Pi_t^\ast} p_t(\pi')}
    \right)^2
    +(\beta g_t(\pi))^2\sum_{\tilde{\pi} \in (\Pi_t^\ast)^c} p_t(\tilde{\pi})
    \\
    &\leq
    \frac{\beta^2}{\sum_{\tilde{\pi} \in \Pi_t^\ast} p_t(\tilde{\pi})}
    ~ (\because ~ g_t(\pi) \in [0, 1])
    \\
    &\leq
    \frac{\beta^2}{p_t(\pi)}
\end{align*}

\textbf{Step 3-1-3: Combine.}
Combining the above results and applying the fact that $1 + x \leq \text{exp}(x)$,
\begin{align*}
    \Exp_t\left[ \exp\left( \Delta_t(\pi) - \beta \times (C2) \right) \right]
    &\leq
    \Exp_t \Big[ 
        (1 + \Delta_t(\pi) + \Delta_t(\pi)^2)
        \times 
        \text{exp}( -\beta \times (C2) )
    \Big]
    \\
    &\leq
    \text{exp}\left(
         -\frac{\beta^2}{p_t(\pi)}
    \right)
    \Exp_t \Big[ 
        (1 + \Delta_t(\pi) + \Delta_t(\pi)^2)
    \Big]
    \\
    &\leq 
    \text{exp}\left(
        -\frac{\beta^2}{p_t(\pi)}
    \right)
    \left( 
        1 
        +
        \frac{\beta^2}{p_t(\pi)}
    \right)
    \\
    &\leq 1 ~ (\because ~ 1 + x \leq \text{exp}(x)).
\end{align*}

\textbf{Step 3-2: $\pi \in (\Pi_t^\ast)^c$.}
\\
\textbf{Step 3-2-1: $
    \Exp_t \Big[ 
        \Delta_t(\pi)
    \Big]
$.}
\begin{align*}
    \sum_{\tilde{\pi} \in \Pi} p_t(\tilde{\pi}) \Delta_t(\tilde{\pi})
    &=
    \beta g_t(\pi)
    - 
    \sum_{\tilde{\pi} \in \Pi} p_t(\tilde{\pi}) \times \beta (C1)
    \\
    &=
    \beta g_t(\pi)
    -
    \beta
    \sum_{\tilde{\pi} \in (\Pi_t^\ast)^c} p_t(\tilde{\pi}) \frac{g_t(\pi)}{\sum_{\pi' \in \Pi_t(\tilde{\pi})} p_t(\pi')}    
    \\
    &\leq
    \beta g_t(\pi)
    -
    \beta
    \sum_{\tilde{\pi} \in (\Pi_t^\ast)^c} p_t(\tilde{\pi}) \frac{g_t(\pi)}{\sum_{\pi' \in (\Pi_t^\ast)^c} p_t(\pi')}    
    \\
    &=
    0
\end{align*}

\textbf{Step 3-2-2: $
    \Exp_t \Big[ 
        \Delta_t(\pi)^2
    \Big]
$.}
\begin{align*}
    \scriptstyle
    \sum_{\tilde{\pi} \in \Pi} p_t(\tilde{\pi}) \Delta_t(\tilde{\pi})^2
    &\scriptstyle=
    \sum_{\tilde{\pi} \in \Pi_t^\ast} p_t(\tilde{\pi}) \Delta_t(\tilde{\pi})^2
    +
    \sum_{\tilde{\pi} \in (\Pi_t^\ast)^c} p_t(\tilde{\pi}) \Delta_t(\tilde{\pi})^2
    \\
    &\scriptstyle=
    \sum_{\tilde{\pi} \in \Pi_t^\ast} p_t(\tilde{\pi}) \Delta_t(\tilde{\pi})^2
    +
    \sum_{\tilde{\pi}: \pi \in \Pi_t(\tilde{\pi})} p_t(\tilde{\pi}) \Delta_t(\tilde{\pi})^2
    +
    \sum_{\tilde{\pi}: \pi \in \Pi_t(\tilde{\pi})^c} p_t(\tilde{\pi}) \Delta_t(\tilde{\pi})^2
    \\
    &\scriptstyle=
    \sum_{\tilde{\pi} \in \Pi_t^\ast} p_t(\tilde{\pi}) (\beta g_t(\pi))^2
    +
    \sum_{\tilde{\pi}: \pi \in \Pi_t(\tilde{\pi})} p_t(\tilde{\pi}) (\beta g_t(\pi) - \frac{\beta g_t(\pi)}{\sum_{\pi' \in \Pi_t(\tilde{\pi})}p_t(\pi')})^2
    \\
    &\qquad\qquad\qquad\qquad\qquad\qquad\quad\scriptstyle+
    \sum_{\tilde{\pi}: \pi \in \Pi_t(\tilde{\pi})^c} p_t(\tilde{\pi}) (\beta g_t(\pi))^2
    \\
    &\scriptstyle \leq
    \sum_{\tilde{\pi} \in \Pi_t^\ast} p_t(\tilde{\pi}) (\beta g_t(\pi))^2
    +
    \sum_{\tilde{\pi}: \pi \in \Pi_t(\tilde{\pi})} p_t(\tilde{\pi}) (\beta g_t(\pi) - \frac{\beta g_t(\pi)}{p_t(\tilde{\pi})})^2
    \\
    &\qquad\qquad\qquad\qquad\qquad\qquad\quad\scriptstyle+
    \sum_{\tilde{\pi}: \pi \in \Pi_t(\tilde{\pi})^c} p_t(\tilde{\pi}) (\beta g_t(\pi))^2
    \\
    &\scriptstyle \leq
    \frac{\beta^2}{p_t(\pi)} ~ (\because ~ g_t(\pi) \in [0, 1])
\end{align*}

\textbf{Step 3-2-3: Combine.}
% Combining the above results and applying the fact that $1 + x \leq \text{exp}(x)$ is suffice.
% The case where 
% $\pi \in (\Pi_t^\ast)^c$ can be shown in exactly the same manner.
Combining the above results and applying the fact that $1 + x \leq \text{exp}(x)$,
\begin{align*}
    \Exp_t\left[ \exp\left( \Delta_t(\pi) - \beta \times (C2) \right) \right]
    &\leq
    \Exp_t \Big[ 
        (1 + \Delta_t(\pi) + \Delta_t(\pi)^2)
        \times 
        \text{exp}( -\beta \times (C2) )
    \Big]
    \\
    & \leq
    \text{exp}(
        -\frac{\beta^2}{ p_t(\pi)}
    )
    \Exp_t \Big[ 
        (1 + \Delta_t(\pi) + \Delta_t(\pi)^2)
    \Big]
    \\
    &\leq 
    \text{exp}\left(
        -\frac{\beta^2}{p_t(\pi)}
    \right)
    \left( 
        1 
        +
        \frac{\beta^2}{p_t(\pi)}
    \right)
    \\
    &\leq 1 ~ (\because ~ 1 + x \leq \text{exp}(x)).
\end{align*}

By sequentially applying the double expectation rule for $t=T, \dots, 1$, 
\begin{equation}\label{eq:lem:unlock_armwise_guarantee-new-1}
    \Exp \exp\left[ \sum_{t=1}^T \left( \Delta_t(\pi) - \beta \times (C2)\right) \right] \le 1.
\end{equation}

Moreover, from the Markov's inequality, we have $\mathbb{P}\left( X > \ln(1/\delta) \right) = \mathbb{P}\left( \exp(X) > 1/\delta \right) \le \delta \Exp \exp(X)$.
Combined with Eq.~\ref{eq:lem:unlock_armwise_guarantee-new-1}, for every $\pi \in \Pi$,
\begin{equation*}
    \beta\sum_{t=1}^T g_{t}(\pi) \le \beta\sum_{t=1}^T \tilde{g}_{t}(\pi | \Pi_t(\pi_t)) + \ln(\delta^{-1})
\end{equation*}
with probability at least $1 - \delta$.
This completes the proof.
\end{proof}

\paragraph{Proof of Theorem~\ref{thm:EXP3.P-CP-UNLOCK}.}
Now, we derive the regret bound of \texttt{OCP-Unlock} (Algorithm~\ref{alg:exp3p-cp-semi}), which consists of three steps. 

% Assumptions
First, our goal is to show that, if $\gamma \le \tfrac{1}{2}$ and $(1+\beta)K\eta \le \gamma$,

% Main bound
\begin{equation}\label{eq:unlock_s-exp3p-cp:generalbound-new}
% \Reg(T,\alpha)
% \le
% \frac{\ell_{\text{max}} - \ell_{\text{min}}}{1 + \epsilon(\lambda, \alpha)}
% \left(
%     C \beta T + \epsilon(\lambda, \alpha)T + o(T)^{-1}T + \gamma T + (1+\beta)\eta C T
%     + \frac{\ln(K/\delta)}{\beta}
%     + \frac{\ln K}{\eta}
% \right),
    \Reg(T)
    \le
    (\ell_{\text{max}} - \ell_{\text{min}})
    \left(
        K \beta T 
        + 
        o(T)^{-1}T 
        + 
        \gamma T
        + 
        (1+\beta)\eta K T
        + \frac{\ln(K\delta^{-1})}{\beta}
        + \frac{\ln K}{\eta}
    \right).
\end{equation}

Irrespective of the hyperparameter setup, note that Theorem~\ref{thm:EXP3.P-CP-UNLOCK} always holds if $T < \sqrt{\frac{TK}{\ln K}} \ln(\delta^{-1}) + 5.15\sqrt{T K \text{ln} K} + \sqrt{T}$.
If $T \geq \sqrt{\frac{TK}{\ln K}} \ln(\delta^{-1}) + 5.15\sqrt{T K \text{ln} K} + \sqrt{T}$, this implies that $\gamma \leq \frac{1}{2}$ and $(1+\beta)K\eta \leq \gamma$,
which makes it suffice to show that Eq.~\ref{eq:unlock_s-exp3p-cp:generalbound-new} holds for $\gamma \le \tfrac{1}{2}$ and $(1+\beta)K\eta \le \gamma$.

% Step 1: notation and simple inequalities
\textbf{Step 1: Simple equalities.}
For all $\pi \in \Pi$, the following equality holds:
\begin{equation}\label{eq:unlock_s-exp3p-cp:armwise_equality-1-new}
\begin{aligned}
    R_{\pi}(T)
    &\coloneqq \sum_{t=1}^{T}\ell_{t}(\pi_t) - \sum_{t=1}^{T}\ell_{t}(\pi) \\
    &= 
    (\ell_{\text{max}} - \ell_{\text{min}})
    \left(
        \sum_{t=1}^{T} g_{t}(\pi) - \sum_{t=1}^{T} g_{t}(\pi_t)
    \right) ~ (\because ~ \text{Eq.}~\ref{eq:method:true_normalized_gain}) \\
    &\leq
    (\ell_{\text{max}} - \ell_{\text{min}})
    \left(
        K \beta T 
        + 
        o(T)^{-1}T 
        + 
        \sum_{t=1}^{T} g_{t}(\pi)
        - 
        \sum_{t=1}^{T}\Exp_{\tilde{\pi}\sim p_t}\tilde g_{t}(\tilde{\pi} \mid \Pi_t(\pi_t))
    \right) ~ (\because ~ \text{Eq.}~\ref{eq:unlock_useful_inequality_combined}).
\end{aligned}
\end{equation}
Using the definition of cumulant generating function and the relationship that $p_t = (1 - \gamma) \omega_t + \gamma u$ where $
    \omega_t(\pi) 
    =
    \frac{\text{exp}\left(\eta\tilde{G}_{t-1}(\pi)\right)}{\sum_{\tilde{\pi} \in \Pi}~\text{exp}\left(\eta\tilde{G}_{t-1}(\tilde{\pi})\right)}
$
and $u$ is the uniform distribution over $K$ arms, the following holds:
\begin{equation}
    \label{eq:unlock_s-exp3p-cp:cgf-uniform-new}
    \begin{aligned}
        -\mathbbm{E}_{\tilde{\pi} \sim p_t} & \tilde{g}_t(\tilde{\pi} \mid \Pi_t(\pi_t)) 
        \\
        &=
        -(1-\gamma) \mathbbm{E}_{\tilde{\pi} \sim \omega_t} \tilde{g}_t(\tilde{\pi} \mid \Pi_t(\pi_t))
        -\gamma \mathbbm{E}_{\tilde{\pi} \sim u} \tilde{g}_t(\tilde{\pi} \mid \Pi_t(\pi_t))
        ~ (\because p_t = (1 - \gamma) \omega_t + \gamma u)\\
        &=
        (1-\gamma) \bigg[ 
            \frac{1}{\eta}\text{ln}\mathbbm{E}_{\tilde{\pi} \sim \omega_t}\text{exp}\left(
                \eta(
                    \tilde{g}_t(\tilde{\pi} \mid \Pi_t(\pi_t)) 
                    - 
                    \mathbbm{E}_{\tilde{\pi} \sim \omega_t} \tilde{g}_{t}(\tilde{\pi} \mid \Pi_t(\pi_t)) 
                )
            \right)
            -\\
        &\qquad\qquad\qquad\qquad
            \frac{1}{\eta}\text{ln}\mathbbm{E}_{\tilde{\pi} \sim \omega_t} \text{exp}(\eta \tilde{g}_t(\tilde{\pi} \mid \Pi_t(\pi_t))) 
        \bigg]
        - 
        \gamma\mathbbm{E}_{\tilde{\pi} \sim u}\tilde{g}_t(\tilde{\pi} \mid \Pi_t(\pi_t)).
    \end{aligned}
\end{equation}

% Step 2: cgf bound on the first term
\textbf{Step 2: Bounding the first term of Eq.~\ref{eq:unlock_s-exp3p-cp:cgf-uniform-new}.}
First, we show that irrespective of the choice of $\pi_t$,
\begin{align*}
    \eta\tilde g_{t}(\pi | \Pi_t(\pi_t))
    \leq 
    1
    ~ \forall \pi \in \Pi.
\end{align*}
\begin{itemize}
    \item $m(\pi_t) = 0$, $\pi \in \Pi_t^\ast$
    \begin{align*}
        \eta \tilde{g}_t(\pi | \Pi_t(\pi_t))
        &=
        \eta \left( 
            \frac{
                g_t(\pi)
            }{
                \sum_{\tilde{\pi} \in \Pi_t^\ast} p_t(\tilde{\pi})
            }
            +
            \frac{\beta}{p_t(\pi)}
        \right)
        \\
        &\leq 
        \frac{
            \eta
            (
                g_t(\pi) + \beta
            )
        }{
            p_t(\pi)
        }
        \\
        &\leq 
        \frac{
            \eta
            (1 + \beta)
        }{
            (1-\gamma) w_t(\pi)
            +
            \gamma \frac{1}{K}
        }
        ~ (\because g_t(\pi) \in [0, 1])
        \\
        &\leq  1 ~ (\because (1 + \beta) K \eta \leq \gamma)
    \end{align*}

    \item $m(\pi_t) = 1$, $\pi \in \Pi_t(\pi_t)$
    \begin{align*}
        \eta \tilde{g}_t(\pi | \Pi_t(\pi_t))
        &=
        \eta
        \left(
            \frac{
                g_t( \pi )
            }{
                \sum_{\tilde{\pi} \in \Pi_t(\pi_t)} p_t( \tilde{\pi} )
            }
            +
            \frac{
                \beta
            }{
                p_t(\pi)    
            }
        \right)
        \\
        &\leq
        \frac{
            \eta (
                g_t(\pi) + \beta
            )
        }{
            p_t(\pi)
        }
        \\
        &\leq 
        \frac{
            \eta
            (1 + \beta)
        }{
            (1-\gamma) w_t(\tilde{\pi})
            +
            \gamma \frac{1}{K}
        }
        ~ (\because g_t(\pi) \in [0, 1])
        \\
        &\leq 1 ~ (\because (1 + \beta) K \eta \leq \gamma)
    \end{align*}

    \item $m(\pi_t) = 0$, $\pi \in (\Pi_t^\ast)^c$; $m(\pi_t) = 1$, $\pi \in \Pi_t(\pi_t)^c$
    \begin{align*}
        \eta \tilde{g}_t(\pi | \Pi_t(\pi_t))
        &=
        \eta
        \frac{\beta}{p_t(\pi)}
        \\
        &\leq
            \frac{\eta(1 + \beta)}{(1 - \gamma) w_t(\pi) + \gamma \frac{1}{K}}
        \\        
        &\leq 1 ~ (\because (1 + \beta) K \eta \leq \gamma)
    \end{align*}
\end{itemize}

Since 
(1) $\ln x\le x-1$, 
(2) $\exp(x)\le 1+x+x^2$ for all $x\le 1$, 
and 
(3) $\eta\tilde g_{t}(\tilde{\pi} | \Pi_t(\pi_t))\leq 1$,

\begin{equation}\label{eq:unlock_s-exp3p-cp:cgf-uniform:upperbd-new}
\begin{aligned}
    &\ln \Exp_{\tilde{\pi} \sim \omega_t}
    \exp\left(\eta(\tilde g_{t}(\tilde{\pi} \mid \Pi_t(\pi_t))-\Exp_{\tilde{\pi} \sim \omega_t}\tilde g_{t}(\tilde{\pi} \mid \Pi_t(\pi_t)))\right)\\
    &\qquad\qquad= \ln \Exp_{\tilde{\pi} \sim \omega_t}\text{exp}(\eta\tilde g_{t}(\tilde{\pi} \mid \Pi_t(\pi_t))) - \eta \Exp_{\tilde{\pi} \sim \omega_t} \tilde g_{t}(\tilde{\pi} \mid \Pi_t(\pi_t)) \\
    &\qquad\qquad\le \Exp_{\tilde{\pi} \sim \omega_t} \left\{ \exp(\eta \tilde{g}_{t}(\tilde{\pi} \mid \Pi_t(\pi_t))) - 1 - \eta \tilde{g}_{t}(\tilde{\pi} \mid \Pi_t(\pi_t)) \right\} ~ (\because \ln x \leq x - 1)\\
    &\qquad\qquad\le \Exp_{\tilde{\pi} \sim \omega_t} \eta^2 \tilde{g}_{t}(\tilde{\pi} \mid \Pi_t(\pi_t))^2 ~ (\because \exp(x) \leq 1 + x + x^2)
    \\
    &\qquad\qquad\leq 
    \eta^2 \frac{1+\beta}{1-\gamma}
        \sum_{\tilde{\pi}\in \Pi}
        \tilde{g}_t(\tilde{\pi} | \Pi_t(\pi_t)),
\end{aligned}
\end{equation}
where the last inequality holds due to the following.
\begin{itemize}
    \item $m(\pi_t) = 0$
    \begin{align*}
        &\scriptstyle\Exp_{\tilde{\pi} \sim \omega_t} \eta^2 \tilde{g}_{t}(\tilde{\pi} \mid \Pi_t(\pi_t))^2  
        \\
        &\scriptstyle=
        \eta^2 \left\{ 
            \sum_{\tilde{\pi} \in \Pi_t^\ast}
            \omega_t(\tilde{\pi})
            \left( 
                \frac{g_t(\tilde{\pi})}{\sum_{\pi' \in \Pi_t^\ast}p_t(\pi')}
                +
                \frac{\beta}{p_t(\tilde{\pi})}
            \right)^2
            +
            \sum_{\tilde{\pi} \in (\Pi_t^\ast)^c}            
            \omega_t(\tilde{\pi})
            \left( 
                \frac{\beta}{p_t(\tilde{\pi})}
            \right)^2
        \right\}
        \\
        &\scriptstyle\leq
        \eta^2 \left\{ 
            \sum_{\tilde{\pi} \in \Pi_t^\ast}
            \omega_t(\tilde{\pi})
            \left( 
                \frac{1 + \beta}{p_t(\tilde{\pi})}
            \right)
            \tilde{g}_t(\tilde{\pi} | \Pi_t(\pi_t))
            +
            \sum_{\tilde{\pi} \in (\Pi_t^\ast)^c}            
            \omega_t(\tilde{\pi})
            \left( 
                \frac{1 + \beta}{p_t(\tilde{\pi})}
            \right)
            \tilde{g}_t(\tilde{\pi} | \Pi_t(\pi_t))
        \right\}
        ~ (\because g_t(\tilde{\pi}) \in [0, 1])
        \\
        &\scriptstyle\leq
        \eta^2 \frac{1 + \beta}{1 - \gamma} \left\{ 
            \sum_{\tilde{\pi} \in \Pi_t^\ast}
            \tilde{g}_t(\tilde{\pi} | \Pi_t(\pi_t))
            +
            \sum_{\tilde{\pi} \in (\Pi_t^\ast)^c}            
            \tilde{g}_t(\tilde{\pi} | \Pi_t(\pi_t))
        \right\}
        ~ (\because \frac{w_t(\tilde{\pi})}{p_t(\tilde{\pi})} \leq \frac{1}{1 - \gamma})
        \\
        &\scriptstyle=
        \eta^2 \frac{1 + \beta}{1 - \gamma}
        \sum_{\tilde{\pi} \in \Pi}
        \tilde{g}_t(\tilde{\pi} | \Pi_t(\pi_t))
    \end{align*}

    \item $m(\pi_t) = 1$
    \begin{align*}
        &\scriptstyle \Exp_{\tilde{\pi} \sim \omega_t} \eta^2 \tilde{g}_{t}(\tilde{\pi} \mid \Pi_t(\pi_t))^2
        \\
        &\scriptstyle=
        \eta^2 \left\{ 
            \sum_{\tilde{\pi} \in \Pi_t(\pi_t)}
            \omega_t(\tilde{\pi})
            \left( 
                \frac{g_t(\tilde{\pi})}{\sum_{\pi' \in \Pi_t(\pi_t)}p_t(\pi')}
                +
                \frac{\beta}{p_t(\tilde{\pi})}
            \right)^2
            +
            \sum_{\tilde{\pi} \in \Pi(\pi_t)^c}            
            \omega_t(\tilde{\pi})
            \left( 
                \frac{\beta}{p_t(\tilde{\pi})}
            \right)^2
        \right\}
        \\
        &\scriptstyle\leq
        \eta^2 \left\{ 
            \sum_{\tilde{\pi} \in \Pi_t(\pi_t)}
            \omega_t(\tilde{\pi})
            \left( 
                \frac{1 + \beta}{p_t(\tilde{\pi})}
            \right)
            \tilde{g}_t(\tilde{\pi} | \Pi_t(\pi_t))
            +
            \sum_{\tilde{\pi} \in \Pi_t(\pi_t)^c}            
            \omega_t(\tilde{\pi})
            \left( 
                \frac{1 + \beta}{p_t(\tilde{\pi})}
            \right)
            \tilde{g}_t(\tilde{\pi} | \Pi_t(\pi_t))
        \right\}
        ~ (\because g_t(\tilde{\pi}) \in [0, 1])
        \\
        &\scriptstyle\leq
        \eta^2 \frac{1 + \beta}{1 - \gamma} 
        \sum_{\tilde{\pi} \in \Pi}
        \tilde{g}_t(\tilde{\pi} | \Pi_t(\pi_t))
        ~ (\because \frac{w_t(\tilde{\pi})}{p_t(\tilde{\pi})} \leq \frac{1}{1 - \gamma})
    \end{align*}
\end{itemize}

% Step 3: summing
\textbf{Step 3: Summing.}
Let $\tilde{G}_0(\tilde{\pi}) = 0$.
Then, combining Eq.~\ref{eq:unlock_s-exp3p-cp:cgf-uniform-new}-Eq.~\ref{eq:unlock_s-exp3p-cp:cgf-uniform:upperbd-new} and summing over $t$ yield
\begin{equation}\label{eq:unlock_s-exp3p-cp:final-upperbd-new}
\scalebox{0.85}{$\begin{aligned}
    &-\sum_{t=1}^{T}\Exp_{\tilde{\pi} \sim p_t}\tilde g_{t}(\tilde{\pi} | \Pi_t(\pi_t)) \\
    &\qquad\qquad\le 
    (1+\beta) \eta \sum_{t=1}^{T} \sum_{\tilde{\pi}\in \Pi}
        \tilde{g}_t(\tilde{\pi} | \Pi_t(\pi_t))
    - 
    \frac{1-\gamma}{\eta}\sum_{t=1}^{T}
    \ln\left( \sum_{\tilde{\pi} \in \Pi} w_{t}(\tilde{\pi}) \exp( \eta \tilde g_{t}(\tilde{\pi} \mid \Pi_t(\pi_t)) ) \right) \\
    &\qquad\qquad = 
    (1+\beta) \eta \sum_{t=1}^{T} \sum_{\tilde{\pi}\in \Pi}
        \tilde{g}_t(\tilde{\pi} | \Pi_t(\pi_t))
    - \frac{1-\gamma}{\eta}
    \ln \left( \frac{\sum_{\tilde{\pi} \in \Pi} \exp(\eta\tilde G_{T}(\tilde{\pi}))}{\sum_{\tilde{\pi} \in \Pi} \exp(\eta\tilde G_{0}(\tilde{\pi})} \right) ~ (\because ~ \text{Definition of} ~ \omega_t(\tilde{\pi}), \tilde{G}_t(\tilde{\pi})) \\
    &\qquad\qquad
    \le
    (1+\beta) \eta K 
    \max_{\tilde{\pi} \in \Pi}    \tilde G_{T}(\tilde{\pi}) 
    + \frac{\ln K}{\eta}
    - \frac{1 - \gamma}{\eta}
    \ln\left(\sum_{\tilde{\pi} \in \Pi}\exp(\eta \tilde G_{T}(\tilde{\pi}))\right) ~ (\because 1 - \gamma \leq 1 ~\text{and}~\tilde{G}_0(\tilde{\pi})=0) \\
    &\qquad\qquad\le -(1- \gamma -(1+\beta)\eta K)
    \max_{\tilde{\pi} \in \Pi}    \tilde G_{T}(\tilde{\pi}) 
    + \frac{\ln K}{\eta} ~ (\because ~ \text{Property of log-sum-exponential}) \\
    &\qquad\qquad\le -(1-\gamma-(1+\beta)\eta K)
    \max_{\tilde{\pi} \in \Pi}   \sum^T_{t=1} g_{t}(\tilde{\pi}) 
    + \frac{\ln(K\delta^{-1})}{\beta}
    + \frac{\ln K}{\eta},
\end{aligned}
$}\end{equation}
where the last inequality holds due to the Lemma~\ref{lem:unlock_armwise_guarantee_new} and the initial assumption that $\gamma \leq \frac{1}{2}$ and $(1+\beta)K\eta \leq \gamma$.
Plugging Eq.~\ref{eq:unlock_s-exp3p-cp:final-upperbd-new} into Eq.~\ref{eq:unlock_s-exp3p-cp:armwise_equality-1-new}, the following holds with probability $1-\frac{\delta}{K}$ for all $\pi \in \Pi$:
\begin{align*}
    R_\pi(T)
    &\le
    (\ell_{\text{max}} - \ell_{\text{min}})
    \left(
        K \beta T 
        + 
        o(T)^{-1}T 
        + 
        \sum_{t=1}^{T} g_{t}(\pi)
        - 
        \sum_{t=1}^{T}\Exp_{\tilde{\pi}\sim p_t}\tilde g_{t}(\tilde{\pi} \mid \Pi_t(\pi_t))
    \right)
        % \\
    % &\le
    % (\ell_{\text{max}} - \ell_{\text{min}})
    % \left(
    %     K \beta T 
    %     + 
    %     o(T)^{-1}T 
    %     + 
    %     \sum_{t=1}^{T} g_{t}(\pi)
    %     -(1-\gamma-(1+\beta)\eta K)
    %     \max_{\tilde{\pi} \in \Pi}   \sum^T_{t=1} g_{t}(\tilde{\pi}) 
    %     + \frac{\ln(K\delta^{-1})}{\beta}
    %     + \frac{\ln K}{\eta}
    % \right)
    \\
    &\le
    (\ell_{\text{max}} - \ell_{\text{min}})
    \left(
        K \beta T 
        + 
        o(T)^{-1}T 
        + 
        \gamma T
        + 
        (1+\beta)\eta K T
        + \frac{\ln(K\delta^{-1})}{\beta}
        + \frac{\ln K}{\eta}
    \right).
\end{align*}
Since $\Reg(T) = \underset{\pi \in \Pi}{\text{max}} R_\pi(T)$, this completes the proof by taking the union bound.

\clearpage 

\section{Proof of Theorem~\ref{thm:EXP3.P-CP-UNLOCK-PS}}
\label{appdx:proof:exp3p-cp-unlock}
The biased gain estimator $\tilde{g}_t(\pi | \Pi_t(\pi_t))$ of \texttt{OCP-Unlock+} (Eq.~\ref{eq:method:unlock:tilde_g_t}) satisfies the following inequality.
\begin{itemize}
    \item Case 1 ($m_t(\pi_t)=0$)
    \begin{equation}\label{eq:exp3p-unlock-expectation-mt0}
    \begin{aligned}
        \scriptstyle\mathbbm{E}_{\pi \sim p_t} \tilde{g}_t( \pi | \Pi_t(\pi_t) ) 
        &\scriptstyle= 
        \sum_{\pi \in \Pi} p_t(\pi) \tilde{g}_t(\pi | \Pi_t(\pi_t))
        \\
        &\scriptstyle=
        \sum_{\pi \in \Pi_t^\ast} p_t(\pi) \tilde{g}_t(\pi | \Pi_t(\pi_t))
        + 
        \sum_{\pi \in (\Pi_t^\ast)^c} p_t(\pi) \tilde{g}_t(\pi | \Pi_t(\pi_t))
        \\
        &\scriptstyle=
        \sum_{\pi \in \Pi_t^\ast} p_t(\pi) 
        \left\{
            \frac{
                g_t(\pi)
                +
                \beta
            }{
                \sum_{\tilde{\pi} \in \Pi_t^\ast} p_t(\tilde{\pi})    
            }
            +
            \beta
        \right\}
        +
        \sum_{\pi \in (\Pi_t^\ast)^c} p_t(\pi) 
        \left\{
            g_t(\pi)
            +
            \frac{
                \beta
            }{
                \sum_{\tilde{\pi} \leq \pi} p_t(\tilde{\pi})    
            }
        \right\}
        \\
        &\scriptstyle\leq
        (1 + {\color{\cc}o(T)^{-1}}) g_t(\pi_t) + o(T)^{-1}
        +
        \left(
            2
            + 
            \frac{
                \sum_{\tilde{\pi} \in (\Pi_t^\ast)^c} p_t(\tilde{\pi})
            }{
                \sum_{\tilde{\pi} \in \Pi_t^\ast} p_t(\tilde{\pi})
            }
        \right) \beta
        ~ (\because ~ \text{Eq.}~\ref{eq:useful_inequality1})
    \end{aligned}
    \end{equation}

    \item Case 2 ($m_t(\pi_t)=1$)
    \begin{equation}\label{eq:exp3p-unlock-expectation-mt1}
    \begin{aligned}
        \scriptstyle\mathbbm{E}_{\pi \sim p_t} \tilde{g}_t( \pi | \Pi_t(\pi_t) ) 
        &\scriptstyle= 
        \sum_{\pi \in \Pi} p_t(\pi) \tilde{g}_t(\pi | \Pi_t(\pi_t))
        \\
        &\scriptstyle=
        \sum_{\pi \in \Pi_t(\pi_t)} p_t(\pi) \tilde{g}_t(\pi | \Pi_t(\pi_t))
        + 
        \sum_{\pi \in \Pi_t(\pi_t)^c} p_t(\pi) \tilde{g}_t(\pi | \Pi_t(\pi_t))
        \\
        &\scriptstyle=
        \sum_{\pi \in \Pi_t(\pi_t)} p_t(\pi) 
        \left\{
            g_t(\pi)
            +
            \frac{
                \beta
            }{
                \sum_{\tilde{\pi} \leq \pi} p_t(\tilde{\pi})    
            }
        \right\}
        +
        \sum_{\pi \in \Pi_t(\pi_t)^c} p_t(\pi) 
        \left\{
            \tilde{g}_t(\pi)
            +
            \frac{
                \beta
            }{
                p_t(\pi)    
            }
            +
            \beta
        \right\}
        \\
        &\scriptstyle\leq
        g_t(\pi_t)
        +
        o(T)^{-1}
        +
        \left(
            1
            +
            |\Pi_t(\pi_t)^c|
            +
            \frac{
                \sum_{\tilde{\pi} \in \Pi_t(\pi_t)} p_t(\tilde{\pi})
            }{
                \sum_{\tilde{\pi} \leq \pi} p_t(\tilde{\pi})
            }            
        \right) \beta
        ~ (\because ~ \text{Eq.}~\ref{eq:useful_inequality2})
    \end{aligned}
    \end{equation}
\end{itemize}

Combining Eq.~\ref{eq:exp3p-unlock-expectation-mt0} and Eq.~\ref{eq:exp3p-unlock-expectation-mt1},
the following upper bound holds irrespective of the choice of $\pi_t$:
\begin{equation}
    \label{eq:useful_inequality_combined}
    \scalebox{0.85}{$\begin{aligned}
        \mathbbm{E}_{\pi \sim p_t} \tilde{g}_t( \pi | \Pi_t(\pi_t) )
        &\leq 
        (1 + {\color{\cc}o(T)^{-1}}) g_t(\pi_t)
        +
        o(T)^{-1}
        \\
        &
        \qquad\qquad
        +
        \underbrace{\left(
            1
            +
            \left\{
                \mathbbm{1}(m_t(\pi_t)=0)
                1
                +
                \mathbbm{1}(m_t(\pi_t)=1)
                |\Pi_t(\pi_t)^c|
            \right\}
            + 
            \frac{
                \sum_{\tilde{\pi} \in (\Pi_t^\ast)^c} p_t(\tilde{\pi})
            }{
                \sum_{\tilde{\pi} \in \Pi_t^\ast} p_t(\tilde{\pi})
            }
        \right)}_{C_t} \beta.
    \end{aligned}$}
\end{equation}

\paragraph{Arm-wise High-probability Bound.}
Before moving on to the regret analysis, 
we provide the following lemma, a variant of Lemma~\ref{lem:armwise_guarantee}.

\begin{lemma}\label{lem:armwise_guarantee_new}
    Let $\beta \le 1$ and $\delta \in (0, 1)$. 
    Then, for each $\pi \in \Pi$, the following holds with probability at least $1 - \delta$:
    \begin{equation*}
        \sum_{t=1}^T g_{t}(\pi) 
        \le 
        \sum_{t=1}^T 
        \tilde{g}_{t}(\pi | \Pi_t(\pi_t)) 
        + 
        \frac{\ln(\delta^{-1})}{\beta}.
    \end{equation*}
\end{lemma}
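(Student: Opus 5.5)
The plan follows the template of Lemma~\ref{lem:armwise_guarantee} and Lemma~\ref{lem:unlock_armwise_guarantee_new}: build an exponential supermartingale for a fixed arm $\pi$, then apply Markov's inequality.

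\textbf{Decomposition.} I split the \texttt{OCP-Unlock+} estimator as $\tilde g_t(\pi\mid\Pi_t(\pi_t)) = (C1)+(C2)$.
\begin{itemize}
\item $(C1)$ collects the gain parts: $\mathbbm{1}(\pi\in\Pi_t^\ast)\,g_t(\pi)/\sum_{\tilde\pi\in\Pi_t^\ast}p_t(\tilde\pi)$ and $\mathbbm{1}(\pi\notin\Pi_t^\ast)\,g_t(\pi)$ when $m_t(\pi_t)=0$. When $m_t(\pi_t)=1$ it is $g_t(\pi)$ on $\Pi_t(\pi_t)$ and $\tilde g_t(\pi)$ off it.
\item $(C2)$ collects all the $\beta$-terms, including the offsets $\beta$ (the ``$1\cdot\beta$'' parts).
\end{itemize}
I set $\Delta_t(\pi) := \beta g_t(\pi) - \beta\,(C1)$. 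Because $g_t,\tilde g_t\in[0,1]$ and $\beta\le 1$, we get $\Delta_t(\pi)\le \beta\le 1$. So $e^x\le 1+x+x^2$ applies, and
\[
\Exp_t\exp\bigl(\Delta_t(\pi)-\beta (C2)\bigr)\le \bigl(1+\Exp_t\Delta_t+\Exp_t\Delta_t^2\bigr)\exp\bigl(-\beta\min(C2)\bigr),
\]
where $\min(C2)$ is a lower bound on $(C2)$ that holds for every realization of $\pi_t$.

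\textbf{Key per-round bound.} I aim to show $\Exp_t\Delta_t(\pi)\le 0$ and $\Exp_t\Delta_t(\pi)^2\le \beta^2/p_t(\pi)\le \beta\,\min(C2)$. I split into the cases $\pi\in\Pi_t^\ast$ and $\pi\in(\Pi_t^\ast)^c$; these are determined by the adversary before $\pi_t$ is drawn. The expectation is over $\pi_t\sim p_t$.
\begin{itemize}
\item \emph{Case $\pi\in\Pi_t^\ast$.} If $\pi_t\in\Pi_t^\ast$, then $(C1)$ is the upweighted gain and its $p_t$-weighted contribution exactly equals $g_t(\pi)$. If $\pi_t\notin\Pi_t^\ast$, then $\pi<\pi_t$ by monotonicity, so $\pi\in\Pi_t(\pi_t)^c$ and $(C1)=\tilde g_t(\pi)\ge 0$. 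This gives $\Exp_t\Delta_t\le 0$. For the second moment, $\Delta_t^2\le\beta^2$ on the second event and $\le\beta^2\bigl(1/\sum_{\Pi_t^\ast}p_t\bigr)^2$ on the first, so $\Exp_t\Delta_t^2\le \beta^2 + \beta^2/\sum_{\Pi_t^\ast}p_t$. This is dominated by $\beta(C2)\ge\beta^2\bigl(1+1/\sum_{\Pi_t^\ast}p_t\bigr)$ when $m_t(\pi_t)=0$, and by $\beta^2(1+1/p_t(\pi))$ when $m_t(\pi_t)=1$. This is exactly the reason for the offset terms $1$.
\item \emph{Case $\pi\in(\Pi_t^\ast)^c$.} Here I use that $g_t(\pi)=o(T)^{-1}g_t(\pi_t)$ is tiny, and that $(C1)\ge g_t(\pi)$ on every event except $\pi_t>\pi$ with $m_t(\pi_t)=1$. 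On that event $(C1)=\tilde g_t(\pi)\ge g_t(\pi)$ by the pseudo-gain property preceding Eq.~\ref{eq:useful_inequality2}. Hence $\Delta_t\le 0$ pointwise, so the mean is nonpositive and $\Delta_t^2\le\beta^2$. In every branch $(C2)\ge\beta$ (the $\beta/\sum_{\tilde\pi\le\pi}p_t\ge\beta$ term or the offset), which covers the variance.
\end{itemize}
Combining with $1+u\le e^u$ then gives $\Exp_t\exp(\Delta_t-\beta(C2))\le 1$.

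\textbf{Finish.} Iterating conditional expectations from $t=T$ down to $1$ gives $\Exp\exp\sum_t(\Delta_t-\beta(C2))\le 1$. Markov's inequality then gives, with probability at least $1-\delta$, $\beta\sum_t g_t(\pi)\le\beta\sum_t\tilde g_t(\pi\mid\Pi_t(\pi_t))+\ln\delta^{-1}$. Dividing by $\beta$ yields the claim.

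The main obstacle is the second-moment step in the case $\pi\in\Pi_t^\ast$. The importance weight uses $\sum_{\Pi_t^\ast}p_t$ rather than $p_t(\pi)$, so the variance is bounded by $\beta^2/\sum_{\Pi_t^\ast}p_t$ plus a $\beta^2$ term. I must check carefully that the offset-augmented $(C2)$ dominates this in both branches of $m_t(\pi_t)$. If that check fails, I would instead verify $\Delta_t\le 0$ pointwise whenever possible.
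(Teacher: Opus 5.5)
Your route is the paper's. You use the same split $\tilde g_t(\pi\mid\Pi_t(\pi_t))=(C1)+(C2)$ and the same $\Delta_t(\pi)$. Your treatment of $\pi\in\Pi_t^\ast$ matches Steps 3-1-1 to 3-1-3: the mean is $\le 0$, the second moment is $\le\beta^2+\beta^2/\sum_{\tilde\pi\in\Pi_t^\ast}p_t(\tilde\pi)$, and this is absorbed by the offset-augmented $(C2)$ in both branches (the $m_t(\pi_t)=1$ branch silently uses $p_t(\pi)\le\sum_{\tilde\pi\in\Pi_t^\ast}p_t(\tilde\pi)$).

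The gap is in the case $\pi\in(\Pi_t^\ast)^c$ on the event $\{m_t(\pi_t)=1,\ \pi<\pi_t\}$. There you assert $\tilde g_t(\pi)\ge g_t(\pi)$ ``by the pseudo-gain property preceding Eq.~\ref{eq:useful_inequality2}''. That property compares $\tilde g_t(\pi)$ with $g_t(\pi_t)$, not with $g_t(\pi)$, and as stated it is an upper bound, $\tilde g_t(\pi)\le g_t(\pi_t)+o(T)^{-1}$. The sign hidden in its derivation, $\tilde g_t(\pi)\ge g_t(\pi_t)$ for $\pi<\pi_t$, cannot be chained either. Eq.~\ref{eq:penalty_loss_property2} gives $g_t(\pi)\ge g_t(\pi_t)$ for $\pi<\pi_t$ inside $(\Pi_t^\ast)^c$, which points the wrong way. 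Your remark that $g_t(\pi)=o(T)^{-1}g_t(\pi_t)$ is ``tiny'' does not help: it is asymptotic and gives no pointwise sign for $\Delta_t$.

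The fact you need is an exact identity, and it is what the paper uses in Steps 3-2-1 and 3-2-2. If $\pi\in(\Pi_t^\ast)^c$ then $m_t(\pi)=1$. By Eq.~\ref{eq:miscoverage_term} and Eq.~\ref{eq:method:inefficiency_term}, $\ell_t(\pi)=1-\alpha(1-\alpha)-\frac{c\alpha}{1+\alpha(1-2\alpha)\pi}o(T)^{-1}$, which is exactly $\tilde\ell_t(\pi)$. In other words, the pseudo-gain is the true gain of a miscovering arm, so $\tilde g_t(\pi)=g_t(\pi)$. Hence $(C1)=g_t(\pi)$ on every event and $\Delta_t(\pi)\equiv 0$. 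Then $\Exp_t\exp(\Delta_t-\beta(C2))=\Exp_t\exp(-\beta(C2))\le 1$ with no moment computation. With this substitution the rest of your argument (supermartingale, Markov, divide by $\beta$) goes through.

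Separately, drop your stated general aim $\Exp_t\Delta_t^2\le\beta^2/p_t(\pi)\le\beta\min(C2)$; the second inequality fails in general. For $\pi\in\Pi_t^\ast$ it fails when $p_t(\pi)\ll\sum_{\tilde\pi\in\Pi_t^\ast}p_t(\tilde\pi)$. For $\pi\in(\Pi_t^\ast)^c$, $(C2)$ is only bounded below by $\beta/\sum_{\tilde\pi\le\pi}p_t(\tilde\pi)$. The case-specific bounds you actually use are the correct ones.
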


\begin{proof}
\textbf{Step 1: Useful Decomposition.} 
\\
Let $\Exp_t$ be the expectation with respect to the distribution $p_t$, from which $\pi_t$ is sampled.
For each $\pi \in \Pi$, the biased gain estimator can be decomposed as the following:
\begin{align*}
    \tilde{g}_t \left( \pi \mid \Pi_t(\pi_t) \right)
    &\coloneqq
    \underbrace{\mathbbm{1}(m_t(\pi_t)=0)}_{\text{full unlocking}} \times (A)
    +
    \underbrace{\mathbbm{1}(m_t(\pi_t)=1)}_{\text{partial unlocking}} \times (B)\\
    &=
    \underbrace{\mathbbm{1}(m_t(\pi_t)=0)}_{\text{full unlocking}} \times \{ (A1) + (A2) \}
    +
    \underbrace{\mathbbm{1}(m_t(\pi_t)=1)}_{\text{partial unlocking}} \times \{ (B1) + (B2) \}\\
    &=
    \underbrace{\scriptstyle\{
        \mathbbm{1}(m_t(\pi_t)=0) \times (A1) 
        + 
        \mathbbm{1}(m_t(\pi_t)=1) \times (B1)
    \}}_{\scriptstyle(C1)}
    +
    \underbrace{\scriptstyle\{
        \mathbbm{1}(m_t(\pi_t)=0) \times (A2)        
        +
        \mathbbm{1}(m_t(\pi_t)=1) \times (B2)                
    \}}_{\scriptstyle (C2)},
\end{align*}
where
\begin{align*}
    \scriptstyle
    (A)
    =
    \underbrace{
        \scriptstyle
        \mathbbm{1}(\pi \in \Pi_t^\ast)
        \frac{g_t(\pi)}{\sum_{\tilde{\pi} \in \Pi_t^\ast} p_t(\tilde{\pi})}
        +
        \mathbbm{1}(\pi \in (\Pi_t^\ast)^c)
        g_t(\pi)
    }_{(A1)}
    +
    \underbrace{
        \scriptstyle
        \frac{\beta}{
            \mathbbm{1}( \pi \in \Pi_t^\ast )
            \sum_{\tilde{\pi} \in \Pi_t^\ast} p_t(\tilde{\pi})
            +
            \mathbbm{1}( \pi \in (\Pi_t^\ast)^c )
            \sum_{\tilde{\pi} \leq \pi} p_t(\tilde{\pi})
        }
        +
        \mathbbm{1}(\pi \in \Pi_t^\ast) \beta
    }_{(A2)}
\end{align*}
and 
\begin{align*}
    \scriptstyle
    (B)
    =
    \underbrace{
        \scriptstyle
        \mathbbm{1}( \pi \in \Pi_t(\pi_t) )
        g_t(\pi)
        +
        \mathbbm{1}( \pi \in \Pi_t(\pi_t)^c )
        \tilde{g}_t(\pi)
    }_{(B1)}
    +
    \underbrace{
        \scriptstyle
        \frac{\beta}{
            \mathbbm{1}( \pi \in \Pi_t(\pi_t) )
            \sum_{\tilde{\pi} \leq \pi} p_t(\tilde{\pi})
            +
            \mathbbm{1}( \pi \in \Pi_t(\pi_t)^c )
            p_t(\pi)
        }
        +
        \mathbbm{1}(\pi \in \Pi_t(\pi_t)^c)
        \beta
    }_{(B2)}.
\end{align*}
\textbf{Step 2: Show $\Delta_t(\pi) \leq 1$.} 
\\
Next, we show the following claims.
\begin{itemize}
    \item Claim 1 ($m_t(\pi_t)=0$)
    \begin{itemize}
        \item $\pi \in \Pi_t^\ast$
        \begin{align*}
            \beta g_t(\pi)
            - 
            \beta \times (A1)
            =
            \beta g_t(\pi)
            -
            \beta \frac{g_t(\pi)}{\sum_{\tilde{\pi} \in \Pi_t^\ast} p_t(\tilde{\pi})}
            \leq
            0
        \end{align*}

        \item $\pi \in (\Pi_t^\ast)^c$
        \begin{align*}
            \beta g_t(\pi)
            - 
            \beta \times (A1)
            =
            \beta g_t(\pi)
            -
            \beta g_t(\pi)
            =
            0
        \end{align*}

    \end{itemize}

    \item Claim 2 ($m_t(\pi_t)=1$)
    \begin{itemize}
        \item $\pi \in \Pi_t(\pi_t)$
        \begin{align*}
            \beta g_t(\pi)
            - 
            \beta \times (B1)
            =
            \beta g_t(\pi)
            - 
            \beta g_t(\pi)
            = 
            0
        \end{align*}

        \item $\pi \in \Pi_t(\pi_t)^c$
        \begin{align*}
            \beta g_t(\pi)
            - 
            \beta \times (B1)
            &=
            \beta g_t(\pi)
            - 
            \beta \tilde{g}_t(\pi)
            \leq 1
            ~ (\because ~ \beta \leq 1 ~ \text{and} ~ g_t(\pi), \tilde{g}_t(\pi) \in [0, 1])
        \end{align*}
    \end{itemize}
\end{itemize}

Given $\pi \in \Pi$, 
Claim 1 and Claim 2 always hold irrespective of the choice of $\pi_t$ by the algorithm.
Then, by letting $
    \Delta_t(\pi)
    \coloneqq 
    \beta g_t(\pi) - \beta \times (C1)
$, $\Delta_t(\pi) \leq 1$ for all $\pi \in \Pi$.

\textbf{Step 3: Show $\mathbb{E} ~ \text{exp}\left[ \sum_{t=1}^T \left( \Delta_t(\pi) - \beta \times (C2) \right) \right] \leq 1$.} 
\\
Therefore, since (1) $\exp(x)\le 1+x+x^2$ for $x\le1$ and (2) $\Delta_t(\pi) \leq 1$, for $\beta\le1$,
we have
\begin{align*}
    \Exp_t\left[ \exp\left( \Delta_t(\pi) - \beta \times (C2) \right) \right]
    &\leq
    \Exp_t \Big[ 
        (1 + \Delta_t(\pi) + \Delta_t(\pi)^2)
        \times 
        \text{exp}( -\beta \times (C2) )
    \Big]
    \\
    &\leq\scriptstyle
    \left\{ 
        \mathbbm{1}(\pi \in \Pi_t^\ast)
        \text{exp}\left(
            -\frac{\beta^2}{\sum_{\tilde{\pi} \in \Pi_t^\ast} p_t(\tilde{\pi})}
            -\beta^2
        \right)
        +
        \mathbbm{1}(\pi \in (\Pi_t^\ast)^c)
        \text{exp}\left(
            -\frac{\beta^2}{\sum_{\tilde{\pi} \leq \pi} p_t(\tilde{\pi})}
        \right)
    \right\}
    \\
    &\qquad\qquad\qquad\qquad\qquad\qquad\qquad\qquad\scriptstyle
    \times
    \Exp_t \Big[ 
        (1 + \Delta_t(\pi) + \Delta_t(\pi)^2)
    \Big].
\end{align*}
Since $\pi$ is fixed, 
we consider each case where $\pi \in \Pi_t^\ast$ and $\pi \in (\Pi_t^\ast)^c$.

Now, our goal is to show that
\begin{align*}
    \Exp_t\left[ \exp\left( \Delta_t(\pi) - \beta \times (C2) \right) \right]
    \leq 
    1 ~ \forall t \in [T].
\end{align*}

\textbf{Step 3-1: $\pi \in \Pi_t^\ast$.}
\\
\textbf{Step 3-1-1: $
    \Exp_t \Big[ 
        \Delta_t(\pi)
    \Big]
$.}
\begin{align*}
    \sum_{\tilde{\pi} \in \Pi} p_t(\tilde{\pi}) \Delta_t(\tilde{\pi})
    &=
    \beta g_t(\pi)
    - 
    \sum_{\tilde{\pi} \in \Pi} p_t(\tilde{\pi}) \times \beta (C1)
    \\
    &\leq 
    \beta g_t(\pi)
    -
    \beta
    \sum_{\tilde{\pi} \in \Pi_t^\ast} p_t(\tilde{\pi}) \frac{g_t(\pi)}{\sum_{\pi' \in \Pi_t^\ast} p_t(\pi')}    
    \\
    &=
    0
\end{align*}

\textbf{Step 3-1-2: $
    \Exp_t \Big[ 
        \Delta_t(\pi)^2
    \Big]
$.}
\begin{equation*}
\scalebox{0.85}{$\begin{aligned}
    \sum_{\tilde{\pi} \in \Pi} p_t(\tilde{\pi}) \Delta_t(\tilde{\pi})^2
    &=
    \sum_{\tilde{\pi} \in \Pi_t^\ast} p_t(\tilde{\pi}) \Delta_t(\tilde{\pi})^2
    +
    \sum_{\tilde{\pi} \in (\Pi_t^\ast)^c} p_t(\tilde{\pi}) \Delta_t(\tilde{\pi})^2
    \\
    &=
    (\beta g_t(\pi))^2\sum_{\tilde{\pi} \in \Pi_t^\ast} p_t(\tilde{\pi}) \left(
        1 
        - 
        \frac{1}{\sum_{\tilde{\pi} \in \Pi_t^\ast} p_t(\tilde{\pi})}
    \right)^2
    +
    \sum_{\tilde{\pi} \in (\Pi_t^\ast)^c} p_t(\tilde{\pi}) (\beta g_t(\pi) - \beta \tilde{g}_t(\pi))^2
    \\
    &\leq
    \frac{\beta^2}{\sum_{\tilde{\pi} \in \Pi_t^\ast} p_t(\tilde{\pi})}
    + \beta^2 ~ (\because ~ g_t(\pi), \tilde{g}_t(\pi) \in [0, 1])
\end{aligned}$}
\end{equation*}

\textbf{Step 3-1-3: Combine.}
Combining the above results and applying the fact that $1 + x \leq \text{exp}(x)$,
\begin{align*}
    \Exp_t\left[ \exp\left( \Delta_t(\pi) - \beta \times (C2) \right) \right]
    &\leq
    \Exp_t \Big[ 
        (1 + \Delta_t(\pi) + \Delta_t(\pi)^2)
        \times 
        \text{exp}( -\beta \times (C2) )
    \Big]
    \\
    &=
    \text{exp}\left(
        -\frac{\beta^2}{\sum_{\tilde{\pi} \in \Pi_t^\ast} p_t(\tilde{\pi})}
        -\beta^2
    \right)
    \Exp_t \Big[ 
        (1 + \Delta_t(\pi) + \Delta_t(\pi)^2)
    \Big]
    \\
    &\leq 
    \text{exp}\left(
        -\frac{\beta^2}{\sum_{\tilde{\pi} \in \Pi_t^\ast} p_t(\tilde{\pi})}
        -\beta^2
    \right)
    \left( 
        1 
        +
        \frac{\beta^2}{\sum_{\tilde{\pi} \in \Pi_t^\ast} p_t(\tilde{\pi})}
        +
        \beta^2        
    \right)
    \\
    &\leq 1 ~ (\because ~ 1 + x \leq \text{exp}(x)).
\end{align*}

\textbf{Step 3-2: $\pi \in (\Pi_t^\ast)^c$.}
\\
\textbf{Step 3-2-1: $
    \Exp_t \Big[ 
        \Delta_t(\pi)
    \Big]
$.}
\begin{align*}
    \scriptstyle
    \sum_{\tilde{\pi} \in \Pi} p_t(\tilde{\pi}) \Delta_t(\tilde{\pi})
    &\scriptstyle=
    \sum_{\tilde{\pi} \in \Pi_t^\ast} p_t(\tilde{\pi}) \Delta_t(\tilde{\pi})
    +
    \sum_{\tilde{\pi} \in (\Pi_t^\ast)^c} p_t(\tilde{\pi}) \Delta_t(\tilde{\pi})
    \\
    &\scriptstyle=
    \sum_{\tilde{\pi} \in \Pi_t^\ast} p_t(\tilde{\pi}) \Delta_t(\tilde{\pi})
    +
    \sum_{\tilde{\pi}: \pi \in \Pi_t(\tilde{\pi})} p_t(\tilde{\pi}) \Delta_t(\tilde{\pi})
    +
    \sum_{\tilde{\pi}: \pi \in \Pi_t(\tilde{\pi})^c} p_t(\tilde{\pi}) \Delta_t(\tilde{\pi})
    \\
    &\scriptstyle=
    \sum_{\tilde{\pi} \in \Pi_t^\ast} p_t(\tilde{\pi}) (\beta g_t(\pi) - \beta g_t(\pi))
    +
    \sum_{\tilde{\pi}: \pi \in \Pi_t(\tilde{\pi})} p_t(\tilde{\pi}) (\beta g_t(\pi) - \beta g_t(\pi))
    \\
    &\qquad\qquad\qquad\qquad\qquad\qquad\quad\scriptstyle+
    \sum_{\tilde{\pi}: \pi \in \Pi_t(\tilde{\pi})^c} p_t(\tilde{\pi}) (\beta g_t(\pi) - \beta \tilde{g}_t(\pi))    
    \\
    &\scriptstyle=
    \sum_{\tilde{\pi}: \pi \in \Pi_t(\tilde{\pi})^c} p_t(\tilde{\pi}) (\beta g_t(\pi) - \beta g_t(\pi))    
    ~ (\because ~ \text{Eq.}~\ref{eq:method:normalized_gain})
    \\
    &\scriptstyle=
    0
\end{align*}

\textbf{Step 3-2-2: $
    \Exp_t \Big[ 
        \Delta_t(\pi)^2
    \Big]
$.}
\begin{align*}
    \scriptstyle
    \sum_{\tilde{\pi} \in \Pi} p_t(\tilde{\pi}) \Delta_t(\tilde{\pi})^2
    &\scriptstyle=
    \sum_{\tilde{\pi} \in \Pi_t^\ast} p_t(\tilde{\pi}) \Delta_t(\tilde{\pi})^2
    +
    \sum_{\tilde{\pi} \in (\Pi_t^\ast)^c} p_t(\tilde{\pi}) \Delta_t(\tilde{\pi})^2
    \\
    &\scriptstyle=
    \sum_{\tilde{\pi} \in \Pi_t^\ast} p_t(\tilde{\pi}) \Delta_t(\tilde{\pi})^2
    +
    \sum_{\tilde{\pi}: \pi \in \Pi_t(\tilde{\pi})} p_t(\tilde{\pi}) \Delta_t(\tilde{\pi})^2
    +
    \sum_{\tilde{\pi}: \pi \in \Pi_t(\tilde{\pi})^c} p_t(\tilde{\pi}) \Delta_t(\tilde{\pi})^2
    \\
    &\scriptstyle=
    \sum_{\tilde{\pi} \in \Pi_t^\ast} p_t(\tilde{\pi}) (\beta g_t(\pi) - \beta g_t(\pi))^2
    +
    \sum_{\tilde{\pi}: \pi \in \Pi_t(\tilde{\pi})} p_t(\tilde{\pi}) (\beta g_t(\pi) - \beta g_t(\pi))^2
    \\
    &\qquad\qquad\qquad\qquad\qquad\qquad\quad\scriptstyle+
    \sum_{\tilde{\pi}: \pi \in \Pi_t(\tilde{\pi})^c} p_t(\tilde{\pi}) (\beta g_t(\pi) - \beta \tilde{g}_t(\pi))^2
    \\
    &\scriptstyle=
    \sum_{\tilde{\pi}: \pi \in \Pi_t(\tilde{\pi})^c} p_t(\tilde{\pi}) (\beta g_t(\pi) - \beta g_t(\pi))^2 ~ (\because ~ \text{Eq.}~\ref{eq:method:normalized_gain})
    \\
    &\scriptstyle=
    0
\end{align*}

\textbf{Step 3-2-3: Combine.}
% Combining the above results and applying the fact that $1 + x \leq \text{exp}(x)$ is suffice.
% The case where 
% $\pi \in (\Pi_t^\ast)^c$ can be shown in exactly the same manner.
Combining the above results,
\begin{align*}
    \Exp_t\left[ \exp\left( \Delta_t(\pi) - \beta \times (C2) \right) \right]
    &\leq
    \Exp_t \Big[ 
        (1 + \Delta_t(\pi) + \Delta_t(\pi)^2)
        \times 
        \text{exp}( -\beta \times (C2) )
    \Big]
    \\
    &=
    \text{exp}(
        -\frac{\beta^2}{\sum_{\tilde{\pi} \leq \pi} p_t(\tilde{\pi})}
    )
    \Exp_t \Big[ 
        (1 + \Delta_t(\pi) + \Delta_t(\pi)^2)
    \Big]
    \\
    &= 
    \text{exp}(
        -\frac{\beta^2}{\sum_{\tilde{\pi} \leq \pi} p_t(\tilde{\pi})}
    )
    \\
    &\leq 1.
\end{align*}

By sequentially applying the double expectation rule for $t=T, \dots, 1$, 
\begin{equation}\label{eq:lem:armwise_guarantee-new-1}
    \Exp \exp\left[ \sum_{t=1}^T \left( \Delta_t(\pi) - \beta \times (C2)\right) \right] \le 1.
\end{equation}

Moreover, from the Markov's inequality, we have $\mathbb{P}\left( X > \ln(1/\delta) \right) = \mathbb{P}\left( \exp(X) > 1/\delta \right) \le \delta \Exp \exp(X)$. 
Combined with Eq.~\ref{eq:lem:armwise_guarantee-new-1}, we have
\begin{equation*}
    \beta\sum_{t=1}^T g_{t}(\pi) \le \beta\sum_{t=1}^T \tilde{g}_{t}(\pi | \Pi_t(\pi_t)) + \ln(\delta^{-1})
\end{equation*}
with probability at least $1 - \delta$.
This completes the proof.
\end{proof}

\paragraph{Proof of the Theorem~\ref{thm:EXP3.P-CP-UNLOCK-PS}.}
First, we re-express Eq.~\ref{eq:useful_inequality_combined} for the simplicity of proof as the following:
\begin{equation}
    \label{eq:useful_inequality_combined2}
    \scalebox{0.9}{$\begin{aligned}
        - g_t(\pi_t)
        &\leq 
        \frac{1}{1 + {\color{\cc} o(T)^{-1}}}
        \Bigg\{
            - \mathbbm{E}_{\pi \sim p_t} \tilde{g}_t( \pi | \Pi_t(\pi_t) )
            +
            o(T)^{-1}
            \\
            &\qquad\qquad
            +
            \underbrace{\left(
                1
                +
                \left\{
                    \mathbbm{1}(m_t(\pi_t)=0)
                    1
                    +
                    \mathbbm{1}(m_t(\pi_t)=1)
                    |\Pi_t(\pi_t)^c|
                \right\}
                + 
                \frac{
                    \sum_{\tilde{\pi} \in (\Pi_t^\ast)^c} p_t(\tilde{\pi})
                }{
                    \sum_{\tilde{\pi} \in \Pi_t^\ast} p_t(\tilde{\pi})
                }
            \right)}_{C_t} \beta
        \Bigg\}.
    \end{aligned}$}
\end{equation}
Now, we derive the regret bound of \texttt{OCP-Unlock+} (Algorithm~\ref{alg:exp3p-cp-unlock}), which consists of three steps. 

% Assumptions
First, our goal is to show that, if $\gamma \le \tfrac{1}{2}$ and $(1+2\beta)K\eta \le \gamma$,
% Main bound
\begin{equation}\label{eq:s-exp3p-cp:generalbound-new}
% \Reg(T,\alpha)
% \le
% \frac{\ell_{\text{max}} - \ell_{\text{min}}}{1 + \epsilon(\lambda, \alpha)}
% \left(
%     C \beta T + \epsilon(\lambda, \alpha)T + o(T)^{-1}T + \gamma T + (1+\beta)\eta C T
%     + \frac{\ln(K/\delta)}{\beta}
%     + \frac{\ln K}{\eta}
% \right),
    \Reg(T)
    \le
    \frac{\ell_{\text{max}} - \ell_{\text{min}}}{1 + {\color{\cc}o(T)^{-1}}}
    \left(
        C \beta T 
        + 
        2 o(T)^{-1}T 
        % + 
        % \epsilon(\lambda, \alpha) T
        +
        \gamma T
        +
        (1 + 2\beta) \eta K T
        + \frac{\ln(K\delta^{-1})}{\beta}
        + \frac{\ln K}{\eta}
    \right),
\end{equation}
where $
    C = \text{min} \left( \frac{\sum_{t=1}^T C_t}{T}, K \right)
$.

Irrespective of the hyperparameter setup, note that Theorem~\ref{thm:EXP3.P-CP-UNLOCK-PS} always holds if 
$
    T < \sqrt{\frac{TK}{\ln K}} \ln(\delta^{-1}) + \sqrt{T C \text{ln} K} + 4.15\sqrt{T K \text{ln} K} + 2\sqrt{T}
$.
If $
    T \geq \sqrt{\frac{TK}{\ln K}} \ln(\delta^{-1}) + \sqrt{T C \text{ln} K} + 4.15\sqrt{T K \text{ln} K} + 2\sqrt{T}
$, this implies that $\gamma \leq \frac{1}{2}$ and $(1+2\beta)K\eta \leq \gamma$,
which makes it suffice to show that Eq.~\ref{eq:s-exp3p-cp:generalbound-new} holds for $\gamma \le \tfrac{1}{2}$ and $(1+2\beta)K\eta \le \gamma$.

% Irrespective of the hyperparameter setup, note that Eq.~\ref{eq:s-exp3p-cp:generalbound-new} always holds if $T \geq 5.15\sqrt{T K \text{ln}(K\delta^{-1})}$.
% If $T < 5.15\sqrt{T K \text{ln}(K\delta^{-1})}$, this implies that $\gamma \leq \frac{1}{2}$ and $(1+2\beta)K\eta \leq \gamma$,
% which makes it suffice to show that Eq.~\ref{eq:s-exp3p-cp:generalbound-new} holds for $\gamma \le \tfrac{1}{2}$ and $(1+2\beta)K\eta \le \gamma$.

% Step 1: notation and simple inequalities
\textbf{Step 1: Simple equalities.}
For all $\pi \in \Pi$, the following equality holds:
\begin{equation}\label{eq:s-exp3p-cp:armwise_equality-1-new}
\scalebox{0.85}{$\begin{aligned}
    R_{\pi}(T)
    &\coloneqq \sum_{t=1}^{T}\ell_{t}(\pi_t) - \sum_{t=1}^{T}\ell_{t}(\pi) \\
    &= 
    (\ell_{\text{max}} - \ell_{\text{min}})
    \left(
        \sum_{t=1}^{T} g_{t}(\pi) - \sum_{t=1}^{T} g_{t}(\pi_t)
    \right) ~ (\because ~ \text{Eq.}~\ref{eq:method:true_normalized_gain}) \\
    &\leq
    \frac{\ell_{\text{max}} - \ell_{\text{min}}}{1 + {\color{\cc}o(T)^{-1}}}
    \left(
        C \beta T 
        + 
        o(T)^{-1}T 
        + 
        (1 + {\color{\cc}o(T)^{-1}}) \sum_{t=1}^{T} g_{t}(\pi)
        - 
        \sum_{t=1}^{T}\Exp_{\tilde{\pi}\sim p_t}\tilde g_{t}(\tilde{\pi} \mid \Pi_t(\pi_t))
    \right) ~ (\because ~ \text{Eq.}~\ref{eq:useful_inequality_combined2}).
\end{aligned}$}
\end{equation}
Using the definition of cumulant generating function and the relationship that $p_t = (1 - \gamma) \omega_t + \gamma u$ where $
    \omega_t(\pi) 
    =
    \frac{\text{exp}\left(\eta\tilde{G}_{t-1}(\pi)\right)}{\sum_{\tilde{\pi} \in \Pi}~\text{exp}\left(\eta\tilde{G}_{t-1}(\tilde{\pi})\right)}
$
and $u$ is the uniform distribution over $K$ arms, the following holds:
\begin{align}\label{eq:s-exp3p-cp:cgf-uniform-new}
    -\mathbbm{E}_{\tilde{\pi} \sim p_t} & \tilde{g}_t(\tilde{\pi} \mid \Pi_t(\pi_t))
    \nonumber\\
    &=
    -(1-\gamma) \mathbbm{E}_{\tilde{\pi} \sim \omega_t} \tilde{g}_t(\pi_i \mid \Pi_t(\pi_t))
    -\gamma \mathbbm{E}_{\tilde{\pi} \sim u} \tilde{g}_t(\tilde{\pi} \mid \Pi_t(\pi_t))
    ~ (\because p_t = (1 - \gamma) \omega_t + \gamma u)\nonumber\\
    &=
    (1-\gamma) \bigg[ 
        \frac{1}{\eta}\text{ln}\mathbbm{E}_{\tilde{\pi} \sim \omega_t}\text{exp}\left( 
            \eta
            (
                \tilde{g}_t(\tilde{\pi} \mid \Pi_t(\pi_t)) 
                - 
                \mathbbm{E}_{\tilde{\pi} \sim \omega_t} \tilde{g}_{t}(\tilde{\pi} \mid \Pi_t(\pi_t)) 
            )
        \right)
        -\nonumber\\
    &\qquad\qquad\qquad\qquad
        \frac{1}{\eta}\text{ln}\mathbbm{E}_{\tilde{\pi} \sim \omega_t} \text{exp}(\eta \tilde{g}_t(\tilde{\pi} \mid \Pi_t(\pi_t))) 
    \bigg]
    - 
    \gamma\mathbbm{E}_{\tilde{\pi} \sim u}\tilde{g}_t(\tilde{\pi} \mid \Pi_t(\pi_t)).
\end{align}

% Step 2: cgf bound on the first term
\textbf{Step 2: Bounding the first term of Eq.~\ref{eq:s-exp3p-cp:cgf-uniform-new}.}
First, we show that irrespective of the choice of $\pi_t$,
\begin{align*}
    \eta\tilde g_{t}(\pi | \Pi_t(\pi_t))
    \leq 
    1
    ~ \forall \pi \in \Pi.
\end{align*}
\begin{itemize}
    \item $m(\pi_t) = 0$, $\pi \in \Pi_t^\ast$
    \begin{align*}
        \eta \tilde{g}_t(\pi | \Pi_t(\pi_t))
        &=
        \eta 
        \left(
            \frac{
                g_t(\pi) + \beta
            }{
                \sum_{\tilde{\pi} \in \Pi_t^\ast} p_t(\tilde{\pi})
            }
            +
            \beta 
        \right)
        \\
        &\leq 
        \frac{
            \eta
            (1 + 2\beta)
        }{
            \sum_{\tilde{\pi} \in \Pi_t^\ast} 
            (
                (1-\gamma) w_t(\tilde{\pi})
                +
                \gamma \frac{1}{K}
            )
        }
        ~ (\because (1 + 2\beta) \eta K \leq \gamma ~\text{and}~ g_t(\pi) \in [0, 1])
        \\
        &\leq  1
    \end{align*}

    \item $m(\pi_t) = 0$, $\pi \in (\Pi_t^\ast)^c$; $m(\pi_t) = 1$, $\pi \in \Pi_t(\pi_t)$
    \begin{align*}
        \eta \tilde{g}_t(\pi | \Pi_t(\pi_t))
        &=
        \eta
        \left(
            g_t(\pi)
            + 
            \frac{\beta}{\sum_{\tilde{\pi} \leq \pi} p_t(\tilde{\pi})}
        \right)
        \\
        &\leq 
        \eta
        \left(
            \frac{g_t(\pi) + \beta}{\sum_{\tilde{\pi} \leq \pi} p_t(\tilde{\pi})}
        \right)
        \\        
        &\leq
            \frac{\eta(1 + \beta)}{\sum_{\tilde{\pi} \leq \pi} ((1 - \gamma) w_t(\tilde{\pi}) + \gamma \frac{1}{K})}
            ~ (\because (1 + 2\beta) \eta K \leq \gamma ~\text{and}~ g_t(\pi) \in [0, 1])
        \\        
        &\leq 1
    \end{align*}

    \item $m(\pi_t) = 1$, $\pi \in \Pi_t(\pi_t)^c$
    \begin{align*}
        \eta \tilde{g}_t(\pi | \Pi_t(\pi_t))
        &=
        \eta
        \left(
            \tilde{g}_t(\pi)
            + 
            \frac{\beta}{p_t(\pi)}
        \right)
        \\
        &\leq 
        \eta
        \left(
            \frac{g_t(\pi) + \beta}{p_t(\pi)}
        \right)
        ~ (\because \text{Eq.}~\ref{eq:method:true_normalized_gain}, \text{Eq.}~\ref{eq:method:normalized_gain}, ~\text{and}~ \text{Eq.}~\ref{eq:algorithm_loss_property})
        \\        
        &\leq
        \frac{\eta(1 + \beta)}{(1 - \gamma) w_t(\pi) + \gamma \frac{1}{K}}
        ~ (\because (1 + 2\beta) \eta K \leq \gamma ~\text{and}~ g_t(\pi) \in [0, 1])
        \\        
        &\leq 1
    \end{align*}
\end{itemize}

Since 
(1) $\ln x\le x-1$, 
(2) $\exp(x)\le 1+x+x^2$ for all $x\le 1$, 
and 
(3) $\eta\tilde g_{t}(\tilde{\pi} | \Pi_t(\pi_t))\leq 1$,
\begin{equation}\label{eq:s-exp3p-cp:cgf-uniform:upperbd-new}
\begin{aligned}
    &\ln \Exp_{\tilde{\pi} \sim \omega_t}
    \exp\left(\eta(\tilde g_{t}(\tilde{\pi} \mid \Pi_t(\pi_t))-\Exp_{\tilde{\pi} \sim \omega_t}\tilde g_{t}(\tilde{\pi} \mid \Pi_t(\pi_t)))\right)\\
    &\qquad\qquad= \ln \Exp_{\tilde{\pi} \sim \omega_t}\text{exp}(\eta\tilde g_{t}(\tilde{\pi} \mid \Pi_t(\pi_t))) - \eta \Exp_{\tilde{\pi} \sim \omega_t} \tilde g_{t}(\tilde{\pi} \mid \Pi_t(\pi_t)) \\
    &\qquad\qquad\le \Exp_{\tilde{\pi} \sim \omega_t} \left\{ \exp(\eta \tilde{g}_{t}(\tilde{\pi} \mid \Pi_t(\pi_t))) - 1 - \eta \tilde{g}_{t}(\tilde{\pi} \mid \Pi_t(\pi_t)) \right\} ~ (\because \ln x \leq x - 1)\\
    &\qquad\qquad\le \Exp_{\tilde{\pi} \sim \omega_t} \eta^2 \tilde{g}_{t}(\tilde{\pi} \mid \Pi_t(\pi_t))^2 ~ (\because \exp(x) \leq 1 + x + x^2)
    \\
    &\qquad\qquad\leq 
    \eta^2 \frac{1+2\beta}{1-\gamma}
    \sum_{\tilde{\pi} \in \Pi}
    \tilde{g}_t(\pi | \Pi_t(\pi_t)),
\end{aligned}
\end{equation}
where the last inequality holds due to the following.
\begin{itemize}
    \item $m(\pi_t) = 0$
    \begin{align*}
        &\scriptstyle
        \Exp_{\tilde{\pi} \sim \omega_t} \eta^2 \tilde{g}_{t}(\tilde{\pi} \mid \Pi_t(\pi_t))^2 
        \\
        &\scriptstyle=
        \eta^2
        \left\{
            \sum_{\tilde{\pi} \in \Pi_t^\ast}
            w_t(\tilde{\pi})
            \left(
                \frac{g_t(\tilde{\pi}) + \beta }{\sum_{\pi' \in \Pi_t^\ast} p_t(\pi')}
                + \beta
            \right)^2
            +
            \sum_{\tilde{\pi} \in (\Pi_t^\ast)^c}
            w_t(\tilde{\pi})
            \left(
                g_t(\tilde{\pi}) 
                + 
                \frac{\beta }{\sum_{\pi' \leq \tilde{\pi}} p_t(\pi')}
            \right)^2
        \right\}
        \\
        &\scriptstyle\leq
        \eta^2
        \left\{
            \sum_{\tilde{\pi} \in \Pi_t^\ast}
            w_t(\tilde{\pi})
            \left(
                \frac{1 + 2\beta }{p_t(\tilde{\pi})}
            \right)
            \tilde{g}_t(\tilde{\pi} | \Pi_t(\pi_t))
            +
            \sum_{\tilde{\pi} \in (\Pi_t^\ast)^c}
            w_t(\tilde{\pi})
            \left(
                \frac{1 + 2\beta }{p_t(\tilde{\pi})}
            \right)
            \tilde{g}_t(\tilde{\pi} | \Pi_t(\pi_t))
        \right\}
        ~ (\because g_t(\pi) \in [0, 1])
        \\
        &\scriptstyle\leq
        \eta^2 \frac{1 + \beta}{1 - \gamma}
        \sum_{\tilde{\pi} \in \Pi} 
        \tilde{g}_t(\tilde{\pi} | \Pi_t(\pi_t))
        ~ (\because \frac{w_t(\tilde{\pi})}{p_t(\tilde{\pi})} \leq \frac{1}{1 - \gamma})
    \end{align*}

    \item $m(\pi_t) = 1$
    \begin{equation*}
    \scalebox{0.89}{$\begin{aligned}
        &\scriptstyle
        \Exp_{\tilde{\pi} \sim \omega_t} \eta^2 \tilde{g}_{t}(\tilde{\pi} \mid \Pi_t(\pi_t))^2 
        \\
        &\scriptstyle=
        \eta^2
        \bigg\{
            \sum_{\tilde{\pi} \in \Pi_t(\pi_t)}
            w_t(\tilde{\pi})
            \left(
                g_t(\tilde{\pi})
                +
                \frac{\beta }{\sum_{\pi' \leq \tilde{\pi}} p_t(\pi')}
            \right)^2
            +
            \sum_{\tilde{\pi} \in \Pi_t(\pi_t)^c}
            w_t(\tilde{\pi})
            \left(
                \tilde{g}_t(\tilde{\pi}) 
                + 
                \frac{\beta }{p_t(\tilde{\pi})}
                + 
                \beta
            \right)^2
        \bigg\}
        \\
        &\scriptstyle\leq
        \eta^2
        \bigg\{
            \sum_{\tilde{\pi} \in \Pi_t(\pi_t)}
            w_t(\tilde{\pi})
            \left(
                \frac{1 + 2\beta}{p_t(\tilde{\pi})}
            \right)
            \tilde{g}_t(\tilde{\pi} | \Pi_t(\pi_t))
            +
            \sum_{\tilde{\pi} \in \Pi_t(\pi_t)^c}
            w_t(\tilde{\pi})
            \left(
                \frac{1 + 2\beta}{p_t(\tilde{\pi})}
            \right)
            \tilde{g}_t(\tilde{\pi} | \Pi_t(\pi_t))
        \bigg\}
        ~ (\because g_t(\pi), \tilde{g}_t(\pi) \in [0, 1])
        \\
        &\scriptstyle\leq
        \eta^2 \frac{1 + \beta}{1 - \gamma}
        \sum_{\tilde{\pi} \in \Pi} 
        \tilde{g}_t(\tilde{\pi} | \Pi_t(\pi_t))
        ~ (\because \frac{w_t(\tilde{\pi})}{p_t(\tilde{\pi})} \leq \frac{1}{1 - \gamma})
    \end{aligned}$}
    \end{equation*}
\end{itemize}

% Step 3: summing
\textbf{Step 3: Summing.}
Let $\tilde{G}_0(\tilde{\pi}) = 0$.
Then, combining Eq.~\ref{eq:s-exp3p-cp:cgf-uniform-new}-Eq.~\ref{eq:s-exp3p-cp:cgf-uniform:upperbd-new} and summing over $t$ yield
\begin{equation}\label{eq:s-exp3p-cp:final-upperbd-new}
\scalebox{0.84}{$\begin{aligned}
    &-\sum_{t=1}^{T}\Exp_{\tilde{\pi} \sim p_t}\tilde g_{t}(\tilde{\pi} \mid \Pi_t(\pi_t)) \\
    &\qquad\qquad\le 
    (1+2\beta) \eta 
    \sum_{t=1}^T \sum_{\tilde{\pi} \in \Pi}
    \tilde{g}_t(\tilde{\pi} | \Pi_t(\pi_t))
    - 
    \frac{1-\gamma}{\eta}\sum_{t=1}^{T}
    \ln\left( \sum_{\tilde{\pi} \in \Pi} w_{t}(\tilde{\pi}) \exp( \eta \tilde g_{t}(\tilde{\pi} \mid \Pi_t(\pi_t)) ) \right) \\
    &\qquad\qquad= 
    (1+2\beta) \eta 
    \sum_{t=1}^T \sum_{\tilde{\pi} \in \Pi}
    \tilde{g}_t(\tilde{\pi} | \Pi_t(\pi_t))
    - \frac{1-\gamma}{\eta}
    \ln \left( \frac{\sum_{\tilde{\pi} \in \Pi} \exp(\eta\tilde G_{T}(\tilde{\pi}))}{\sum_{\tilde{\pi} \in \Pi} \exp(\eta\tilde G_{0}(\tilde{\pi})} \right) ~ (\because ~ \text{Definition of} ~ \omega_t(\tilde{\pi}), \tilde{G}_t(\tilde{\pi})) \\
    &\qquad\qquad
    \le
    (1+2\beta) \eta K 
    \underset{\tilde{\pi} \in \Pi}{\text{max}} \tilde{G}_t(\tilde{\pi})
    + \frac{\ln K}{\eta}
    - \frac{1 - \gamma}{\eta}
    \ln\left(\sum_{\tilde{\pi} \in \Pi}\exp(\eta \tilde G_{T}(\tilde{\pi}))\right) ~ (\because 1 - \gamma \leq 1 ~\text{and}~\tilde{G}_0(\tilde{\pi})=0) \\
    &\qquad\qquad\le -(1-r-(1+2\beta)\eta K)
    \underset{\tilde{\pi} \in \Pi}{\text{max}} \tilde{G}_t(\tilde{\pi})
    + \frac{\ln K}{\eta} ~ (\because ~ \text{Property of log-sum-exponential}) \\
    &\qquad\qquad\le -(1-r-(1+2\beta)\eta K)
    \underset{\tilde{\pi} \in \Pi}{\text{max}} \sum^T_{t=1} g_{t}(\tilde{\pi}) 
    + \frac{\ln(K\delta^{-1})}{\beta}
    + \frac{\ln K}{\eta},
\end{aligned}$}
\end{equation}
where the last inequality holds due to the Lemma~\ref{lem:armwise_guarantee_new} and the initial assumption that $\gamma \leq \frac{1}{2}$ and $(1+2\beta)K\eta \leq \gamma$.
Plugging Eq.~\ref{eq:s-exp3p-cp:final-upperbd-new} into Eq.~\ref{eq:s-exp3p-cp:armwise_equality-1-new}, the following holds with probability $1-\frac{\delta}{K}$ for all $\pi \in \Pi$:
\begin{equation*}
\scalebox{0.95}{$\begin{aligned}
    R_\pi(T)
    &\le
    \frac{\ell_{\text{max}} - \ell_{\text{min}}}{1 + {\color{\cc}o(T)^{-1}}}
    \left(
        C \beta T 
        + 
        o(T)^{-1}T 
        + 
        (1 + {\color{\cc}o(T)^{-1}}) \sum_{t=1}^{T} g_{t}(\pi)
        - 
        \sum_{t=1}^{T}\Exp_{\tilde{\pi}\sim p_t}\tilde g_{t}(\tilde{\pi} \mid \Pi_t(\pi_t))
    \right)
    \\
    &\le
    \frac{\ell_{\text{max}} - \ell_{\text{min}}}{1 + {\color{\cc}o(T)^{-1}}}
    \left(
        C \beta T 
        + 
        o(T)^{-1}T 
        + 
        {\color{\cc}o(T)^{-1}} T
        +
        \gamma T
        +
        (1 + 2\beta) \eta K T
        + \frac{\ln(K\delta^{-1})}{\beta}
        + \frac{\ln K}{\eta}
    \right)
    \\
    &\le
    (\ell_{\text{max}} - \ell_{\text{min}})
    \left(
        C \beta T 
        + 
        2 o(T)^{-1}T 
        % + 
        % \epsilon(\lambda, \alpha) T
        +
        \gamma T
        +
        (1 + 2\beta) \eta K T
        + \frac{\ln(K\delta^{-1})}{\beta}
        + \frac{\ln K}{\eta}
    \right).
\end{aligned}$}
\end{equation*}
% The last inequality holds 
% when we set $\lambda >0$ to be the one such that 
% $
%     \epsilon(\lambda, \alpha) = o(T)^{-1},
% $
% which we set $\frac{1}{\sqrt{T}}$ in our algorithm.
Since $\Reg(T) = \max R_\pi(T)$, this completes the proof by taking the union bound.

% {
% \color{red}
% The following is the derivation of $\lambda$.
% \begin{align*}
%     &\epsilon (\lambda, \alpha)
%     \coloneqq
%     \frac{\lambda \alpha}{(\lambda + 2) - 2 \alpha}
%     \overset{\text{set}}{=}
%     \frac{1}{\sqrt{T}}
%     \\
%     \Rightarrow &
%     \sqrt{T} \lambda \alpha
%     = 
%     (\lambda + 2) - 2\alpha
%     \\
%     \Rightarrow &
%     (\sqrt{T} \alpha - 1) \lambda
%     = 
%     2 ( 1 - \alpha)    
%     \\
%     \Rightarrow &
%     \lambda
%     = 
%     \frac{2 ( 1 - \alpha)}{\sqrt{T}\alpha - 1}        
% \end{align*}
% }

\clearpage

\clearpage

{\color{black}

\clearpage

\section{Additional Experimental Detail}
\label{app:exp-details}

\subsection{Experiment 1: ImageNet}
We use a pretrained ResNet-18 \citep{he2015deepresiduallearningimage} as an image classifier, where $p_\theta(y | x)$ is its estimated probability of label $y$ given input $x$.
In the i.i.d. setting, we use $
    f_t(x_t, y)
    \coloneqq 
    p_\theta(y | x_t)^{1/3} 
$
as a scoring function for all $y \in \mathcal{Y}$ and $t \in [T]$.
Meanwhile, we simulate the non-i.i.d. setting by changing the functional form of the scoring function for every $10,000$ time steps.
Specifically, unlike the i.i.d. setting, we define the scoring function using different exponents of the estimated probability $p_\theta(y | x_t)^{\alpha_t}$ for all $y \in \mathcal{Y}$, where 
\begin{equation*}
    \alpha_t =
    \begin{cases}
        1/6   & t \in [1, 10,000]\\
        1/4   & t \in (10,000, 20,000]\\
        1/2   & t \in (20,000, 30,000]\\
        1/1.2 & t \in (30,000, 40,000]\\
        1/3   & t \in (40,000, 50,000].
    \end{cases}
\end{equation*}
Note that the scoring function concentrates to 1 as $\alpha_t$ decreases to 0, simulating the non-i.i.d. setting where the range of the scoring function and the difficulty evolve over time.

% For all subsequent experiments, unless otherwise specified, we use the same i.i.d.\ and non-i.i.d.\ setups for ImageNet as in \textbf{Experiment 1}. 

\subsection{Experiment 2: Airfoil Self-Noise}
We use a linear regression model $\hat{y}(x)$, which is trained using the recursive least squares algorithm.
For both i.i.d. and non-i.i.d. settings, we use the following normalized residual score as a scoring function:
\begin{equation*}
    f_t(x_t,y) = \frac{u - \lvert y - \hat{y}(x_t)\rvert}{u - l},
\end{equation*}
where $l$ and $u$ denote lower and upper bounds on the residuals $\lvert y - \hat{y}(x)\rvert$, respectively.
% Unlike \textbf{Experiment 1}, we simulate the non-i.i.d. setting by changing the input distribution after the first one-third of the datastream, as illustrated in Section~\ref{sec:exp}.
Unlike \textbf{Experiment 1}, we simulate the non-i.i.d. setting by changing the input distribution after the first one-third of the datastream, as illustrated in Section~\ref{sec:exp}.

% For Airfoil Self-Noise, 
% % the main experiment follows the setup in \textbf{Experiment 2}, whereas 
% the remaining experiments use a covariate shift setup where the input distribution changes only once after the first one-third of the data stream.

For all subsequent experiments, unless otherwise specified, we use the same i.i.d.\ and non-i.i.d.\ setups for ImageNet as in \textbf{Experiment 1}, and the same i.i.d.\ and non-i.i.d. setups for Airfoil Self-Noise as in \textbf{Experiment 2}.

\newpage 

\subsection{Choice of Hyperparameter \texorpdfstring{$c$}{c}}
{\color{\cc}For all experiments, we set $c=40$ in the loss function $\ell_t(\pi;\alpha, c) = d_t(\pi; \alpha) + a_t(\pi; \alpha, c)$, as defined in Eq.~\ref{eq:convertedloss}, Eq.~\ref{eq:miscoverage_term}, and Eq.~\ref{eq:method:inefficiency_term}.

We report the long-run miscoverage $\mc(T)$ and inefficiency $\ineff(T)$, averaged over 50 independent trials, across various $T$ and $\alpha$ settings for both classification and regression tasks. For the discretization level, we use $K=200$ for ImageNet and $K=20$ for Airfoil Self-Noise.
The results show that our method consistently achieves decent performance for a fixed $c$, 
uniformly across different datasets, data generating processes, and $T$, $\alpha$ settings (Table~\ref{tab:ablation_alpha_imagenet_iid_noniid} and Table~\ref{tab:ablation_alpha_airfoil_iid_noniid}).

\begin{table*}[hbt!]
  \caption{Long-run miscoverage $\mc(T)$ and inefficiency $\ineff(T)$ on ImageNet for different choices of $T$ and target miscoverage level $\alpha$, under the i.i.d.\ and non-i.i.d.\ settings}
  \label{tab:ablation_alpha_imagenet_iid_noniid}
  \centering
  \small
  \setlength{\tabcolsep}{5pt}
  \setlength{\arrayrulewidth}{0.4pt}

  \newcommand{\Tstrut}{\rule{0pt}{2.4ex}}
  \newcommand{\Bstrut}{\rule[-1.1ex]{0pt}{0pt}}

  \vspace{-1em}

  \begin{tabular}{c c c c c c c c c c}
    \hline
    \multirow{2}{*}{$T$}
      & \multicolumn{1}{|c|}{\multirow{2}{*}{Metric}}
      & \multicolumn{4}{c|}{\Tstrut i.i.d.}
      & \multicolumn{4}{c}{\Tstrut Non-i.i.d.} \\
    \cline{3-6}\cline{7-10}
    &
      \multicolumn{1}{|c|}{}
      & $\scriptstyle \alpha{=}0.1$ & $\scriptstyle \alpha{=}0.2$ & $\scriptstyle \alpha{=}0.3$ & \multicolumn{1}{c|}{$\scriptstyle \alpha{=}0.4$}
      & $\scriptstyle \alpha{=}0.1$ & $\scriptstyle \alpha{=}0.2$ & $\scriptstyle \alpha{=}0.3$ & $\scriptstyle \alpha{=}0.4$ \\
    \hline

    \multirow{2}{*}{50,000}
      & \multicolumn{1}{|c|}{$\mc(T)$}
      & 0.131 & 0.230 & 0.303 & \multicolumn{1}{c|}{0.340}
      & 0.156 & 0.250 & 0.318 & 0.352 \\
      & \multicolumn{1}{|c|}{$\ineff(T)$}
      & 46.73 & 16.77 & 8.47 & \multicolumn{1}{c|}{6.25}
      & 56.32 & 27.30 & 18.08 & 14.88 \\
    \hline

    \multirow{2}{*}{60,000}
      & \multicolumn{1}{|c|}{$\mc(T)$}
      & 0.123 & 0.226 & 0.307 & \multicolumn{1}{c|}{0.346}
      & 0.147 & 0.249 & 0.322 & 0.357 \\
      & \multicolumn{1}{|c|}{$\ineff(T)$}
      & 51.29 & 17.69 & 7.99 & \multicolumn{1}{c|}{5.77}
      & 61.20 & 27.41 & 17.39 & 13.88 \\
    \hline

    \multirow{2}{*}{70,000}
      & \multicolumn{1}{|c|}{$\mc(T)$}
      & 0.116 & 0.222 & 0.310 & \multicolumn{1}{c|}{0.350}
      & 0.141 & 0.246 & 0.324 & 0.361 \\
      & \multicolumn{1}{|c|}{$\ineff(T)$}
      & 54.90 & 18.25 & 7.70 & \multicolumn{1}{c|}{5.55}
      & 64.84 & 27.69 & 16.54 & 13.23 \\
    \hline
  \end{tabular}

  \vspace{-2ex}
\end{table*}

\begin{table*}[hbt!]
  \caption{Long-run miscoverage $\mc(T)$ and inefficiency $\ineff(T)$ on Airfoil Self-Noise for different choices of $T$ and target miscoverage level $\alpha$, under the i.i.d.\ and non-i.i.d.\ settings}
  \label{tab:ablation_alpha_airfoil_iid_noniid}
  \centering
  \small
  \setlength{\tabcolsep}{5pt}
  \setlength{\arrayrulewidth}{0.4pt}

  \newcommand{\Tstrut}{\rule{0pt}{2.4ex}}
  \newcommand{\Bstrut}{\rule[-1.1ex]{0pt}{0pt}}

  \vspace{-1em}

  \begin{tabular}{c c c c c c c c c c}
    \hline
    \multirow{2}{*}{$T$}
      & \multicolumn{1}{|c|}{\multirow{2}{*}{Metric}}
      & \multicolumn{4}{c|}{\Tstrut i.i.d.}
      & \multicolumn{4}{c}{\Tstrut Non-i.i.d.} \\
    \cline{3-6}\cline{7-10}
    &
      \multicolumn{1}{|c|}{}
      & $\scriptstyle \alpha{=}0.1$ & $\scriptstyle \alpha{=}0.2$ & $\scriptstyle \alpha{=}0.3$ & \multicolumn{1}{c|}{$\scriptstyle \alpha{=}0.4$}
      & $\scriptstyle \alpha{=}0.1$ & $\scriptstyle \alpha{=}0.2$ & $\scriptstyle \alpha{=}0.3$ & $\scriptstyle \alpha{=}0.4$ \\
    \hline

    \multirow{2}{*}{30,000}
      & \multicolumn{1}{|c|}{$\mc(T)$}
      & 0.062 & 0.129 & 0.193 & \multicolumn{1}{c|}{0.243}
      & 0.084 & 0.174 & 0.263 & 0.329 \\
      & \multicolumn{1}{|c|}{$\ineff(T)$}
      & 13.29 & 9.62 & 7.84 & \multicolumn{1}{c|}{7.12}
      & 15.37 & 10.90 & 8.82 & 7.51 \\
    \hline

    \multirow{2}{*}{40,000}
      & \multicolumn{1}{|c|}{$\mc(T)$}
      & 0.057 & 0.123 & 0.191 & \multicolumn{1}{c|}{0.243}
      & 0.077 & 0.168 & 0.261 & 0.330 \\
      & \multicolumn{1}{|c|}{$\ineff(T)$}
      & 13.41 & 9.70 & 7.84 & \multicolumn{1}{c|}{7.07}
      & 15.52 & 10.92 & 8.87 & 7.46 \\
    \hline

    \multirow{2}{*}{50,000}
      & \multicolumn{1}{|c|}{$\mc(T)$}
      & 0.054 & 0.119 & 0.189 & \multicolumn{1}{c|}{0.244}
      & 0.072 & 0.162 & 0.258 & 0.332 \\
      & \multicolumn{1}{|c|}{$\ineff(T)$}
      & 13.53 & 9.69 & 7.84 & \multicolumn{1}{c|}{7.02}
      & 15.83 & 11.02 & 8.85 & 7.44 \\
    \hline
  \end{tabular}

  \vspace{-2ex}
\end{table*}

}

\clearpage

\section{Experiment 3: Ablation Study on the Level of Discretization \texorpdfstring{$K$}{K}}
\label{sec:ablation}

We provide an ablation study on the discretization level $K \in \mathbb{N}$ 
% and $\lambda > 0$
in implementing our bandit-based algorithms.
In terms of theoretical guarantees, our regret bound is proportional to $K$,
so increasing $K$ yields a finer threshold grid at the cost of a larger regret bound.
Meanwhile, empirical results suggest that a finer discretization of the hypothesis space is necessary to keep conformal sets compact.
To quantify this trade-off, we conduct an ablation study on both classification and regression tasks by varying $K$ while holding all other parameters fixed.
For ImageNet, we set the horizon to $T=50{,}000$, and the target miscoverage level to $\alpha=0.15$.
For Airfoil Self-Noise, we set the horizon to $T=50{,}000$, and the target miscoverage level to $\alpha=0.1$.

We report the long-run miscoverage $\mc(T)$ and inefficiency $\ineff(T)$, averaged over 50 independent trials for classification and regression tasks, respectively (Table~\ref{tab:ablation_k_classification_iid_noniid} and Table~\ref{tab:ablation_k_regression_iid_noniid}).
\begin{table}[hbt!]
  \caption{Ablation study on the level of discretization ($K$) on ImageNet}
  \label{tab:ablation_k_classification_iid_noniid}
  \centering
  \small
  \setlength{\tabcolsep}{6pt}
  \setlength{\arrayrulewidth}{0.4pt}

  % struts for nicer header spacing
  \newcommand{\Tstrut}{\rule{0pt}{2.4ex}} % top strut
  \newcommand{\Bstrut}{\rule[-1.1ex]{0pt}{0pt}} % bottom strut (optional)

  \vspace{-1em}

  \begin{tabular}{c c c c c c c c}
    \hline
    \multirow{2}{*}{Method}
      & \multicolumn{1}{|c|}{\multirow{2}{*}{Metric}}
      & \multicolumn{3}{c|}{\Tstrut i.i.d.}
      & \multicolumn{3}{c}{\Tstrut Non-i.i.d.} \\
    \cline{3-5}\cline{6-8}
    &
      \multicolumn{1}{|c|}{}
      & $\scriptstyle K{=}180$ & $\scriptstyle K{=}200$ & \multicolumn{1}{c|}{$\scriptstyle K{=}220$}
      & $\scriptstyle K{=}180$ & $\scriptstyle K{=}200$ & $\scriptstyle K{=}220$ \\
    \hline

    \multirow{2}{*}{\texttt{OCP-Bandit}}
      & \multicolumn{1}{|c|}{$\mc(T)$}
      & 0.212 & 0.214 & \multicolumn{1}{c|}{0.215}
      & 0.228 & 0.228 & 0.229 \\
      & \multicolumn{1}{|c|}{$\ineff(T)$}
      & 21.02 & 20.25 & \multicolumn{1}{c|}{19.84}
      & 33.16 & 32.77 & 32.47 \\
    \hline

    \multirow{2}{*}{\texttt{OCP-Unlock}}
      & \multicolumn{1}{|c|}{$\mc(T)$}
      & 0.209 & 0.211 & \multicolumn{1}{c|}{0.212}
      & 0.225 & 0.226 & 0.227 \\
      & \multicolumn{1}{|c|}{$\ineff(T)$}
      & 21.65 & 21.05 & \multicolumn{1}{c|}{20.45}
      & 34.54 & 33.76 & 32.86 \\
    \hline

    \multirow{2}{*}{\texttt{OCP-Unlock+}}
      & \multicolumn{1}{|c|}{$\mc(T)$}
      & 0.172 & 0.173 & \multicolumn{1}{c|}{0.176}
      & 0.199 & 0.200 & 0.201 \\
      & \multicolumn{1}{|c|}{$\ineff(T)$}
      & 31.21 & 29.85 & \multicolumn{1}{c|}{28.52}
      & 40.45 & 39.11 & 38.17 \\
    \hline
  \end{tabular}

  \vspace{-2ex}
\end{table}

\begin{table}[hbt!]
  \caption{Ablation study on the level of discretization ($K$) on Airfoil Self-Noise}
  \label{tab:ablation_k_regression_iid_noniid}
  \centering
  \small
  \setlength{\tabcolsep}{6pt}
  \setlength{\arrayrulewidth}{0.4pt}

  % struts for nicer header spacing
  \newcommand{\Tstrut}{\rule{0pt}{2.4ex}} % top strut
  \newcommand{\Bstrut}{\rule[-1.1ex]{0pt}{0pt}} % bottom strut (optional)

  \vspace{-1em}
  
  \begin{tabular}{c c c c c c c c}
    \hline
    \multirow{2}{*}{Method}
      & \multicolumn{1}{|c|}{\multirow{2}{*}{Metric}}
      & \multicolumn{3}{c|}{\Tstrut i.i.d.}
      & \multicolumn{3}{c}{\Tstrut Non-i.i.d.} \\
    \cline{3-5}\cline{6-8}
    &
      \multicolumn{1}{|c|}{}
      & $\scriptstyle K{=}10$ & $\scriptstyle K{=}20$ & \multicolumn{1}{c|}{$\scriptstyle K{=}30$}
      & $\scriptstyle K{=}10$ & $\scriptstyle K{=}20$ & $\scriptstyle K{=}30$ \\
    \hline

    \multirow{2}{*}{\texttt{OCP-Bandit}}
      & \multicolumn{1}{|c|}{$\mc(T)$}
      & 0.056 & 0.064 & \multicolumn{1}{c|}{0.070}
      & 0.071 & 0.084 & 0.092 \\
      & \multicolumn{1}{|c|}{$\ineff(T)$}
      & 13.97 & 14.06 & \multicolumn{1}{c|}{14.03}
      & 16.54 & 15.96 & 15.79 \\
    \hline

    \multirow{2}{*}{\texttt{OCP-Unlock}}
      & \multicolumn{1}{|c|}{$\mc(T)$}
      & 0.056 & 0.064 & \multicolumn{1}{c|}{0.069}
      & 0.070 & 0.083 & 0.090 \\
      & \multicolumn{1}{|c|}{$\ineff(T)$}
      & 13.71 & 13.81 & \multicolumn{1}{c|}{13.94}
      & 16.49 & 15.97 & 15.79 \\
    \hline

    \multirow{2}{*}{\texttt{OCP-Unlock+}}
      & \multicolumn{1}{|c|}{$\mc(T)$}
      & 0.050 & 0.054 & \multicolumn{1}{c|}{0.056}
      & 0.064 & 0.072 & 0.076 \\
      & \multicolumn{1}{|c|}{$\ineff(T)$}
      & 13.28 & 13.53 & \multicolumn{1}{c|}{13.60}
      & 16.37 & 15.83 & 15.74 \\
    \hline
  \end{tabular}

  \vspace{-2ex}
\end{table}

While it is theoretically expected that a longer time horizon is required to avoid undercoverage when the hypothesis space is more finely discretized, the goal of this experiment is not to maintain performance across different values of $K$, but rather to study the effect of varying $K$ while keeping the other parameters ($\alpha, T$) fixed.

\clearpage

\section{Experiment 4: Ablation Study on \texttt{OCP-Unlock+} and its variants}
\label{sec:exp:ablation-variants}

We conduct an ablation study comparing three variants---\texttt{OCP-Bandit}, \texttt{OCP-Unlock}, and \texttt{OCP-Unlock+}---in terms of
(i) miscoverage rate $\mc(t)$ and (ii) inefficiency $\ineff(t)$ over time $t\in[T]$.
In each plot, the solid curves show the average across independent runs, and the shaded regions indicate the min--max range.
Overall, \texttt{OCP-Unlock+} achieves tighter coverage under both the i.i.d.\ and non-i.i.d.\ settings.
This improvement comes with a mild cost: \texttt{OCP-Unlock+} tends to output slightly larger prediction sets, yielding moderately higher $\ineff(t)$.

For ImageNet, we set the horizon to $T=50{,}000$, the target miscoverage level to $\alpha=0.15$, and the discretization level to $K=200$.
For Airfoil, we set the horizon to $T=50{,}000$, the target miscoverage level to $\alpha=0.1$, and the discretization level to $K=20$.

\begin{figure}[H]
    \centering
    \subfigure[$\mc(t)$ (i.i.d.)]{
        \includegraphics[width=0.48\textwidth]{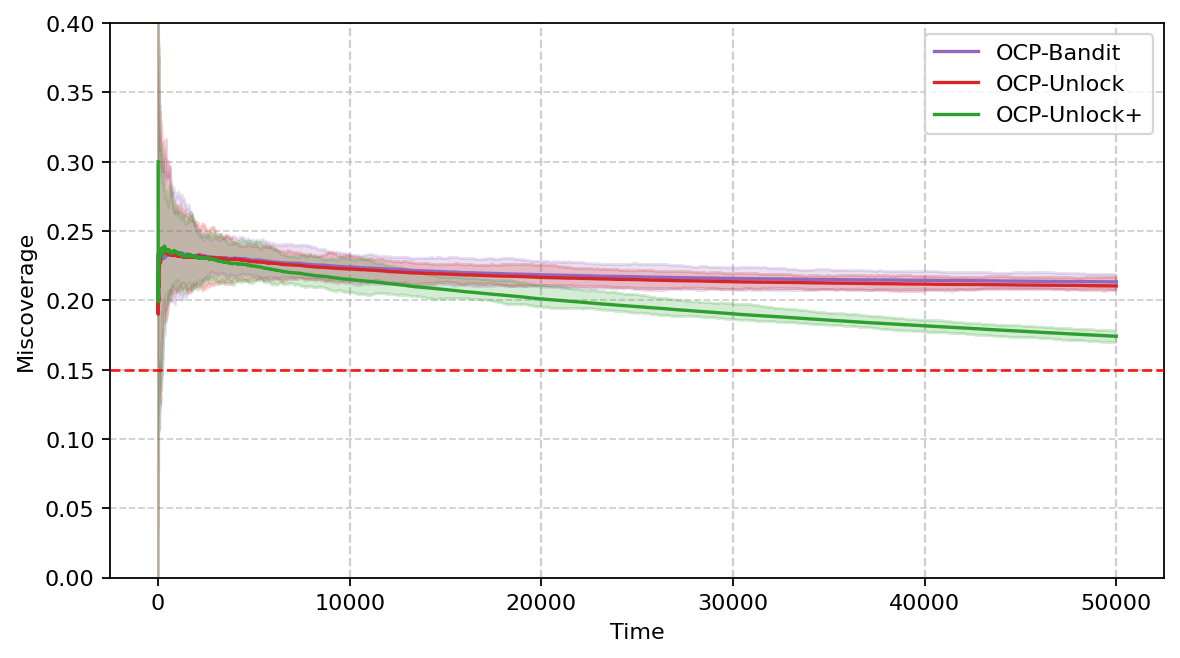}
    }
    \subfigure[$\ineff(t)$ (i.i.d.)]{
        \includegraphics[width=0.48\textwidth]{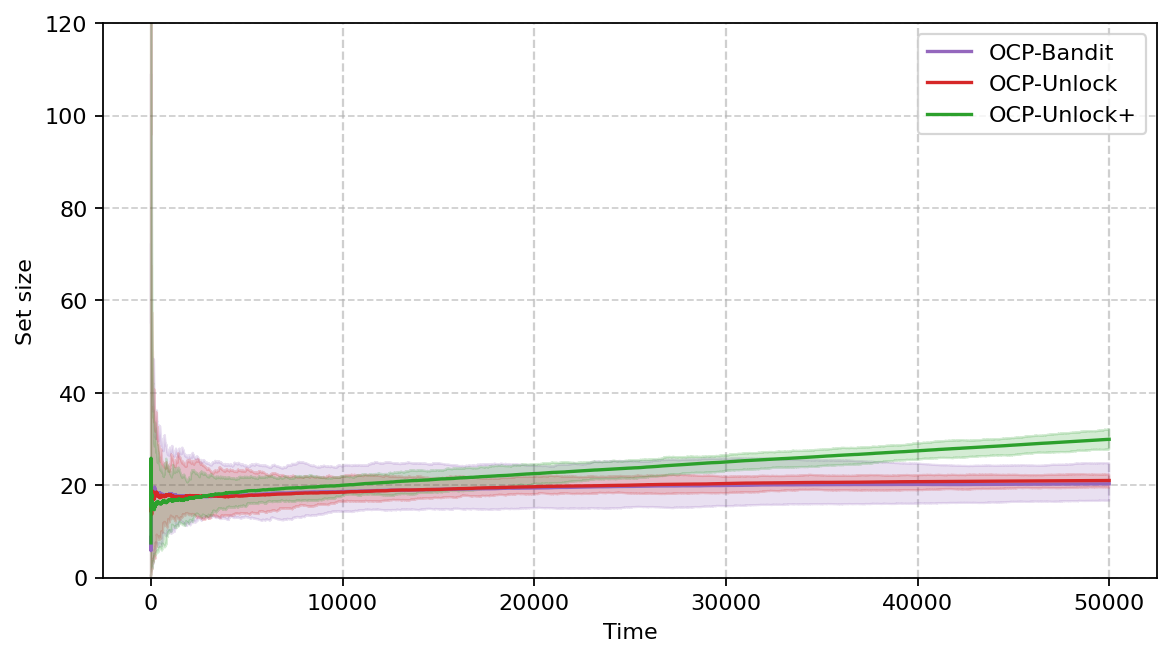}
    }

    \vspace{-1.2em}

    \subfigure[$\mc(t)$ (Non-i.i.d.)]{
        \includegraphics[width=0.48\textwidth]{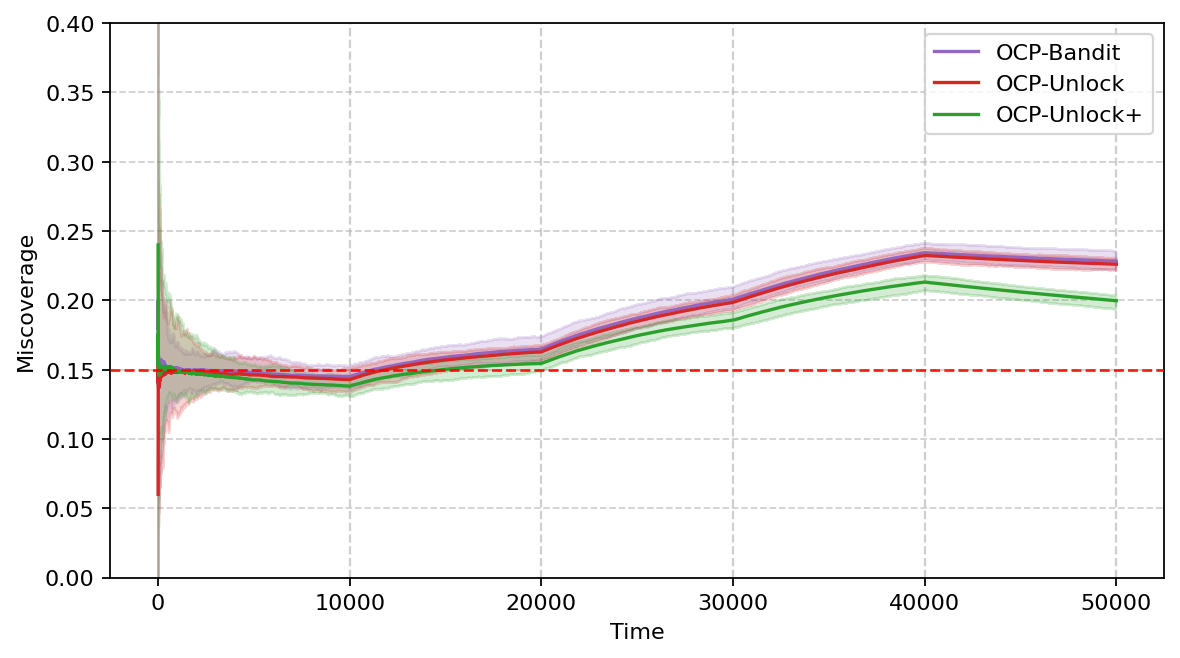}
    }
    \subfigure[$\ineff(t)$ (Non-i.i.d.)]{
        \includegraphics[width=0.48\textwidth]{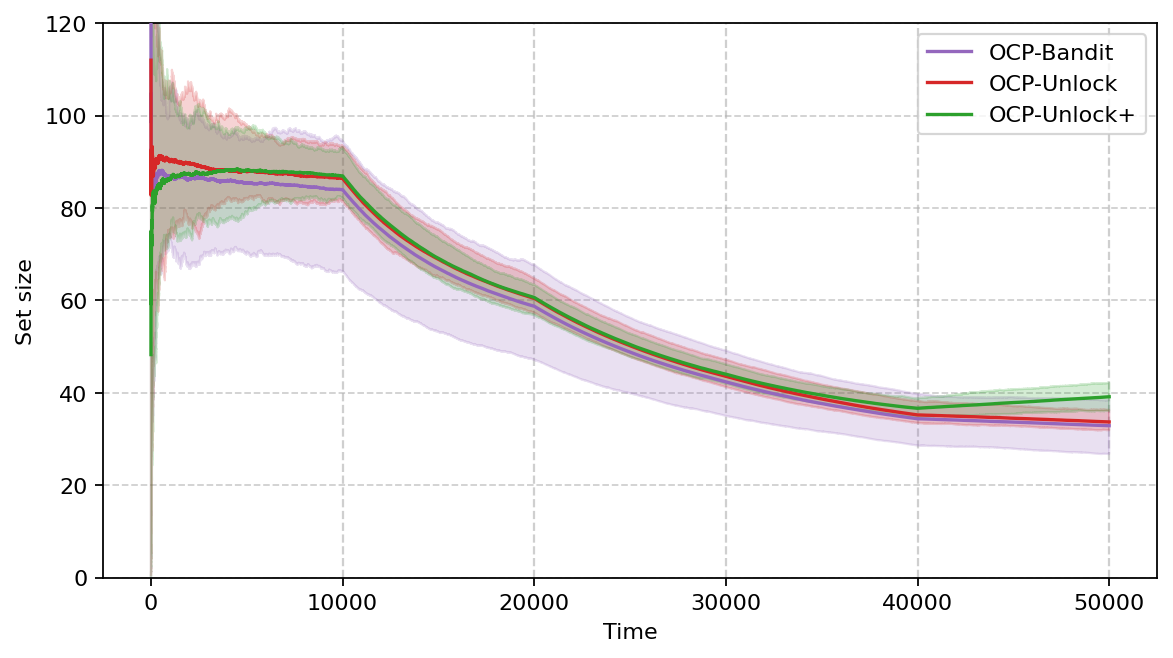}
    }

    \vspace{-1.2em}
    \caption{
    Ablation study on {ImageNet}:
    Long-run miscoverage $\mc(t)$ and inefficiency $\ineff(t)$ as a function of time $t \in [T]$,
    under the i.i.d.\ (top) and non-i.i.d.\ (bottom) settings.
    The solid curves denote the average over 50 independent runs, and the shaded regions indicate the min--max range.
    % In the non-i.i.d.\ setting, the functional form of the scoring function changes over time.
    % \texttt{OCP-Unlock+} achieves more robust coverage, particularly in the non-i.i.d.\ regime.
    % This robustness is obtained at the cost of slightly larger prediction sets, i.e., moderately higher $\ineff(t)$ compared to the other variants.
    }
    \label{fig:ablation:imagenet}
\end{figure}

\newpage

\begin{figure}[H]
    \centering
    \subfigure[$\mc(t)$ (i.i.d.)]{
        \includegraphics[width=0.48\textwidth]{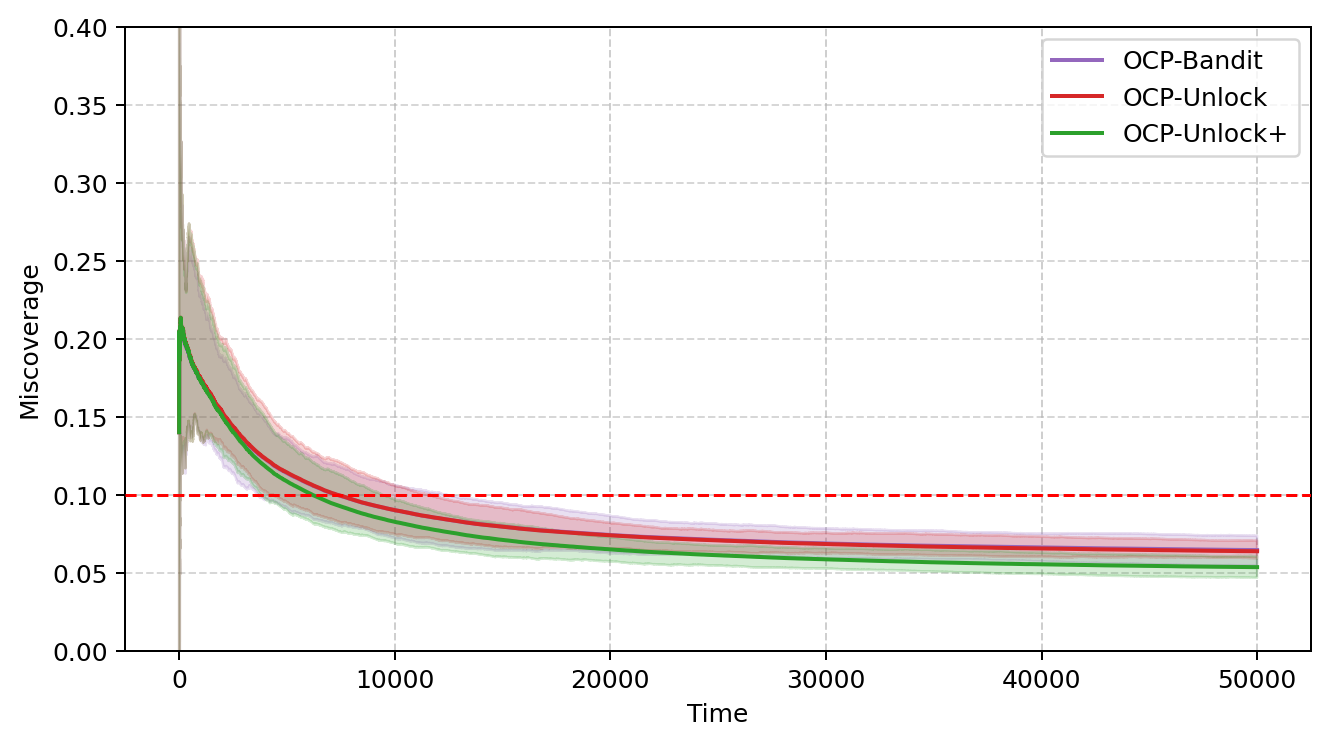}
    }
    \subfigure[$\ineff(t)$ (i.i.d.)]{
        \includegraphics[width=0.48\textwidth]{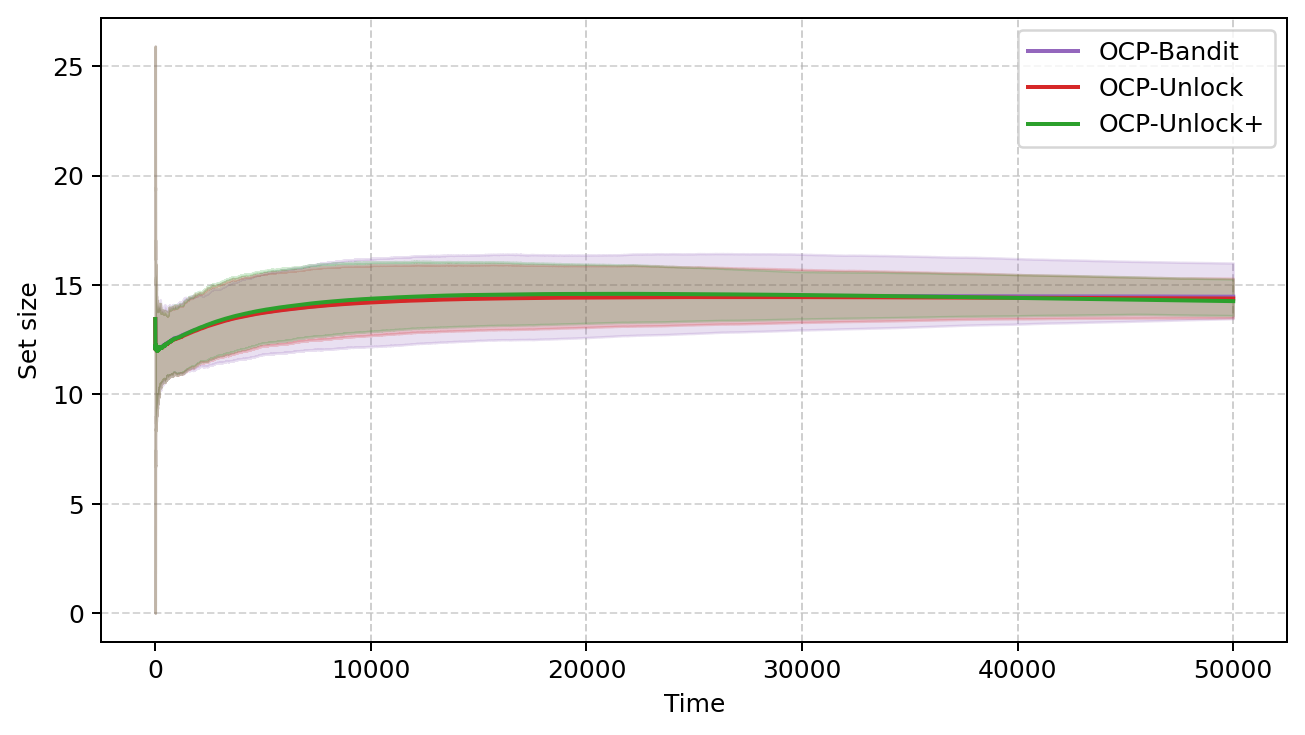}
    }

    \vspace{-1.2em}

    \subfigure[$\mc(t)$ (Covariate Shift)]{
        \includegraphics[width=0.48\textwidth]{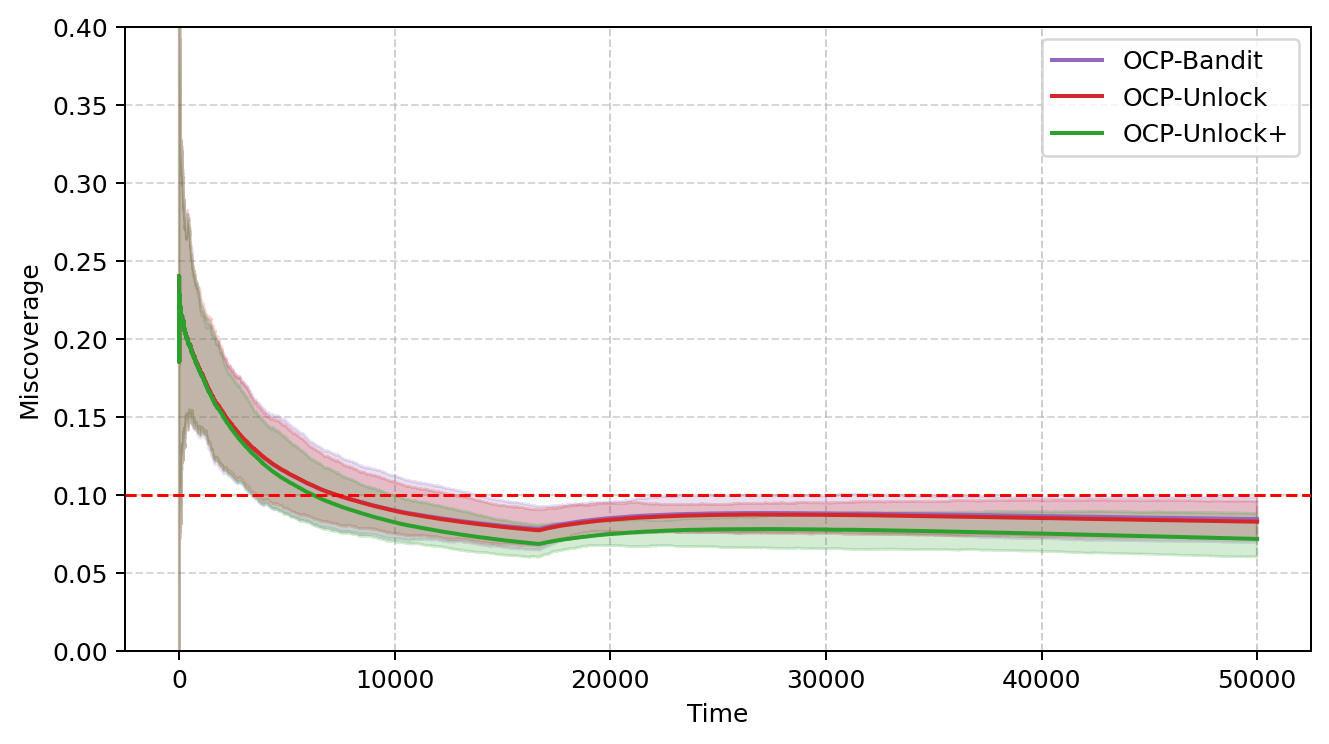}
    }
    \subfigure[$\ineff(t)$ (Covariate Shift)]{
        \includegraphics[width=0.48\textwidth]{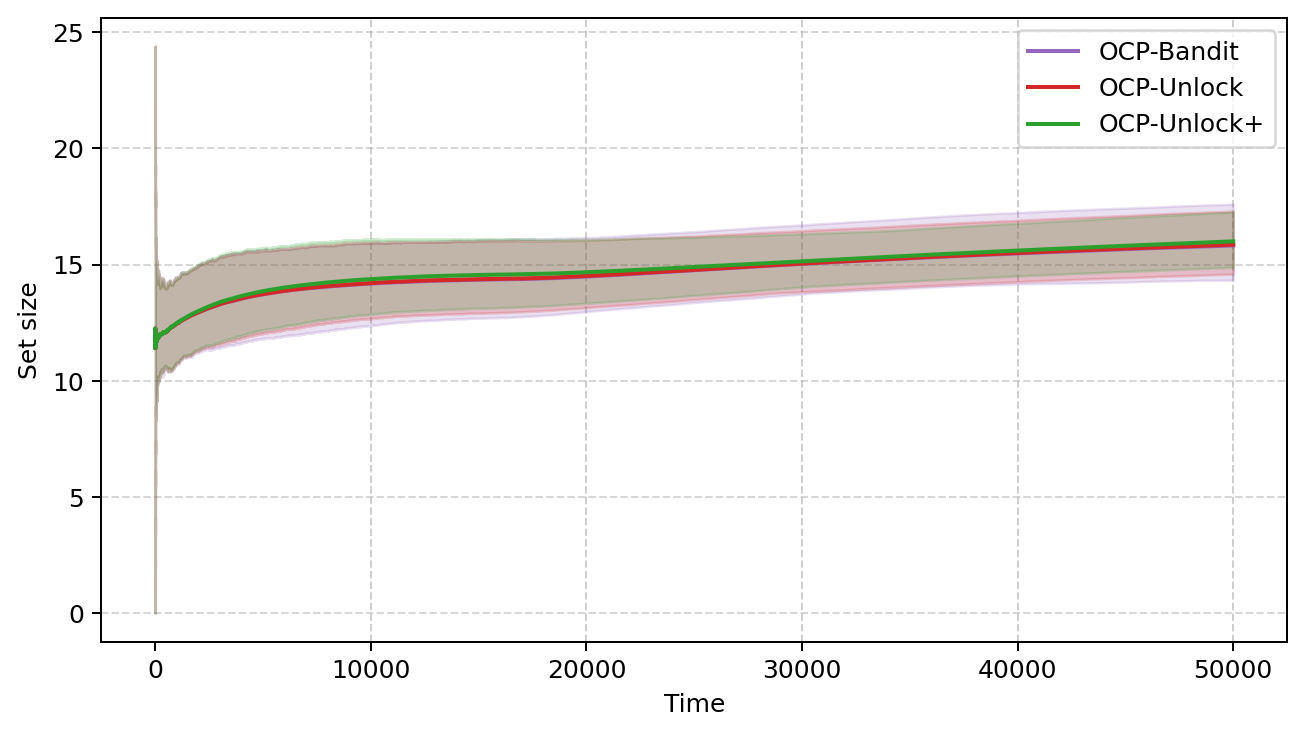}
    }

    \vspace{-1.2em}
    \caption{
    Ablation study on {Airfoil Self-Noise}:
    Long-run miscoverage $\mc(t)$ and inefficiency $\ineff(t)$ as a function of time $t \in [T]$,
    under the i.i.d. (top) and covariate shift (bottom) settings.
    The solid curve denotes the average over 50 independent runs, and the shaded region indicates the min--max range.
    % \texttt{OCP-Unlock+} more consistently meets the desired coverage over time, with the advantage being more pronounced under the covariate shift setting.
    % In exchange, it exhibits slightly higher inefficiency, reflecting a moderate increase in prediction-set size to secure improved coverage.
    }
    \label{fig:ablation:airfoil}
\end{figure}

\clearpage

\section{Experiment 5: Sensitivity of \texttt{OCP-Unlock+} to \texorpdfstring{$\alpha$}{alpha}}
\label{sec:exp:varying-alpha}

We study the sensitivity of \texttt{OCP-Unlock+} to the target miscoverage level $\alpha$.
For both ImageNet and Airfoil Self-Noise datasets, we use $\alpha \in \{0.1, 0.2, 0.3, 0.4\}$.

% For each $\alpha$, we summarize the empirical miscoverage across independent runs using box plots.
% As $\alpha$ increases, the target coverage constraint is relaxed, so the achieved miscoverage typically increases accordingly.
% In the non-i.i.d. settings, where the data-generating process or the scoring function changes over time, these box plots provide a compact view of robustness and run-to-run variability across $\alpha$.
For ImageNet, we set the horizon to $T=50{,}000$ and the discretization level to $K=200$.
For Airfoil Self-Noise, we set the horizon to $T=50{,}000$ and the discretization level to $K=20$.

\begin{figure}[H]
    \centering
    \subfigure[$\mc(T)$ (i.i.d.)]{
        \includegraphics[width=0.48\textwidth]{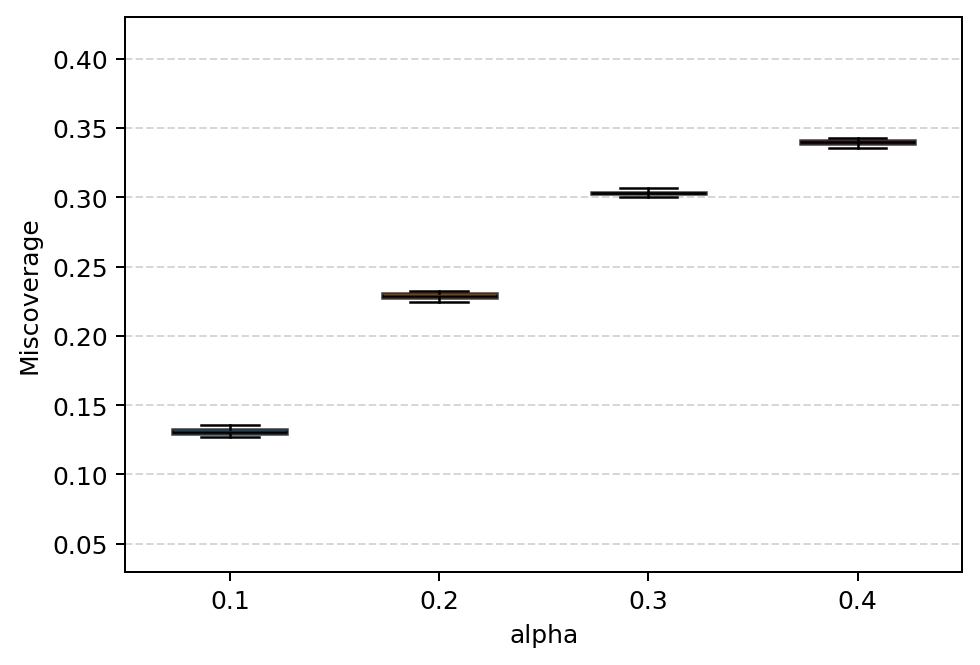}
    }
    \subfigure[$\mc(T)$ (Non-i.i.d.)]{
        \includegraphics[width=0.48\textwidth]{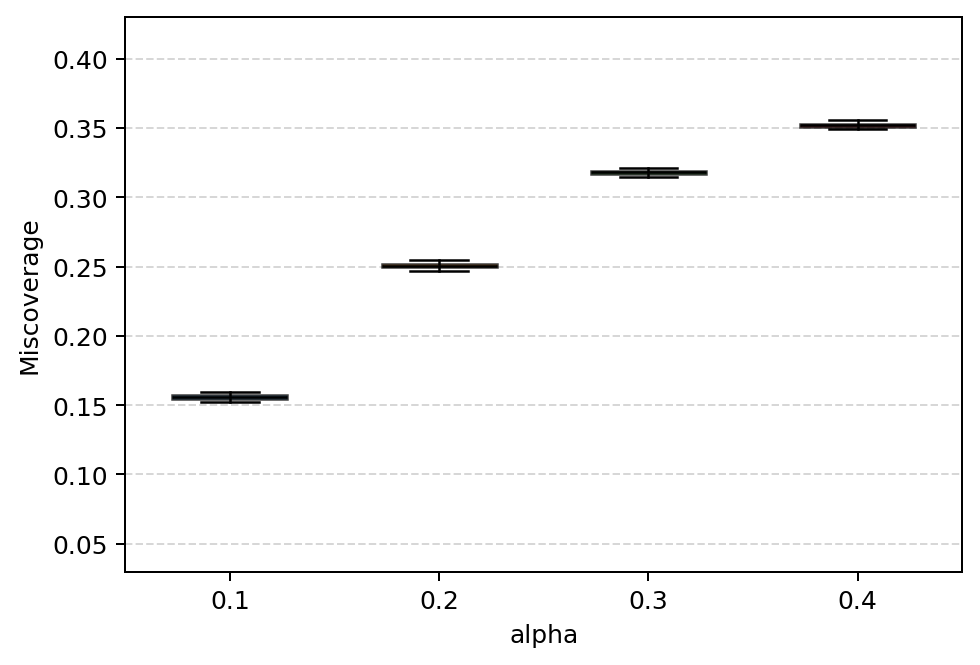}
    }

    \vspace{-1.2em}
    \caption{
    Sensitivity of \texttt{OCP-Unlock+} to $\alpha$ on ImageNet.
    Box plots summarize the long-run miscoverage $\mc(T)$ across 50 independent runs for
    $\alpha \in \{0.1, 0.2, 0.3, 0.4\}$ under the i.i.d.\ (left) and non-i.i.d.\ (right) settings.
    Each box shows the interquartile range with the median marked inside,
    while the whiskers indicate the spread of the results.
    }
    \label{fig:alpha:imagenet}
\end{figure}

\begin{figure}[H]
    \centering
    \subfigure[$\mc(T)$ (i.i.d.)]{
        \includegraphics[width=0.48\textwidth]{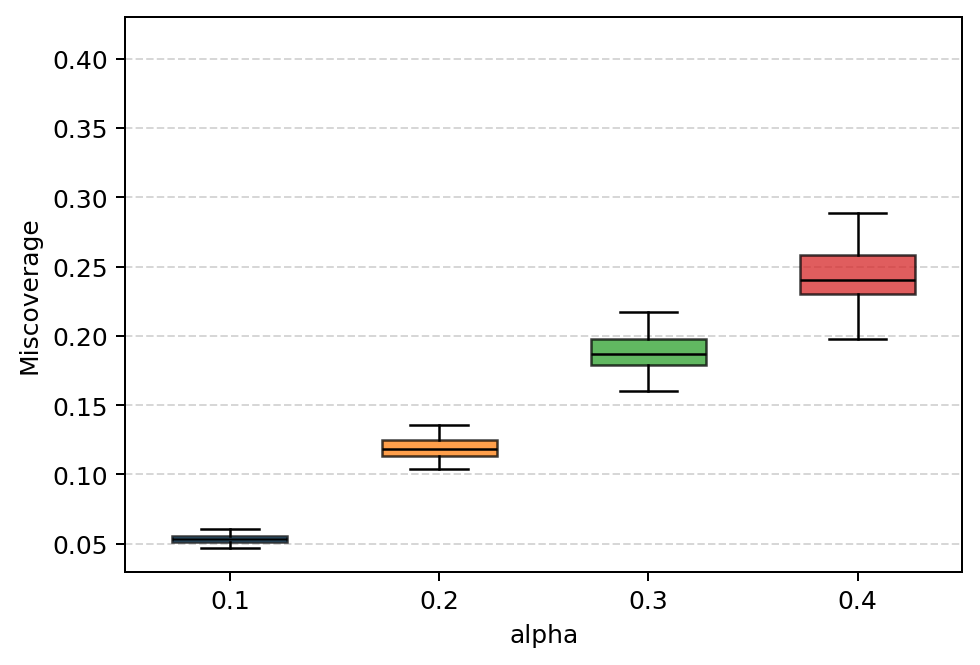}
    }
    \subfigure[$\mc(T)$ (Covariate Shift)]{
        \includegraphics[width=0.48\textwidth]{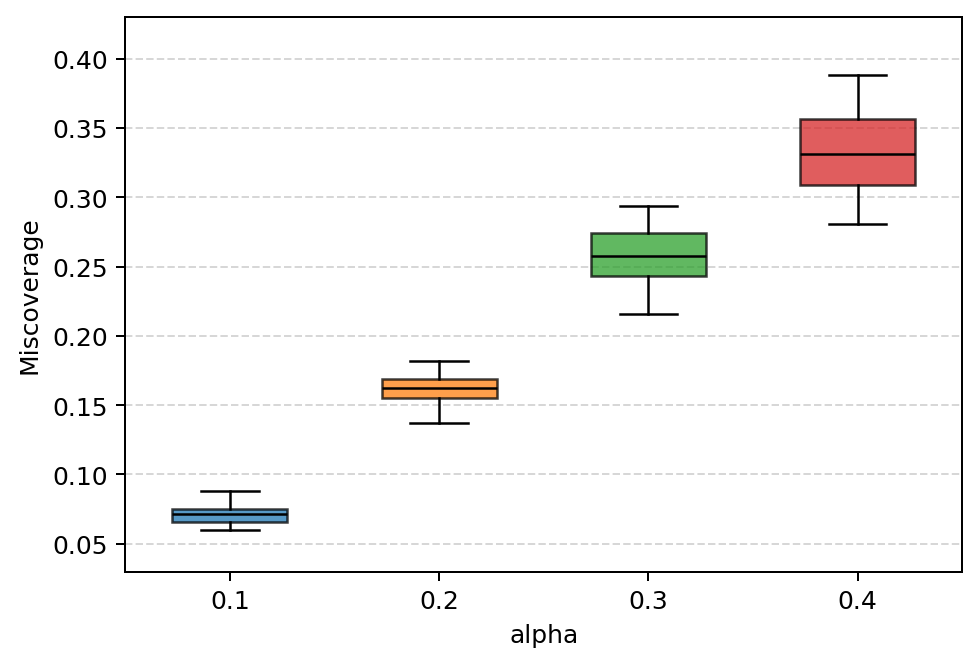}
    }

    \vspace{-1.2em}
    \caption{
    Sensitivity of \texttt{OCP-Unlock+} to $\alpha$ on Airfoil Self-Noise.
    Box plots summarize the long-run miscoverage $\mc(T)$ across 50 independent runs for
    $\alpha \in \{0.1, 0.2, 0.3, 0.4\}$ under the i.i.d.\ (left) and covariate shift (right) settings.
    Each box shows the interquartile range with the median marked inside,
    while the whiskers indicate the spread of the results.
    }
    \label{fig:alpha:airfoil}
\end{figure}

\clearpage

{\color{\cc}
\section{Experiment 6: Empirical Analysis on \texorpdfstring{$C_{\mc}(T)$}{Cmc(T)}}
\label{sec:exp:offset_analysis}
We empirically analyze the additional term $C_{\mc}(T)$ that appears in the upper bound on the deviation of the miscoverage rate $\mc(T)$ from $\alpha$ (Lemma~\ref{lemma:regret_to_coverage}) to validate the efficacy of $\texttt{OCP-Unlock+}$.

Following the same experimental setup as in \textbf{Experiment 5} (Appendix~\ref{sec:exp:varying-alpha}), we report the $C_{\mc}(T)$ term, averaged over 50 independent runs (Table~\ref{tab:offset_imagenet_iid_noniid} and Table~\ref{tab:offset_airfoil_iid_noniid}).

\begin{table*}[hbt!]
  \caption{$C_{\mc}(T)$ on ImageNet for different choices of target miscoverage level $\alpha$, under the i.i.d.\ and non-i.i.d.\ settings}
  \label{tab:offset_imagenet_iid_noniid}
  \centering
  \small
  \setlength{\tabcolsep}{7pt}
  \setlength{\arrayrulewidth}{0.4pt}

  \newcommand{\Tstrut}{\rule{0pt}{2.4ex}}

  \vspace{-1em}

  \begin{tabular}{c | c c c c | c c c c}
    \hline
    \multirow{2}{*}{$T$}
      & \multicolumn{4}{c|}{\Tstrut i.i.d.}
      & \multicolumn{4}{c}{\Tstrut Non-i.i.d.} \\
    \cline{2-5}\cline{6-9}
      & $\scriptstyle \alpha{=}0.1$ & $\scriptstyle \alpha{=}0.2$ & $\scriptstyle \alpha{=}0.3$ & \multicolumn{1}{c|}{$\scriptstyle \alpha{=}0.4$}
      & $\scriptstyle \alpha{=}0.1$ & $\scriptstyle \alpha{=}0.2$ & $\scriptstyle \alpha{=}0.3$ & $\scriptstyle \alpha{=}0.4$ \\
    \hline
    50,000
      & 0.062 & 0.161 & 0.304 & \multicolumn{1}{c|}{0.469}
      & 0.091 & 0.186 & 0.318 & 0.476 \\
    60,000
      & 0.050 & 0.149 & 0.299 & \multicolumn{1}{c|}{0.461}
      & 0.079 & 0.178 & 0.312 & 0.466 \\
    70,000
      & 0.040 & 0.139 & 0.295 & \multicolumn{1}{c|}{0.453}
      & 0.071 & 0.170 & 0.307 & 0.459 \\
    \hline
  \end{tabular}

  \vspace{-2ex}
\end{table*}

\begin{table*}[hbt!]
  \caption{$C_{\mc}(T)$ on Airfoil Self-Noise for different choices of target miscoverage level $\alpha$, under the i.i.d.\ and non-i.i.d.\ settings}
  \label{tab:offset_airfoil_iid_noniid}
  \centering
  \small
  \setlength{\tabcolsep}{7pt}
  \setlength{\arrayrulewidth}{0.4pt}

  \newcommand{\Tstrut}{\rule{0pt}{2.4ex}}

  \vspace{-1em}

  \begin{tabular}{c | c c c c | c c c c}
    \hline
    \multirow{2}{*}{$T$}
      & \multicolumn{4}{c|}{\Tstrut i.i.d.}
      & \multicolumn{4}{c}{\Tstrut Non-i.i.d.} \\
    \cline{2-5}\cline{6-9}
      & $\scriptstyle \alpha{=}0.1$ & $\scriptstyle \alpha{=}0.2$ & $\scriptstyle \alpha{=}0.3$ & \multicolumn{1}{c|}{$\scriptstyle \alpha{=}0.4$}
      & $\scriptstyle \alpha{=}0.1$ & $\scriptstyle \alpha{=}0.2$ & $\scriptstyle \alpha{=}0.3$ & $\scriptstyle \alpha{=}0.4$ \\
    \hline
    30,000
      & 0.020 & 0.131 & 0.291 & \multicolumn{1}{c|}{0.500}
      & 0.025 & 0.130 & 0.276 & 0.451 \\
    40,000
      & 0.010 & 0.115 & 0.267 & \multicolumn{1}{c|}{0.465}
      & 0.014 & 0.115 & 0.258 & 0.425 \\
    50,000
      & 0.002 & 0.099 & 0.248 & \multicolumn{1}{c|}{0.438}
      & 0.006 & 0.102 & 0.243 & 0.406 \\
    \hline
  \end{tabular}

  \vspace{-2ex}
\end{table*}

}

% \clearpage
% \input{scratch}

%%%%%%%%%%%%%%%%%%%%%%%%%%%%%%%%%%%%%%%%%%%%%%%%%%%%%%%%%%%%

\end{document}